\theoremstyle{plain}
\newtheorem{thm}{Theorem}
\newtheorem{lem}[thm]{Lemma}
\newtheorem{prop}[thm]{Proposition}
\newtheorem*{con}{Conjecture}
\def\old@comma{,}
    \old@comma\discretionary{}{}{}%
\begin{document}
\title{AlphaSeq: Sequence Discovery with Deep Reinforcement Learning}

\author{
Yulin~Shao,~\IEEEmembership{Student Member,~IEEE,}
Soung~Chang~Liew,~\IEEEmembership{Fellow,~IEEE,}
and~Taotao~Wang,~\IEEEmembership{Member,~IEEE}
\thanks{Y. Shao, S. C. Liew and T. Wang are with the Department of Information Engineering, The Chinese University of Hong Kong, Shatin, New Territories, Hong Kong (e-mail: \{sy016, soung, ttwang\}@ie.cuhk.edu.hk).}
}
\maketitle

\begin{abstract}
Sequences play an important role in many applications and systems.
Discovering sequences with desired properties has long been an interesting intellectual pursuit.
This paper puts forth a new paradigm, AlphaSeq, to discover desired sequences algorithmically using deep reinforcement learning (DRL) techniques. AlphaSeq treats the sequence discovery problem as an episodic symbol-filling game, in which a player fills symbols in the vacant positions of a sequence set sequentially during an episode of the game.
Each episode ends with a completely-filled sequence set, upon which a reward is given based on the desirability of the sequence set.
AlphaSeq models the game as a Markov Decision Process (MDP), and adapts the DRL framework of AlphaGo to solve the MDP.
Sequences discovered improve progressively as AlphaSeq, starting as a novice, learns to become an expert game player through many episodes of game playing.
Compared with traditional sequence construction by mathematical tools, AlphaSeq is particularly suitable for problems with complex objectives intractable to mathematical analysis.
We demonstrate the searching capabilities of AlphaSeq in two applications:
1) AlphaSeq successfully rediscovers a set of ideal complementary codes that can zero-force all potential interferences in multi-carrier CDMA systems.
2) AlphaSeq discovers new sequences that triple the signal-to-interference ratio -- benchmarked against the well-known Legendre sequence -- of a mismatched filter estimator in pulse compression radar systems.
\end{abstract}

\begin{IEEEkeywords}
Deep reinforcement learning, deep neural network, monte-carlo tree search, AlphaGo, multi-carrier CDMA, pulse compression radar.
\end{IEEEkeywords}

\section{Introduction}
A sequence is a list of elements arranged in a certain order.
Prime numbers arranged in ascending order, for example, is a sequence \cite{OEIS}.
The arrangements of nucleic acids in DNA polynucleotide chains are also sequences \cite{DNASeq}.

Discovering sequences with desired properties is an intellectual pursuit with important applications \cite{OEIS}.
In particular, sequences are critical components in many information systems.
For example, cellular code division multiple access (CDMA) systems make use of spread spectrum sequences to distinguish signals from different users \cite{CDMA1992};
pulse compression radar systems make use of probe pulses modulated by phase-coded sequences \cite{RadarBook} to enable high-resolution detection of objects at a large distance.

Sequences in information systems are commonly designed by algebraists and information theorists using mathematical tools such as finite filed theory, algebraic number theory, and character theory.
However, the design criterion for a good sequence may be complex and cannot be put into a clean mathematical expression for solution by the available mathematical tools.
Faced with this problem, sequence designers may do two things:
1) Overlook the practical criterion and simplify the requirements to make the problems analytically tractable. In so doing, a disconnect between reality and theory may be created.
2) Introduce additional but artificial constraints absent in the original practical problem. In this case, the analytical solution is only valid for a subset of sequences of interest.
For example, the protocol sequences in \cite{CRT2018} are constructed by means of the Chinese Remainder Theorem (CRT) \cite{NumberTheoryBook}; hence, the number of supported users is restricted to a prime number.

Yet a third approach is to find the desired sequences algorithmically.
This approach rids us of the confines imposed by analytical mathematical tools.
On the other hand, the issue becomes whether good sequences can be found within a reasonable time by algorithms.
Certainly, to the extent that desired sequences can be found by a random search algorithm within a reasonable time, then the problem is solved.
Most desired sequences, however, cannot be found so easily and algorithms with complexity polynomial in the length of the sequences are not available.

\begin{figure}[t]
  \centering
  \includegraphics[width=0.6\columnwidth]{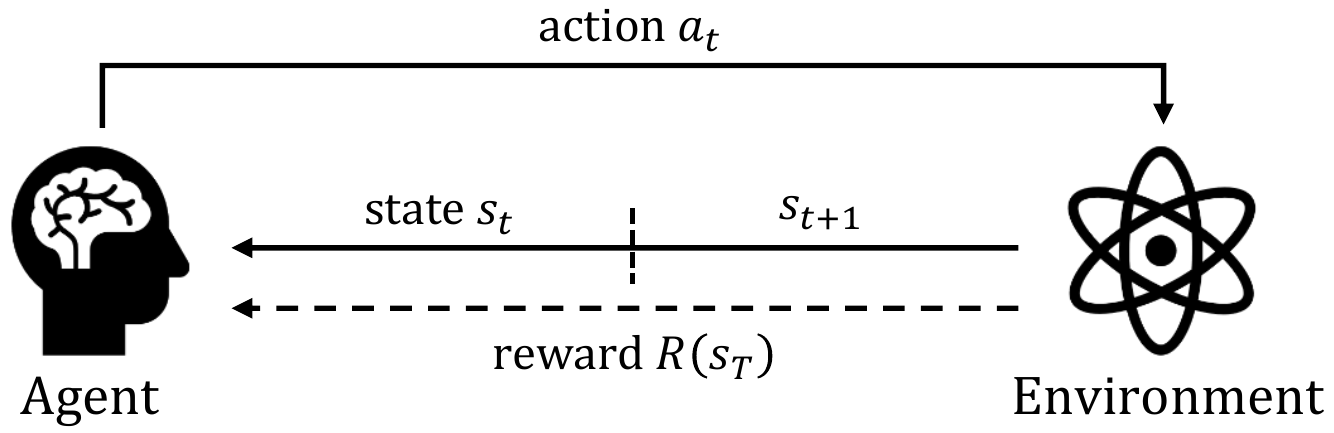}\\
  \caption{In episodic reinforcement learning, the agent-environment interactions are broken into sessions called episodes. Each episode starts anew from an initial state $s_0$. The agent takes actions in successive discrete time steps $t=0,1,2,...,T-1$,  resulting in the state of the environment traversing through states $s_0,s_1,s_2,...,$ until a terminal state $s_T$ is reached, whereupon a reward $R(s_T)$ is given. The next episode begins independently of how the previous episode ended \cite{RLbook}.}
\label{Intro_RL}
\end{figure}

Reinforcement Learning (RL) is an important branch of machine learning \cite{RLbook} known for its ability to derive solutions for Markov Decision Processes (MDPs) \cite{MDPBook} through a learning process.
A salient feature of RL is ``learning from interactions''.
Fig. \ref{Intro_RL} illustrates the framework of episodic RL\footnote{In a general RL setup, a reward $R_{t+1}$ can be given at each time step to measure the quality of action $a_t$ in state $s_t$. The agent's objective at state $s_t$ is then to maximize the accumulated future reward $\sum_{i} \gamma^{i-1}R_{t+i}$, where $\gamma$ is a discount factor. The RL framework in Fig. \ref{Intro_RL}, by contrast, restricts the reward to only terminal state $s_T$, and sets $\gamma=1$. In other words, the goal at each state is now to maximize the end-of-episode reward $s_T$. As a matter of fact, for our problem, we do not have an exact measure of the quality of the sequence until all the elements of the are fixed, hence the use of the delayed reward set-up.}.
In the framework, an agent interacts with an environment in a sequence of discrete time steps $t=0,1,...,T-1$.
At time step $t$, the agent observes that the environment is in state $s_t$. Based on the observation of $s_t$, the agent then takes an action $a_t$, which results in the environment moving to state $s_{t+1}$ in time step $t+1$. The environment will feedback a reward $R(s_T)$ to the agent at the terminal state $s_T$, i.e., the end of one episode.

The mapping from $s_t$ to $a_t$ is referred to as a policy function. The aim of the policy is generally to maximize the expected reward received at the end of the episode.
This policy function could be deterministic, in which case a specific action $a_t$ is always taken upon a given state $s_t$.
The policy could also be probabilistic, in which case the action taken upon a given state is described by a conditional probability $P(a_t\mid s_t)$.
The objective of the agent is to learn an expected-reward maximizing policy after going through multiple episodes.\footnote{RL shares the same mathematical principle as that of dynamic programming (DP). To learn the optimal policy, RL algorithms typically contain two interacting processes: policy evaluation and policy improvement. We refer the reader to the exposition in \cite{RLbook}, in which Section 4.1 explains how policy evaluation predicts the state-value function for an arbitrary policy, and Section 4.2 explains how policy improvement improves the policy with respect to the current state-value function. Overall, these two processes interact with each other as a generalized policy iteration (Section 4.6), enabling the convergence to the optimal value function and an optimal policy.}.
The agent may begin with bad policies early on, but as it gathers experiences from successive episodes, the policy gets better and better.

The latest trend in RL research is to integrate the recent advances of deep learning \cite{NatureDL} into the RL framework \cite{Atari2015}, \cite{Go2016}, \cite{DRLsurvey}.
RL that makes use of deep neural networks (DNNs) to approximate the optimal policy function -- directly or indirectly -- is referred to as deep reinforcement learning (DRL).
DRL allows RL algorithms to be applied when the number of possible state-action pairs is enormous and that traditional function approximators cannot approximate the policy function accurately.
The recent success of DRL in game playing, natural language processing, and autonomous vehicle steering (see the excellent survey in \cite{DRLsurvey}) have demonstrated its power in solving complex problems that thwart conventional approaches.

This paper puts forth a DRL-based paradigm, referred to as AlphaSeq, to discover a set of sequences with desired properties algorithmically. The essence of AlphaSeq is as follows:
\begin{itemize}
\item AlphaSeq treats sequence-set discovery -- a sequence set consists of one or more sequences -- as an episodic symbol-filling game. In each episode of the game, AlphaSeq fills symbols into vacant sequence positions in a consecutive manner until the sequence set is completely filled, whereupon a reward with value between $-1$ and $1$ is returned. The reward is a nonlinear function of a metric that quantifies the desirability of the sequence set. AlphaSeq aims to maximize the reward. It learns to do by playing many episodes of the game, improving itself along the way.
\item AlphaSeq treats each intermediate state with some sequence positions filled and others vacant as an image. Each position is a pixel of the image. Given an input image (state), AlphaSeq makes use of a DNN to approximate the optimal policy that maximizes the reward. AlphaSeq uses a DRL framework similar to that of AlphaGo \cite{GoZero2017}, in which DNN-guided MCTS (Monte-Carlo Tree Search \cite{MCTS}) is used to select each move in the game. As in AlphaGo, there is an iterative self-learning process in AlphaSeq in that the experiences from the DNN-guided MCTS game playing are used to train the DNN; and the trained DNN in turn improves future game playing by the DNN-guided MCTS.
\item We introduce two techniques in AlphaSeq that are absent in AlphaGo for our applications to search sequences. The first technique is to allow AlphaSeq to make $\ell$ moves at a time (i.e., filling $\ell$ sequence positions at a time). Obviously, this technique is not applicable to the game of Go, hence AlphaGo. The choice of $\ell$ is a complexity tradeoff between the MCTS and the DNN. The second technique, dubbed ``segmented induction'', is to change the reward function progressively to guide AlphaSeq toward good sequences in its learning process. In essence, we set a low target for AlphaSeq initially so that many sequence sets can have rewards close to $1$, with few having rewards close to $-1$. As AlphaSeq plays more and more episodes of the game, we progressively raise the target so that fewer and fewer sequence sets have rewards close to $1$, with more having rewards close to $-1$. In other words, the game becomes more and more demanding as AlphaSeq, starting as a novice, learns to become an expert player.
\end{itemize}

We demonstrate the capability of AlphaSeq to discover two types of sequences:
\begin{enumerate}
\item We use AlphaSeq to rediscover a set of complementary codes for multi-carrier CDMA systems. In this application, AlphaSeq aims to discover a sequence set for which potential interferences in the multi-carrier CDMA system can be cancelled by simple signal processing. This particular problem already has analytical solutions. Our goal here is to test if AlphaSeq can rediscover these analytical solutions algorithmically rather than analytically.

\item We use AlphaSeq to discover new phase-coded sequences superior to the known sequences for pulse compression radar systems. Specifically, our goal is to find phase-coded sequences commensurate with the mismatched filter (MMF) estimator so that the estimator can yield output with high signal-to-interference ratio (SIR). The optimal sequences for MMF are not known and there is currently no known sequence that are provably optimal when the sequence is large. Benchmarked against the Legendre sequence \cite{Legendre}, the sequence discovered by AlphaSeq triples the SIR, achieving $5.23$ dB mean square error (MSE) gains for the estimation of radar cross sections in pulse compression radar systems.
\end{enumerate}
The remainder of this paper is organized as follows.
Section \ref{sec:methodology} formulates the sequence discovery problem and outlines the DRL framework of AlphaSeq.
Section \ref{sec:MCCDMA} and \ref{sec:PCR} present the applications of AlphaSeq in multi-carrier CDMA systems and pulse compression radar systems, respectively.
Section \ref{sec:Conclusion} concludes this paper.
Throughout the paper, lowercase bold letters denote vectors and uppercase bold letters denote matrices.

\section{Methodology}\label{sec:methodology}

\subsection{Problem Formulation}
We consider the problem of discovering a sequence set $\mathcal{C}$, the desirability of which is quantified by a metric $\mathcal{M(C)}$.
Set $\mathcal{C}$ consists of $K$ different sequences of the same length $N$, i.e., $\{\bm{c}_k:~k=0,1,...,K-1\}$, where the $k$-th sequence is given by $\bm{c}_k=(c_k[0],c_k[1],...,c_k[N-1])$.
Each symbol of the sequences in  $\mathcal{C}$ (i.e., $c_k[n]$) is drawn from a discrete set $\mathcal{A}$.
Without loss of generality, this paper focuses on binary sequences.
That is, $\mathcal{A}$ is two-valued, and we can simply denote these two values by $1$ and $-1$.
The metric function $\mathcal{M(C)}$ varies with application scenarios.
It is generally a function of all $K$ sequences in $\mathcal{C}$.
The optimal metric value $\mathcal{M}^*$ (i.e., the desired metric value) is achieved when $\mathcal{C}=\mathcal{C}^*$.
Our objective is to find an optimal sequence set $\mathcal{C}^*$ that yields $\mathcal{M}^*$.
For binary sequences, the complexity of exhaustive search for $\mathcal{C}^*$ is $\mathcal{O}(2^{NK})$, which is prohibitive for large $N$ and $K$.

This sequence discovery problem can be transformed into a MDP.
Specifically, we treat sequence-set discovery as a symbol-filling game.
One play of the game is one episode, and each episode contains a series of time steps. In each episode, the player (agent) starts from an all-zero state (i.e., all the symbols in the set are $0$), and takes one action per time step based on its current action policy.
In each time step, $\ell$ symbols in the sequence set are assigned with the value of $1$ or $-1$, replacing the original $0$ value.
We emphasize that the player can only determine the values of the $\ell$ symbols, but not their positions.
The $\ell$ positions are predetermined: a simple rule is to place symbols sequence by sequence (specifically, we first place symbols in one sequence. When this sequence is completed-filled, we turn to fill the next sequence, and so on so forth. This rule will be used throughout the paper unless specified otherwise).
An episode ends at a terminal state after $\lceil NK/\ell \rceil$ time steps, whereupon a complete set $\mathcal{C}$ is obtained.
In the terminal state, we measure the goodness of $\mathcal{C}$ by $\mathcal{M(C)}$, and return a reward $\mathcal{R(C)}$ for this episode to the player, where $\mathcal{R(C)}$ is in general a nonlinear function of $\mathcal{M(C)}$.
It is the player's objective to learn a policy that makes sequential decisions to maximizes the reward, as more and more games are played.

\subsection{Methodology}
Given the MDP, a tree can be constructed by all possible states in the game.
In particular, the root vertex is the all-zero state, and each vertex of the tree corresponds to a possible state, i.e., a partially-filled sequence-set pattern (completely-filled at a terminal state).
The depth of the tree equals the number of time steps in an episode (i.e., $\lceil NK/\ell \rceil$), and each vertex has exactly $2^\ell$ branches. In each episode, the player will start from the root vertex, and make sequential decisions along the tree based on its current policy until reaching a leaf vertex, whereupon a reward will be obtained. Given any vertex $v_i$ and an action, the next vertex $v_{i+1}$ is conditionally independent of all previous vertices and actions, i.e., the transitions on the tree satisfy the Markov property.

The objective of the player is then to reach a leaf vertex with the maximum reward. Toward this objective, the player performs the following:
\begin{enumerate}
\item Distinguishing good states from bad states -{}-
A reward is given to the player only upon its reaching a terminal stage.
While traversing the intermediate stage, the player must distinguish good intermediate states from bad intermediate states so that it can navigate toward a good terminal stage. In particular, the player must learn to approximate the expected end rewards of intermediate states: this is in fact a process of value function approximation (In RL, the value of a state refers to the expected reward of being in that state, and a value function is a mapping from states to values. For terminal states, the value function is exactly the reward function.). Moreover, we can imagine each state to be an image with each symbol being a pixel, and make use of a DNN to approximate the expected rewards of the ``images''.
\item Improving action policy based on cognition of subsequent states.
Starting as a tabula rasa, the player's initial policy in earlier episodes is rather random.
To gradually improve the action policy, the player can leverage the instrument of MCTS.
MCTS is a simulated look-ahead tree search.
At a vertex, MCTS can estimate the prospects of subsequent vertices by simulating multiple actions along the tree.
The information collected during the simulations can then be used to decide the real action to be taken at this vertex\footnote{The main concept in MCTS is tree policy. It determines how we sample the tree and select nodes. For a general overview on the core algorithms and variations, we refer the reader to the excellent survey \cite{MCTS} (in particular, the most popular algorithm in the MCTS family, the Upper Confidence Bound for Trees (UCT), is introduced in Section 3.3 of \cite{MCTS}). Ref. \cite{MCTSproof} provides a more rigorous proof of the optimality of UCT. The authors showed that the probability that the UCT selects the optimal action converges to $1$ at a polynomial rate.}.
\end{enumerate}

A successful combination of DNN and MCTS has been demonstrated in AlphaGo \cite{Go2016}, \cite{GoZero2017}, \cite{AlphaZero2017}, where the authors use DNN to assess the vertices during the MCTS simulation, as opposed to using random rollouts in standard MCTS\footnote{More details on the standard MCTS can be found in \cite{MCTS}. Throughout the paper, when we refer to MCTS, we mean the DNN-guided MCTS rather than the standard MCTS.}.
In this paper, we adapt the DRL framework in AlphaGo\footnote{AlphaGo itself is evolving, the DRL framework in this paper is based on AlphaGo Zero \cite{GoZero2017} and AlphaZero \cite{AlphaZero2017}.} to solve the sequence set discovery problem associated with the underlying MDP. In deference to AlphaGo, we refer to this sequence discovering framework as ``AlphaSeq''.

\begin{figure}[t]
  \centering
  \includegraphics[width=0.6\columnwidth]{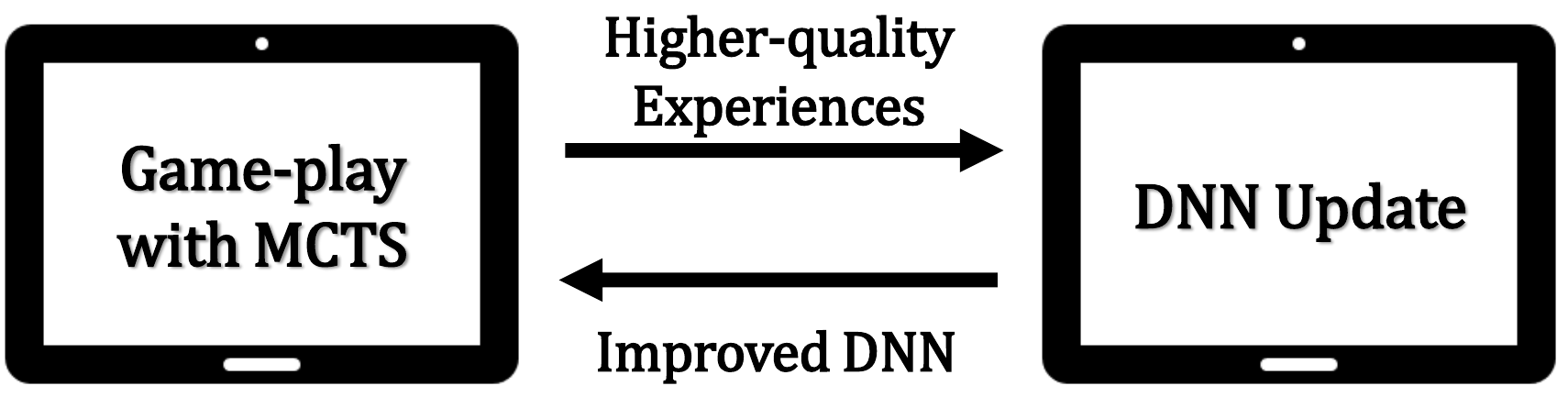}\\
  \caption{The iterative algorithmic framework of AlphaGo/AlphaSeq. Improved DNN promotes the MCTS so that ``game-play'' generates experiences with higher quality; higher quality experiences can further enhance the DNN.}
\label{SecII_framework}
\end{figure}
The overall algorithmic framework of AlphaGo/AlphaSeq can be outlined as an iterative ``game-play with MCTS'' and ``DNN-update'' process, as shown in Fig. \ref{SecII_framework}.
On the one hand, ``game-play with MCTS'' provides experiences to train the DNN so that the DNN can improves its assessments of the goodness of the states in the game.
On the other hand, better evaluation on the states by the DNN allows the MCTS to make better decisions, which in turn provide higher quality experiences to train the DNN.
Through an iterative process, the MCTS and the DNN mutually enhances each other in a progressive manner over an underlying reinforcement learning process.

In what follows, we dissect these two components and describe the relationship between them with more details. Differences between AlphaSeq and AlphaGo are presented at the end of this section. Further implementation details can be found in Appendix \ref{sec:AppA}.

\textbf{Input and output of DNN} -{}- The DNN is designed to estimate the value function and policy function of an intermediate state\footnote{The DNN will only evaluate intermediate states, but not the terminal states. For terminal states, the value function is known to the player (i.e., the reward function), and there is no policy.}.
The value function is the estimated expected terminal reward given the intermediate state.
Specifically, the output of DNN can be expressed as $(\bm{P}, \mathcal{R}')=\psi_{\theta}(s_i)$: each time we feed an intermediate state $s_i$ into the DNN $\psi$ with coefficients $\theta$, it will output a reward estimation $\mathcal{R}'$ (value function estimation) and a probabilistic move-selection policy $\bm{P}$ (policy function estimation, policy $\bm{P}$ is a distribution over all possible next moves given the current state $s_i$).

\begin{figure}[t]
  \centering
  \includegraphics[width=0.6\columnwidth]{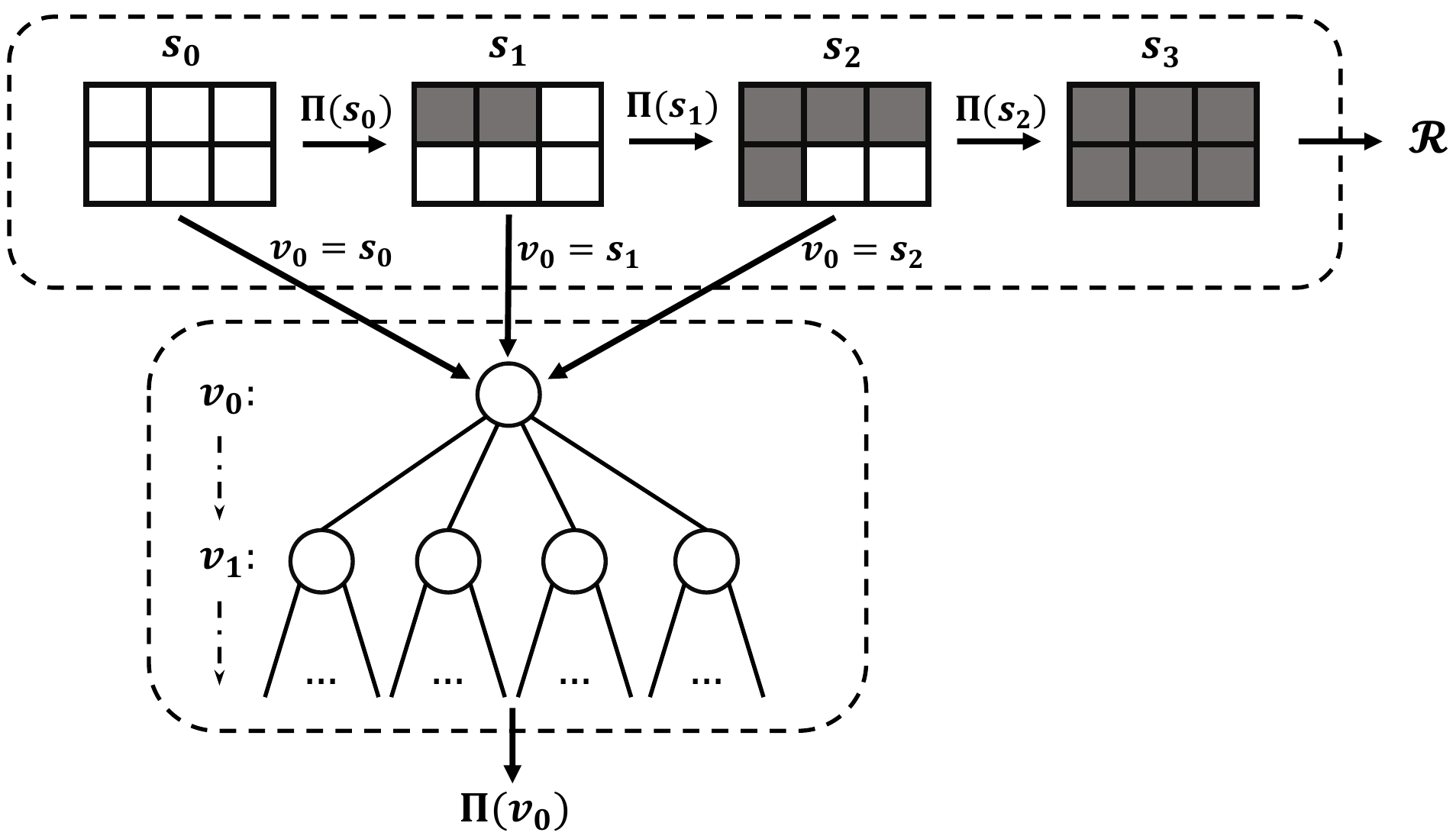}\\
  \caption{An episode of game, where $K=2$, $N=3$, and $\ell=2$. The $NK$ positions are represented by the coloured squares: grey means that the positions are filled while white means that the positions are vacant. At each time step, following the output $\bm{\Pi}$ of MCTS, the player fills $\ell$ positions with value $1$ or $-1$.}
\label{SecII_MCTS}
\end{figure}

\textbf{Game-play with MCTS} -{}- The first part of the algorithm iteration in Fig. \ref{SecII_framework} is game-play with MCTS.
As illustrated in Fig. \ref{SecII_MCTS}, we play the game under the guidance of MCTS.
The upper half of Fig. \ref{SecII_MCTS} presents all the states in an episode, where squares represent the positions in the sequence set: grey squares mean that the position has already been filled (with value $1$ or $-1$); white squares mean that the position is still vacant (with value $0$). The initial state of each episode is an all-zero state $s_0$.
In state $s_i$, the player will follow a probabilistic policy $\bm{\Pi}(s_i)$ (not the raw policy output $\bm{P}$ of DNN) to choose $\ell$ symbols to fill in the next $\ell$ positions in the sequence set. This action yields a new state $s_{i+1}$.
The policy $\bm{\Pi}(s_i)$ is a distribution over the $2^\ell$ possible moves, and is given by MCTS.

The bottom half of Fig. \ref{SecII_MCTS} shows the MCTS process at each state $s_i$, where each circle (vertex) represents a possible state in the look-ahead search. In the MCTS for state $s_i$, we first set the root vertex $v_0$ to be $s_i$, and initialize a ``visited tree'' (this visited tree is used to record all the vertices visited in the MCTS. It is initialized to have only one root vertex). Look-ahead simulations are then performed along the visited tree starting at the root vertex.
Each simulation traces out a path of the visited tree, and terminates when an unseen vertex $v_L$ is encountered.
This unseen vertex will then be evaluated by DNN and added to the visited tree (i.e., a newly added vertex $v_L$ will be given the metric as $\psi_{\theta}(v_L)=(\bm{P}_L, \mathcal{R}'_L)$ to aid future simulations in evaluating which next move to select if the same vertex $v_L$ is visited again).
As more and more simulations are performed, the tree grows in size. The metric used in selecting next move for the vertices will also change (i.e., equations \eqref{EquAppA2} and \eqref{EquAppA3} in Appendix \ref{sec:AppA}) as the vertices are visited more and more in successive simulations. In a nutshell, estimated good vertices are visited frequently, while estimated bad vertices are visited rarely.
The resulting move-selection distribution at state $s_i$, i.e., $\bm{\Pi}(s_i)=(\pi_0,\pi_1,...,\pi_{2^\ell-1})$, is generated from the visiting counts of the root vertex's children in MCTS at states $s_i$ (i.e., equation \eqref{EquAppA4}).

Back to the upper part of Fig. \ref{SecII_MCTS}, after $\lceil NK/\ell \rceil$ time steps, the player obtains a complete sequence set $\mathcal{C}$  with metric value $\mathcal{M}(\mathcal{C})$ that gives a reward $\mathcal{R}(\mathcal{C})$. Then, we feed the $\mathcal{R}(\mathcal{C})$ to each state $s_i$ in this episode and store $(s_i,\bm{\Pi}(s_i),\mathcal{R})$ as an experience. One episode of game-play gives us $\lceil NK/\ell \rceil$ experiences.

\textbf{DNN update} -{}- The second part of the algorithm iteration in Fig. \ref{SecII_framework} is the training of the DNN based on the accumulated experiences over successive episodes.
First, from the description above, we know that MCTS is guided by DNN.
The capability of DNN determines the performance of MCTS since a better DNN yields more accurate evaluation of the vertices in MCTS.
In the extreme, if the DNN perfectly knows which sequence-set patterns are good and which are bad, then the MCTS will always head toward an optimal direction, hence the chosen moves are also optimal.
However, the fact is, DNN is randomly initialized, and its evaluation on vertices are quiet random and inaccurate initially.
Thus, our goal is to improve this DNN using the experiences generated from game-play with MCTS.

In the process of DNN update, the DNN is updated by learning the latest experiences accumulated in the game-play.
Given experience $(s_i,\bm{\Pi}(s_i),\mathcal{R})$ and $\psi_{\theta}(s_i)=(\bm{P}, \mathcal{R}')$,
1) the real reward $\mathcal{R}$ can be used to improve the value-function approximation $\mathcal{R}'$ of DNN;
2) the policy $\bm{\Pi}(s_i)$ given by MCTS at state $s_i$ can be used to improve the policy estimation $\bm{P}(s_i)$ of DNN\footnote{The policy $\bm{\Pi}(s_i)$ generated by MCTS is more powerful than the raw output $\bm{P}(s_i)$ of DNN \cite{GoZero2017}. Thus, $\bm{\Pi}(s_i)$ can be used to improve $\bm{P}(s_i)$.}.
Thus, the training process is to make $\bm{P}$ and $\mathcal{R}'$ more closely match $\bm{\Pi}$ and $\mathcal{R}$.

\vspace{1mm}
\noindent{\it \textbf{Remark:}
When we play games with MCTS to generate experiences, Dirichlet noise is added to the prior probability of root node $v_0$ to induce exploration, as that in AlphaGo \cite{GoZero2017}. These games are also called noisy games. Instead of noisy games, we can also play noiseless games in which Dirichlet noise is removed. Following the practice of AlphaGo, we play noisy games to generate the training experiences, but play noiseless games to evaluate the performance of AlphaSeq whose MCTS is guided by a particular trained DNN.
}
\vspace{1mm}

Overall, in one iteration, we
(i) play $G$ episodes of noisy games with $\psi_{\theta}$-guided MCTS to generate experiences, where $\psi_{\theta}$ is the current DNN;
(ii) use experiences gathered in the latest $z\times G$ episodes of games to train for a new DNN $\psi_{\theta'}$; (iii) assess the new DNN $\psi_{\theta'}$ by running $50$ noiseless games with $\psi_{\theta'}$-guided MCTS.
In the next iteration, we generate further experiences by playing $G$ episodes of noisy games with $\psi_{\theta'}$-guided MCTS.
Then these experiences are further used to train for yet another new DNN and so on and so forth.
The pseudocode for AlphaSeq is given in Table \ref{Table_algo}.

\begin{table}[t]
\centering
\caption{}
\label{Table_algo}
\includegraphics[scale=0.5]{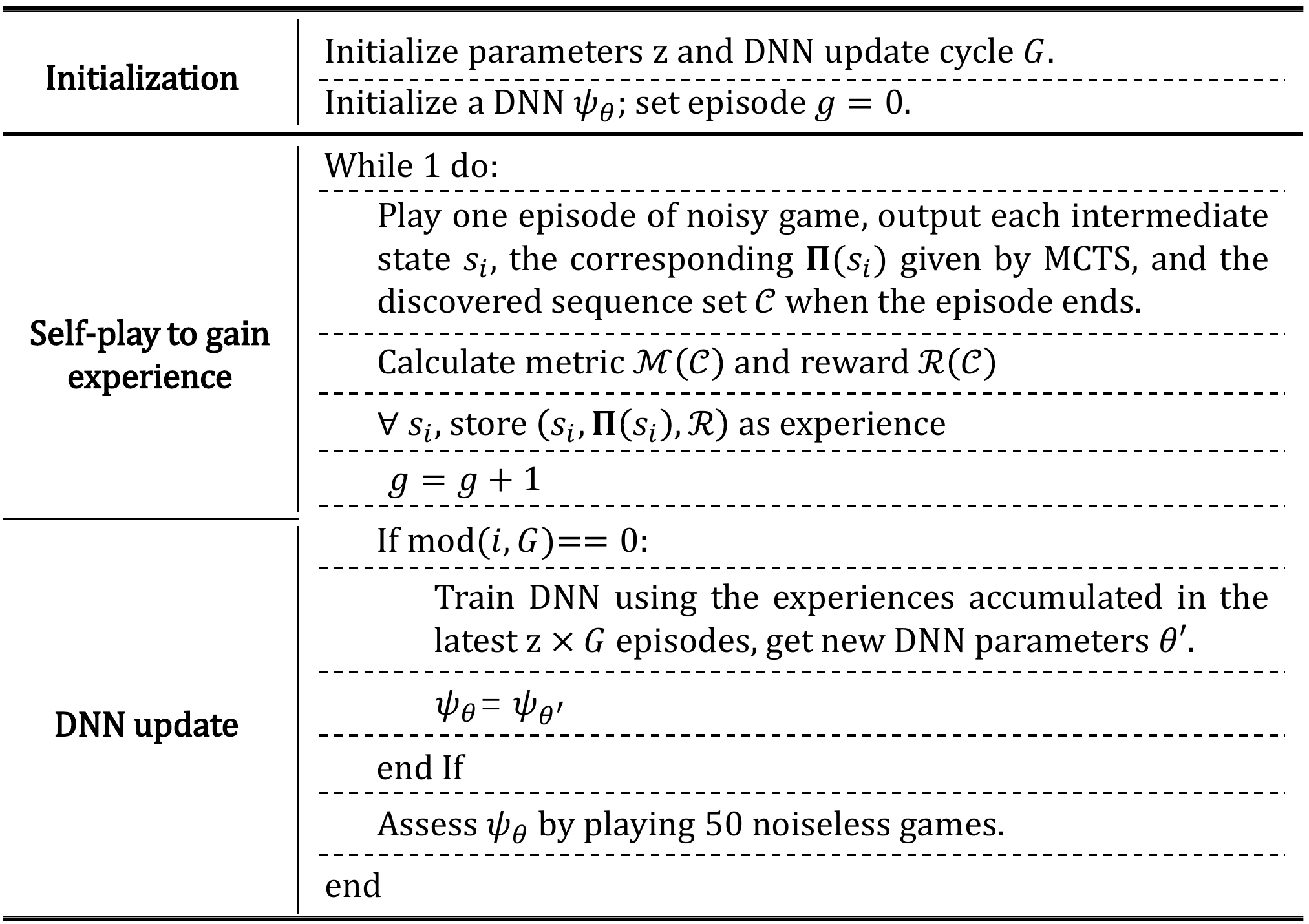}
\end{table}

In the following, we highlight some differences between AlphaSeq and AlphaGo.
\begin{itemize}
\item In AlphaGo, the total number of legal states is $\mathcal{O}(3^{NK})$ (in Go, $N=K=19$; each position can be occupied by no stones, a white stone, or a black stone). If we allow AlphaSeq to fill symbol positions in arbitrary order, then the complexity would be the same as AlphaGo in terms of the parameters $N$ and $K$. However, for AlphaSeq, we impose the order in which symbol positions are filled to reduce complexity. Now, the number of legal states reduces to\footnote{This maneuver is only applicable in our problem, but not in AlphaGo owing to the rule of the Go game.}
\begin{eqnarray}\label{EquA1}
\sum_{t=0}^{\lceil NK/\ell \rceil} 2^{t\ell} = \frac{2^{NK+\ell}-1}{2^\ell - 1},
\end{eqnarray}
that is, the state at the beginning of time step $t$ has $2^{tl}$ possible values. We found that imposing this restriction, while reducing complexity substantially, does not compromise the optimality of the sequence found.
\item In AlphaSeq, the choice of $\ell$ is a complexity tradeoff between MCTS and DNN; in AlphaGo, $\ell$ is always $1$.
As mentioned above, the universe of all states in the game forms a tree.
The depth of the tree is $\lceil NK/\ell \rceil$, which is the number of steps in Fig. \ref{SecII_MCTS} from left to right. This is exactly the number of MCTS we need to run in an episode. Thus, the larger the $\ell$, the fewer the MCTS we need to run. On the other hand, large $\ell$ yields more legal moves (i.e., $2^\ell$) in each state, hence burdening the DNN with a larger action space.
Overall, given $N$ and $K$, for small $\ell$, for example $\ell=1$, the mission of DNN is light since it only needs to determine to place $1$ or $-1$ in the next position. However, the number of MCTS we need to run in an episode is up to $NK$. In contrast, for large $\ell$, for example $\ell=K$, the number of MCTS we need to run in an episode is reduced to $N$, but the DNN is burdened with a heavier task because it needs to evaluate $2^K$ possible moves for each state.

\item In the game of Go, the board is invariant to rotation and reflection. Thus, we should augment the training data to let DNN learn these features. Specifically, in AlphaGo Zero, each experience (board state and move distribution) can be transformed by rotation and reflection to obtain extra training data, and the state in an experience is randomly transformed before the experience is fed to the DNN \cite{GoZero2017}. On the other hand, in our game, no rotation or reflection is required because all positions are predetermined. Any rotated or reflected state is an illegal state.

\item Compared with AlphaGo, our computational power is rather limited. Thus, for large sequence set beyond our computational power, a new technique, dubbed ¡°segmented induction¡±, is devised to progressively discover better sequence set. We exhibit in Section \ref{sec:PCR} that segmented induction performs well when applied to AlphaSeq.
\end{itemize}

In the following sections, we will demonstrate the searching capabilities of AlphaSeq in two applications:
in Section \ref{sec:MCCDMA}, we use AlphaSeq to rediscover an ideal complementary code set for multi-carrier CDMA systems;
in Section \ref{sec:PCR}, we use AlphaSeq to discover a new phase-coded sequence for pulse compression radar systems.

\section{Rediscover Ideal Complementary Code for Multi-Carrier CDMA}\label{sec:MCCDMA}
Code division multiple access (CDMA) is a multiple-access technique that enables numerous users to communicate in the same frequency band simultaneously \cite{CDMA1992}.
The fundamental principle of CDMA communications is to distinguish different users (or channels) by unique codes pre-assigned to them \cite{CDMA1991}. Thus, CDMA code design lies at the heart of CDMA technology.

\subsection{Codes in Legacy CDMA Systems}
Existing cellular CDMA systems work on a one-code-per-user basis \cite{CDMA1992}, \cite{NextCDMABook}. That is, the code set is designed such that exactly one code is assigned to each user, e.g., the orthogonal variable spreading factor (OVSF) code set used in W-CDMA downlink, the m-sequence set used in CDMA2000 uplink, and the Gold sequence set used in W-CDMA uplink \cite{OVSF}, \cite{CodeProceeding}.
However, legacy CDMA systems are self-jamming systems since their code sets cannot guarantee user orthogonality under practical constraints and considerations, such as user asynchronies, multipath effects, and random signs of consecutive bits of user data streams \cite{MCCDMA2001}\footnote{In CDMA, ``bit'' refers to the baseband modulated information symbols (only BPSK/QPSK modulated symbols are considered in this paper, in general it can be shown that the codes discussed in this section are applicable for higher-order modulations), while ``chip'' refers to the entries in the spread spectrum code. Thus, with respect to the nomenclature in Section \ref{sec:methodology}, ``chips'' in CDMA corresponds to ``symbol'' of a code sequence in Section \ref{sec:methodology}.}.

\begin{figure}[t]
  \centering
  \includegraphics[width=0.6\columnwidth]{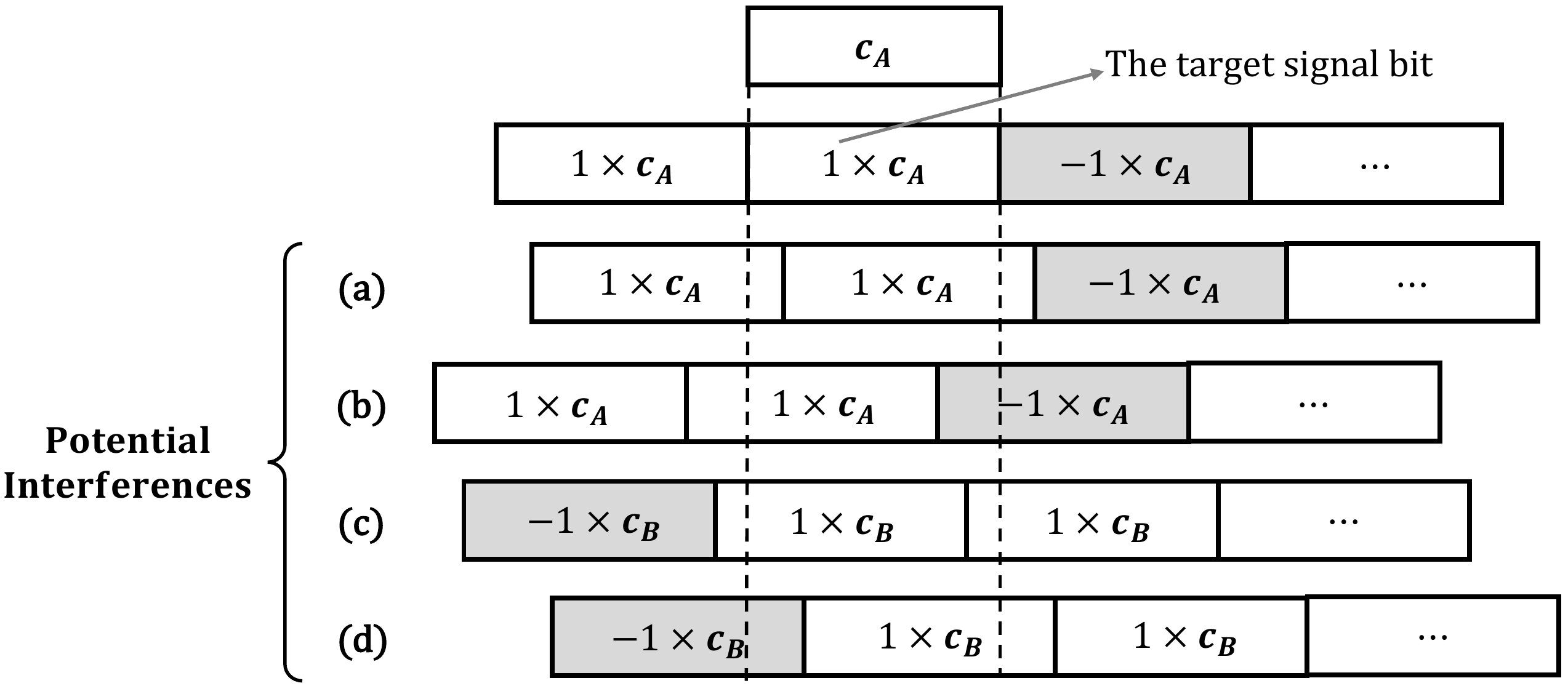}\\
  \caption{The interferences caused by user asynchronies (misalignments of bit boundaries), multi-paths, and random signs of consecutive bits, in CDMA uplink. To decode user A's data, the receiver correlates the received signal with code $\bm{c_A}$. Interferences are induced by (a) cyclic auto-correlation of $\bm{c_A}$; (b) flipped auto-correlation of $\bm{c_A}$; (c) cyclic cross-correlation between $\bm{c_A}$ and $\bm{c_B}$; (d) flipped cross-correlation between $\bm{c_A}$ and $\bm{c_B}$.}
\label{SecIII_Interference}
\end{figure}

In CDMA uplink, each user spreads its signal bits by modulating the assigned code, and the signals from multiple users overlap at the receiver.
To decode a user A's signal bit, as shown in Fig. \ref{SecIII_Interference}, the receiver cross-correlates the received signal with the locally generated code of user A.
However, due to user asynchronies, multi-paths, and random signs in consecutive bits, the correlation results can suffer from interferences introduced by multiple paths of user A's signal or signal from another user B.
The potential interferences can be computed by the correlations between the signal bit and two overlapping interfering bits:
when the signs of the two interfering bits are the same, the interferences are cyclic correlation functions (i.e., (a) and (c) in Fig. \ref{SecIII_Interference});
when the signs of the two interfering bits are different, the interferences are flipped correlation functions (i.e., (b) and (d) in Fig. \ref{SecIII_Interference}).
On the other hand, CDMA downlink is a synchronous CDMA system and there are no asynchronies among signals of different users. However, multi-path and random signs in consecutive bits can still cause interferences through the above correlations among codes.

Mathematically, it has been proven that the ideal one-code-per-user code set that simultaneously zero-forces the above correlation functions does not exist \cite{Welch}.
Code sets used in legacy CDMA systems trade-off among these correlation functions. For example, the m-sequence set has nearly ideal cyclic auto-correlation property (to be exact, the auto-correlation function of the m-sequence is $-1$ for any non-zero shift, hence is ``nearly'' optimal), while its cyclic cross-correlation and flipped correlation functions are unbounded.
The Gold sequence set and the Kasami sequence set (candidate in W-CDMA) have better cyclic cross-correlation properties and acceptable cyclic auto-correlation properties, but their flipped correlations are unbounded (see the excellent survey \cite{CodeProceeding} on the correlation functions of these sequences).

\subsection{Multi-Carrier CDMA and Ideal Complementary Codes}
The limitations of legacy CDMA systems motivate researchers to develop multi-carrier CDMA (MC-CDMA) systems where complementary codes can be used to simultaneously null all correlation functions among codes that may cause interferences \cite{MCCDMA2001}, \cite{MCCDMA2008}.

The basic idea of complementary codes is to assign a flock of $M$ element codes to each user, as opposed to just one code in legacy CDMA systems.
In MC-CDMA uplink, the signal bits of a user are spread by each of its $M$ element codes and sent over $M$ different subcarriers.
When passing through the channel, the $M$ subcarriers can be viewed as $M$ separate virtual channels that have the same delay. The receiver first de-spreads the received signal in each individual subcarrier (i.e., correlate the received signal in each sub-carrier with the corresponding element code), and sums up the de-spreading outcomes of all $M$ subcarriers. In other words, the operations in each individual channel are the same as legacy CDMA systems: the new step is the summing of the outputs of the $M$ virtual channels, which cancels out the interferences induced by individual correlations in the underlying subcarriers.

To be specific, let us consider a MC-CDMA system with $J$ users, where a flock of $M$ element codes of length $N$ are assigned to each user.
An ideal complementary code set $\mathcal{C}=\big\{c^m_j[n]:j=0,1,...,J\!-\!1;m=0,1,...,M\!-\!1;n=0,1,...,N\!-\!1\big\}$ that can enable interference-free MC-CDMA systems is a code set that meets the following criteria simultaneously:

\begin{enumerate}
\item Ideal cyclic auto-correlation function (CAF): for the $M$ element codes assigned to a user $j$, i.e., $\left\{\bm{c}^m_j:m=0,1,...,M-1 \right\}$, the sum of the cyclic auto-correlation function of each code is zero for any non-zero shift:
\begin{eqnarray}\label{EquIII1}
\textup{CAF}_j[v]=\sum_{m=0}^{M-1}\sum_{n=0}^{N-1}c^m_j[n]c^m_j[n+v]=0,
\end{eqnarray}
where delay (chip-level) $v=1,2,..,N-1$. Hereinafter, the index additions in the square brackets refer to modulo-$N$ additions.

\item Ideal cyclic cross-correlation function (CCF): for two flocks of codes assigned to users $j_1$ and $j_2$, i.e., $\left\{\bm{c}^m_{j_1},\bm{c}^m_{j_2}:m=0,1,...,M-1 \right\}$, the sum of their cyclic cross-correlation functions is always zero irrespective of the relative shift:
\begin{eqnarray}\label{EquIII2}
\textup{CCF}_{j_1,j_2}[v]=\sum_{m=0}^{M-1}\sum_{n=0}^{N-1}c^m_{j_1}[n]c^m_{j_2}[n+v]=0,
\end{eqnarray}
where delay $v=0,1,2,..,N-1$ and $j_1\neq j_2$.

\item Ideal flipped correlation function (FCF): for two flocks of codes assigned to users $j_1$ and $j_2$, i.e., $\left\{\bm{c}^m_{j_1},\bm{c}^m_{j_2}:m=0,1,...,M-1 \right\}$, the sum of their flipped correlation functions is always zero for any non-zero shift (flipped correlation is only defined for non-zero delay):
\begin{eqnarray}\label{EquIII3}
\textup{FCF}_{j_1,j_2}[v]=\sum_{m=0}^{M-1}\left\{ \sum_{n=0}^{N-v-1}c^m_{j_1}[n]c^m_{j_2}[n+v] - \sum_{n=N-v}^{N-1}c^m_{j_1}[n]c^m_{j_2}[n+v]  \right\} =0,
\end{eqnarray}
where delay $v=1,2,..,N-1$; $j_1$ and $j_2$ can be the same (flipped auto-correlation function) or different (flipped cross-correlation function).
\end{enumerate}

Some known mathematical constructions of ideal complementary codes are available in \cite{NextCDMABook}. In this section, we make use of AlphaSeq to rediscover a set of ideal complementary codes. Our aim is to investigate and evaluate the searching capability of AlphaSeq: i.e., whether it can rediscover an ideal complementary code set and how it goes about doing so. Further, we would like to investigate the impact of the hyper parameters used in the search algorithm on the overall performance of AlphaSeq, so as to obtain useful insights for discovering other unknown sequences (e.g., in Section \ref{sec:PCR}, we will make use of AlphaSeq to discover phase-coded sequences for pulse compression radar systems)

\subsection{AlphaSeq for MC-CDMA}
In this subsection, we use AlphaSeq to rediscover an ideal complementary code set for MC-CDMA systems.
As stated above, the ideal complementary code set is the code set that fulfills the three criteria in \eqref{EquIII1}, \eqref{EquIII2}, and \eqref{EquIII3}. In this context, given a sequence set $\mathcal{C}$, we define the following metric function to measure how good set $\mathcal{C}$ is for MC-CDMA systems.

\vspace{1mm}
\noindent\textbf{Metric Function:} For a sequence set $\mathcal{C}=\big\{c^m_j[n]:j=0,1,...,J\!-\!1;m=0,1,...,M\!-\!1;n=0,1,...,N\!-\!1\big\}$ consisting of $MJ$ sequences of the same length $N$, the metric function $\mathcal{M(C)}$ below reflects how good $\mathcal{C}$ is for MC-CDMA systems:
\begin{eqnarray}\label{MCCDMAA_Metric}
\mathcal{M}(\mathcal{C})=\sum_{j=0}^{J-1}\sum_{v=1}^{N-1}\big|\textup{CAF}_j[v]\big|
+\sum_{j_1=0}^{J-1}\sum_{j_2=j_1+1}^{J-1}\sum_{v=0}^{N-1}\big|\textup{CCF}_{j_1,j_2}[v] \big|
+\sum_{j_1=0}^{J-1}\sum_{j_2=j_1}^{J-1}\sum_{v=1}^{N-1}\big|\textup{FCF}_{j_1,j_2}[v]\big|,
\end{eqnarray}
Note that our desired metric value $\mathcal{M}^*=\inf \mathcal{M}(\mathcal{C})=0$.
For AlphaSeq, the objective is then to discover the sequence set that minimizes this metric function.

As an essential part of the training paradigm in AlphaSeq, a reward function is needed to map a found sequence set $\mathcal{C}$ to a reward $\mathcal{R}(\mathcal{C})$.
In general, we could design this reward function to be a linear (or non-linear) mapping from the value range of the metric function to the interval $[-1,1]$.
This is in fact a normalization process to fit general objectives to the architecture of AlphaSeq (specifically, normalizing the rewards of different problems allow these problems to share the same underlying hyper parameters in DNN and MCTS of the AlphaSeq architecture).
To rediscover the ideal complementary code, we define the reward function as follows:

\vspace{1mm}
\noindent\textbf{Reward Function:} For any sequence set $\mathcal{C}$ with metric $\mathcal{M}(\mathcal{C})$, the reward $\mathcal{R}(\mathcal{C})$ for MC-CDMA systems is defined as
\begin{equation}\label{MCCDMAA_Reward}
\mathcal{R(C)}=\left\{
\begin{array}{ccl}
1-\frac{2\mathcal{M}(\mathcal{C})}{\mathcal{M}_u}   &   \!\!\!\!\!\!\!\!   & {\textup{If}~0\leq\mathcal{M}(\mathcal{C})\leq\mathcal{M}_u,} \\
-1,                    &    \!\!\!\!\!\!\!\!  & {\textup{If}~\mathcal{M}(\mathcal{C}) > \mathcal{M}_u,}
\end{array} \right.
\end{equation}
where $\mathcal{M}_u$ is some sort of a worst-case $\mathcal{M}(\mathcal{C})$.
That is, when $\mathcal{M}(\mathcal{C})=\mathcal{M}_u$, then $\mathcal{R}(\mathcal{C})=-1$; and when $\mathcal{M}(\mathcal{C})=0$, then $\mathcal{R}(\mathcal{C})=1$.
We initially set $\mathcal{M}_u=\max_{\mathcal{C}}\mathcal{M}(\mathcal{C})$\footnote{See Appendix \ref{sec:AppB} for the derivation of $\max_{\mathcal{C}}\mathcal{M}(\mathcal{C})$.}, and initialize the DNN to $\psi_{\theta_0}$ (i.e., the parameters in the DNN is randomly set to $\theta_0$) to play $50$ noiseless games.
Then, $\mathcal{M}_u$ is set as the mean metric of the $50$ sequences found by these $50$ noiseless games, i.e., $\mathcal{M}_u=E[\mathcal{M}]$.
After this, $\mathcal{M}_u$ will not be changed anymore in future games.
We specify that the initial games do not find good sequences, but nevertheless the $50$ sequences yield an $E[\mathcal{M}]$ much lower than $\max_{\mathcal{C}}\mathcal{M}(\mathcal{C})$.
Using $E[\mathcal{M}]$ as $\mathcal{M}_u$ increases the slope of the first line in \eqref{MCCDMAA_Reward}.

Based on the metric function and reward function defined above, we implemented AlphaSeq and trained DNN to rediscover an ideal complementary code for MC-CDMA. A known ideal complementary code \cite{NextCDMABook} is chosen as benchmark.

\vspace{1mm}
\noindent\textbf{Benchmark:}
When $J=2$, $M=2$, and $N=8$, the ideal complementary code set exists. The mathematical constructions in \cite{NextCDMABook} gives us
\begin{equation}\label{SecIII_Bench}
\mathcal{C}_\textup{bench}=
\left(
\begin{array}{ccl}
\begin{bmatrix}
\begin{smallmatrix}
+1    & +1    & +1    & -1    & +1    & +1    & -1    & +1    \\[0.2cm]
+1    & -1    & +1    & +1    & +1    & -1    & -1    & -1
\end{smallmatrix}
\end{bmatrix}\\[2mm]
\begin{bmatrix}
\begin{smallmatrix}
+1    & +1    & +1    & -1    & -1    & -1    & +1    & -1    \\[0.2cm]
+1    & -1    & +1    & +1    & -1    & +1    & +1    & +1
\end{smallmatrix}
\end{bmatrix}
\end{array} \right)
\end{equation}
As can be seen, there are $J=2$ flocks of codes in $\mathcal{C}_\textup{bench}$, each flock contains $M=2$ codes and the length of each code is $N=8$. It can be verified that $\mathcal{M}(\mathcal{C}_\textup{bench})=0$.

To rediscover the code set, there are $32$ symbols to be filled in the game, and the number of all possible sequence-set patterns is $2^{32}\approx\mathcal{O}(10^9)$.
Discovering the global optimum out of $\mathcal{O}(10^9)$ possible patterns is in fact not a difficult problem based on brute-force exhaustive search (even though it takes several days on our computer).
The results of exhaustive search indicate that $\mathcal{C}_\textup{bench}$ in \eqref{SecIII_Bench} is not the only optimal pattern (that achieves $\mathcal{M}^*=0$) when $J=2$, $M=2$, and $N=8$.
There are in fact $384$ optimal patterns that can be divided into $12$ non-isomorphic types (i.e., each pattern has $31$ other isomorphic patterns, see Appendix \ref{sec:AppB} for the definition of isomorphic pattern).

\vspace{1mm}
\noindent\textbf{Implementation:}
We implemented and ran AlphaSeq on a computer with a single CPU (Intel Core i7-$6700$) and a single GPU (NVIDIA GeForce GTX $1080$ Ti)\footnote{Given the listed computation resource, another experiment is presented Appendix \ref{sec:AppD} to study the best found sequence versus time consumption in the RL process of AlphaSeq.}. The parameter settings are listed in Table. \ref{Table_paras1}.

\begin{table}[t]
\centering
\caption{}
\label{Table_paras1}
\includegraphics[scale=0.5]{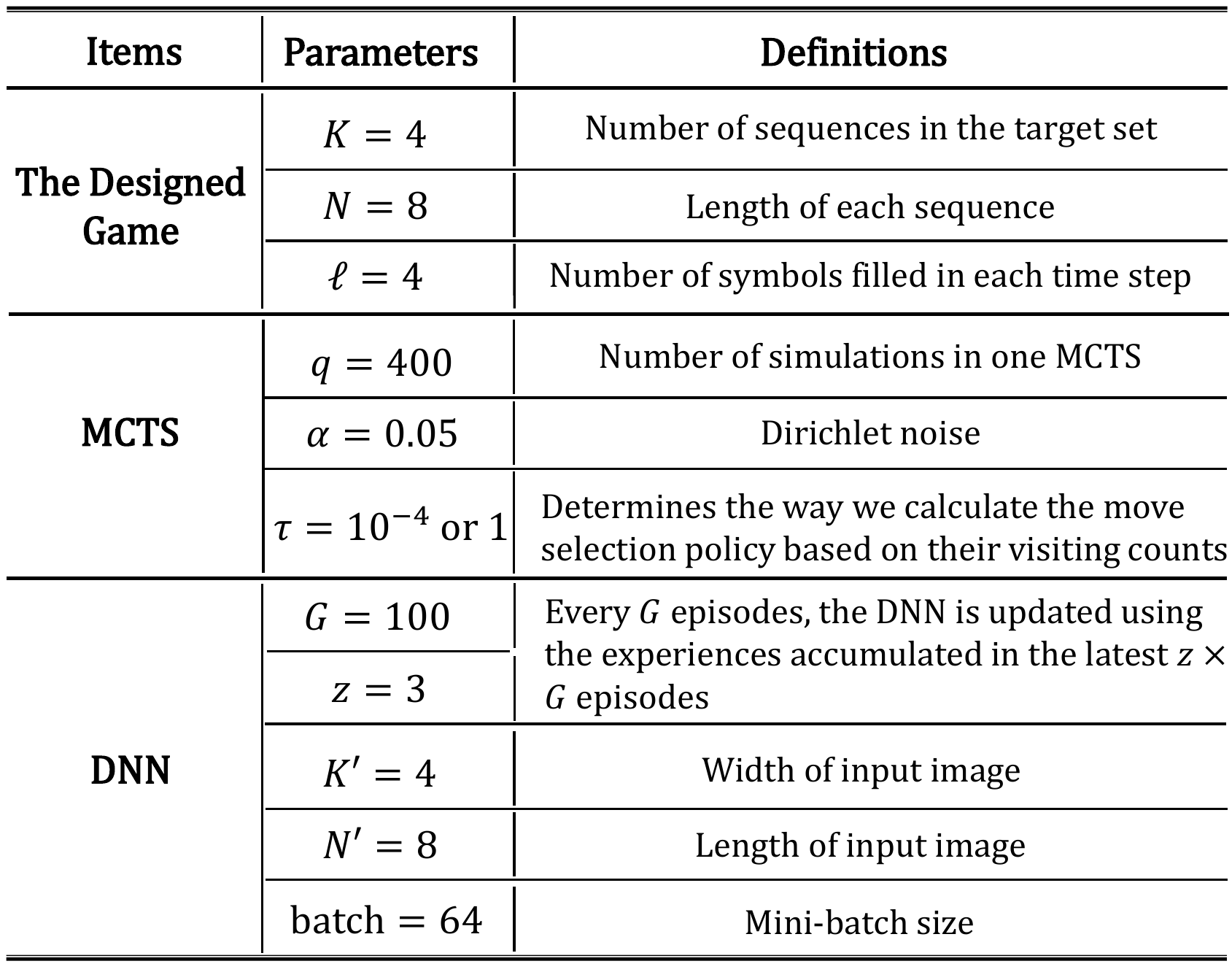}
\end{table}

For the symbol filling game, we set $K=MJ=4$, $N=8$, and $\ell=4$.
In other words, in each time step, $4$ symbols were placed in the $4\times 8$ sequence set, and an episode ended after $\lceil NK/\ell \rceil=8$ time steps when we obtained a complete sequence set.
The metric function and reward function were then calculated following \eqref{MCCDMAA_Metric} and \eqref{MCCDMAA_Reward}. An episode gave us $8$ experiences.

For DNN-guided MCTS, at each state $s_i$, we first set $s_i$ as the root node $v_0$, and then ran $q=400$ look-ahead simulations starting from $v_0$. For each simulation, Dirichlet noise $\textup{Dir}([\alpha_0,\alpha_1,...,\alpha_{2^\ell-1}])$ was added to the prior probability of $v_0$ to introduce exploration, where the parameters for Dirichlet distribution are set as $\alpha_0=\alpha_1=...=\alpha_{2^\ell-1}=\alpha=0.05$. After $400$ simulations, the probabilistic move-selection policy $\bm{\Pi}(s_i)$ was then calculated by \eqref{EquAppA4}, where we set $\tau=1$ for the first one third time steps (the probability of choosing a move is proportional to its visiting counts), and $\tau=10^{-4}$ for the rest of the time steps (deterministically choose the move with the most visiting counts).

The DNN implemented in AlphaSeq is a deep convolutional network (ConvNets). This DNN consists of six convolutional layers together with batch normalization and rectifier nonlinearities (detailed architecture of this ConvNets can be found in Appendix \ref{sec:AppA}). The DNN update cycle $G=100$ and $z=3$.
That is, every $G=100$ episodes, we trained the ConvNets using the experiences accumulated in the latest $z\times G=300$ episodes (i.e., $2400$ experiences) by stochastic gradient descent.
In particular, the mini-batch size was set to $64$, and we randomly sampled $\lceil 2400/64 \rceil$ mini-batches without replacement from the $2400$ experiences to train the ConvNets. For each mini-batch, the loss function is defined by \eqref{EquAppA5} in Appendix \ref{sec:AppA}.

\vspace{1mm}
\noindent{\it \textbf{Remark:} In Table \ref{Table_paras1}, the width and length of the input image fed into DNN is chosen to match with $N$ and $K$, i.e., $K'=K=4$ and $N'=N=8$.
However, it should be emphasized that this is not an absolute necessity.
In general, we find that setting the input of the DNN to be an $\ell\times\lceil NK/\ell \rceil$ image can speed up the learning process of DNN.
For example, if we had set $\ell=5$ instead of $\ell=4$ in this experiment, then it would better to set $K'=5$ and $N'=7$ (i.e., DNN takes an $5\times 7$ image as input, and in each time step, one row of the image is filled).
Accordingly, any intermediate state (i.e., a partially-filled $4\times 8$ sequence set pattern) must first be transformed to a $5\times 7$ image before it is fed into the ConvNets (the last $3$ symbols in the $5\times 7$ set will be padded with $0$ because the original $4\times 8$ set has $3$ fewer symbols).
}

\subsection{Performance Evaluation}
Over the course of training, AlphaSeq ran $8\times 10^3$ episodes, in which $6.4\times 10^4$ experiences were generated.
To monitor the evolution of AlphaSeq, every $G=100$ episodes when the DNN was updated, we evaluated the searching capability of AlphaSeq by using it (with the updated DNN) to play $50$ noiseless games (these $50$ noiseless games are in addition to the $G=100$ noisy games used to provide experiences to train the DNN).
The mean metric $E[\mathcal{M}]$ and the minimum metric $\min[\mathcal{M}]$ of the found $50$ sequence sets were recorded and plotted in Fig. \ref{MCCDMA_Sim1}.

\begin{figure}[t]
  \centering
  \includegraphics[width=0.9\columnwidth]{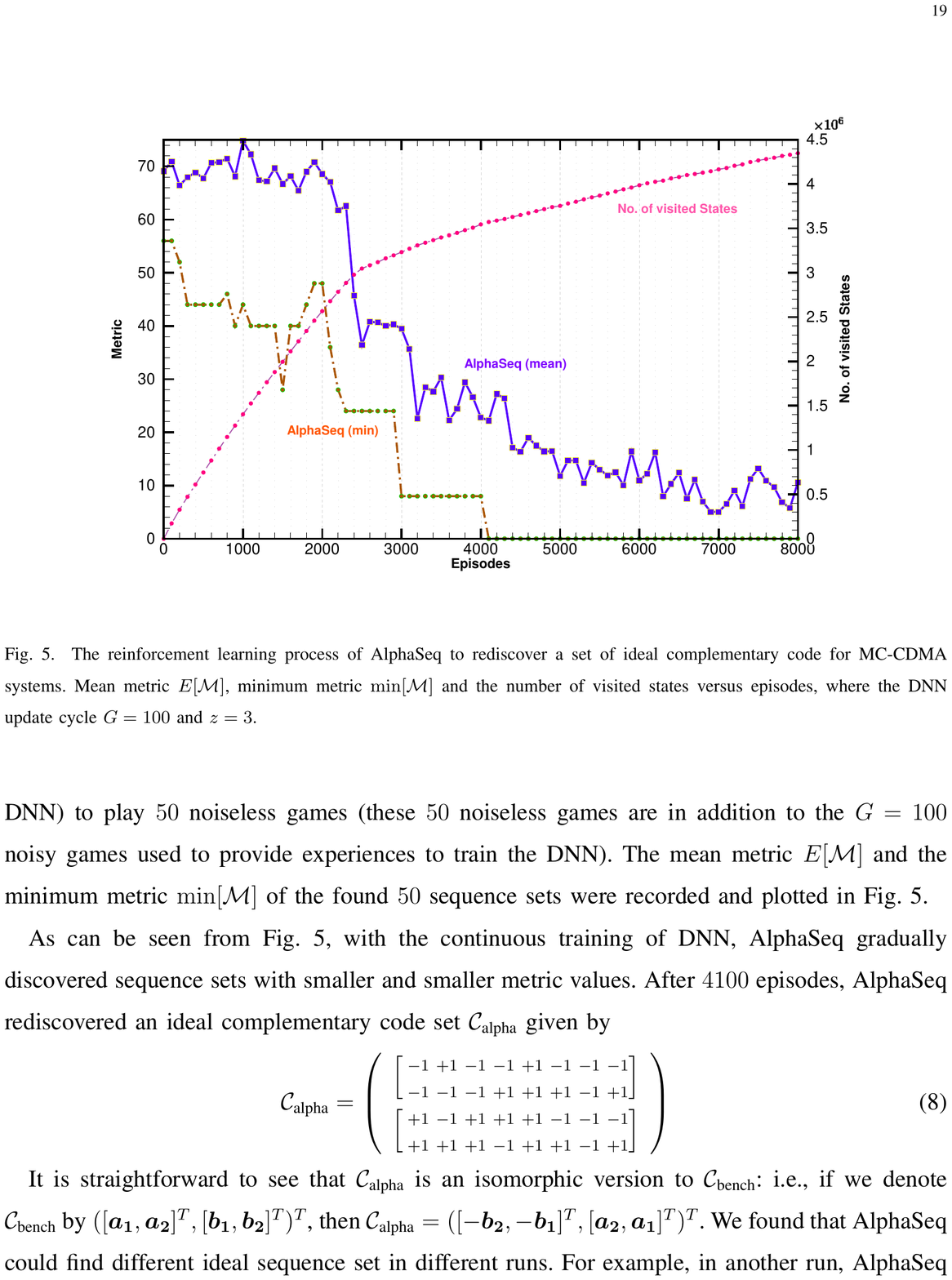}\\
  \caption{The reinforcement learning process of AlphaSeq to rediscover a set of ideal complementary code for MC-CDMA systems. Mean metric $E[\mathcal{M}]$, minimum metric $\min[\mathcal{M}]$ and the number of visited states versus episodes, where the DNN update cycle $G=100$ and $z=3$.}
\label{MCCDMA_Sim1}
\end{figure}

As can be seen from Fig. \ref{MCCDMA_Sim1}, with the continuous training of DNN, AlphaSeq gradually discovered sequence sets with smaller and smaller metric values. After $4100$ episodes, AlphaSeq rediscovered an ideal complementary code set $\mathcal{C}_\textup{alpha}$ given by
\begin{equation}\label{SecIII_Alpha1}
\mathcal{C}_\textup{alpha}=
\left(
\begin{array}{ccl}
\begin{bmatrix}
\begin{smallmatrix}
-1    & +1    & -1    & -1    & +1    & -1    & -1    & -1    \\[0.2cm]
-1    & -1    & -1    & +1    & +1    & +1    & -1    & +1
\end{smallmatrix}
\end{bmatrix}\\[2mm]
\begin{bmatrix}
\begin{smallmatrix}
+1    & -1    & +1    & +1    & +1    & -1    & -1    & -1    \\[0.2cm]
+1    & +1    & +1    & -1    & +1    & +1    & -1    & +1
\end{smallmatrix}
\end{bmatrix}
\end{array} \right)
\end{equation}

It is straightforward to see that $\mathcal{C}_\textup{alpha}$ is an isomorphic version to $\mathcal{C}_\textup{bench}$: i.e., if we denote $\mathcal{C}_\textup{bench}$ by $([\bm{a_1},\bm{a_2}]^T,[\bm{b_1},\bm{b_2}]^T)^T$, then $\mathcal{C}_\textup{alpha}=([-\bm{b_2},-\bm{b_1}]^T,[\bm{a_2},\bm{a_1}]^T)^T$.
We found that AlphaSeq could find different ideal sequence set in different runs. For example, in another run, AlphaSeq eventually discovered a non-isomorphic ideal sequence set to $\mathcal{C}_\textup{bench}$, giving
\begin{equation}\label{SecIII_Alpha2}
\mathcal{C}'_\textup{alpha}=
\left(
\begin{array}{ccl}
\begin{bmatrix}
\begin{smallmatrix}
+1    & -1    & -1    & -1    & -1    & -1    & +1    & -1    \\[0.2cm]
-1    & +1    & +1    & +1    & -1    & -1    & +1    & -1
\end{smallmatrix}
\end{bmatrix}\\[2mm]
\begin{bmatrix}
\begin{smallmatrix}
-1    & +1    & -1    & -1    & +1    & +1    & +1    & -1    \\[0.2cm]
+1    & -1    & +1    & +1    & +1    & +1    & +1    & -1
\end{smallmatrix}
\end{bmatrix}
\end{array} \right)
\end{equation}

The complexity of AlphaSeq is measured by means of distinct states that have been visited. Specifically, we stored all the states (including intermediate states and terminal states) encountered over the course of training in a Hash table. Every $G$ episodes, we recorded the length of the Hash table (i.e., the total number of visited states by then) and plotted them in Fig. \ref{MCCDMA_Sim1} as the training goes on.

An interesting observation is that, there is a turning point on the curve of the number of distinct visited states.
The slope of this curve corresponds to the extent to which AlphaSeq is exploring new states in its choice of actions.
Under the framework of AlphaSeq, there are two kinds of exploration:
1) Inherent exploration -{}- This is introduced by the variance of the action-selection policy. That is, the more random the action-selection policy is, the more new states are likely to be explored by AlphaSeq.
2) Artificial exploration -{}- We deliberately add extra artificial randomness to AlphaSeq to let it explore more states.
For example, the Dirichlet noise added to the root vertex in DNN-guided MCTS, the temperature parameter $\tau$ that determines how to calculate the policy all add to the randomness.
At the beginning of the game (i.e., episode $0$), the policy of AlphaSeq is quite random inherently because the DNN is randomly initialized.
Thus, both inherent exploration and artificial exploration contributes to the slope of this curve.
At the end of the game (i.e., episode $8\times 10^3$), the policy converges, hence the inherent exploration drops off, and only artificial exploration remains.

This turning point was in fact observed in all simulations of AlphaSeq in various applications we tried (not just the application for rediscovering complementary code here; see Section \ref{sec:PCR} on application of AlphaSeq to discover phase-coded sequences for pulse compression radar).
In general, we can then divide the overall reinforcement learning process of AlphaSeq into two phases based on this turning point. Phase I is an exploration-dominant phase (before the turning point), in which the behaviors of AlphaSeq are quite random.
As a result, AlphaSeq actively explores increasingly more states per $G$ episodes in the overall solution space.
After gaining familiarity with the whole solution space, AlphaSeq enters an exploitation-dominant phase (after the turning point), in which instead of exploring for more states, AlphaSeq tends to focus more on exploitation.

\vspace{1mm}
\noindent{\it \textbf{Remark:}
The DNN update cycle $G$ is important to guarantee that the algorithmic iteration proceeds in a direction of performance improvement.
In AlphaSeq, given a DNN $\psi_\theta$, the move-selection policy $\bm{\Pi}$ given by the $\psi_\theta$-guided MCTS is usually much stronger than the raw policy output $\bm{P}$ of $\psi_\theta$.
Thus, we first run $\psi_\theta$-guided MCTS to play $G$ games and generate $\lceil NK/\ell \rceil\times G$ experiences. Then, we use these experiences to train a new DNN $\psi_{\theta'}$, so that $\psi_{\theta'}$ can learn the stronger move given by $\psi_{\theta}$-guided MCTS.

In this context, the DNN update cycle $G$ must be chosen so that the $\lceil NK/\ell \rceil\times G$ experiences are sufficient to capture the fine details of $\bm{\Pi}$ given by $\psi_{\theta}$-guided MCTS.
In particular, parameter $G$ is closely related to $\ell$: a larger $\ell$ means more elements in $\bm{\Pi}$ (i.e., $\bm{\Pi}$ must capture $2^\ell$ possible moves in each step), and hence a larger $G$ is needed to guarantee that $\bm{\Pi}$ is well represented by the $\lceil NK/\ell \rceil\times G$ experiences.
}
\begin{figure}[t]
  \centering
  \includegraphics[width=0.7\columnwidth]{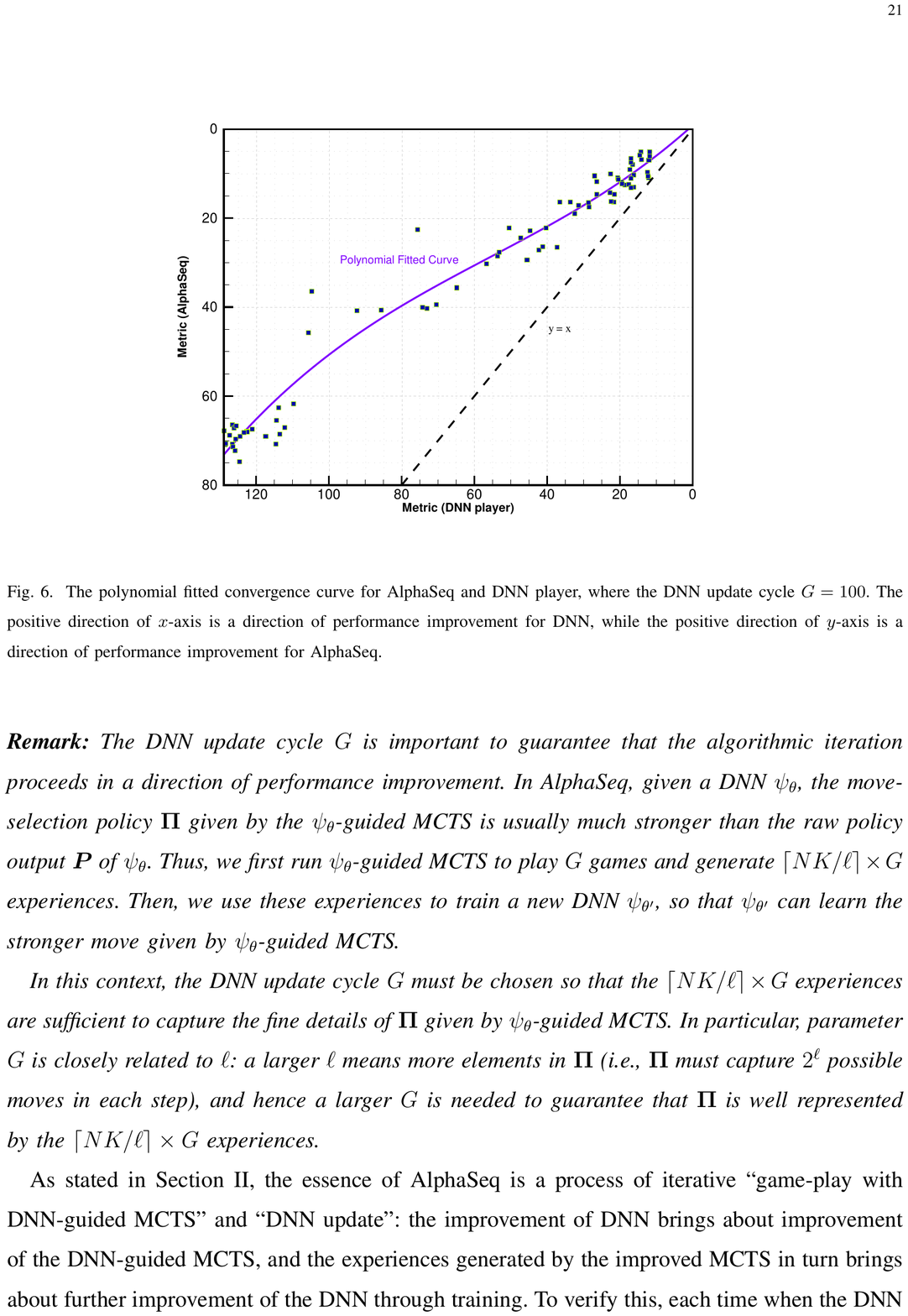}\\
  \caption{The polynomial fitted convergence curve for AlphaSeq and DNN player, where the DNN update cycle $G=100$. The positive direction of $x$-axis is a direction of performance improvement for DNN, while the positive direction of $y$-axis is a direction of performance improvement for AlphaSeq.}
\label{MCCDMA_Sim2}
\end{figure}

As stated in Section \ref{sec:methodology}, the essence of AlphaSeq is a process of iterative ``game-play with DNN-guided MCTS'' and ``DNN update'': the improvement of DNN brings about improvement of the DNN-guided MCTS, and the experiences generated by the improved MCTS in turn brings about further improvement of the DNN through training.
To verify this, each time when the DNN is updated, we assess the new DNN by using it (without MCTS, and no noise) to discover $50$ sequences and record their mean metric $E[\mathcal{M}']$. Specifically, at each state $s_i$, the player directly adopts the raw policy output of the DNN, i.e., $\bm{P}(s_i)$, to sample the next move without relying on the MCTS outputs $\bm{\Pi}(s_i)$.

Fig. \ref{MCCDMA_Sim2} presents all the $(E[\mathcal{M}'],E[\mathcal{M}])$ pair in the exploitation phase, and the corresponding polynomial fitted convergence curve. In particular, the positive direction of $x$-axis in Fig. \ref{MCCDMA_Sim2} is a direction of performance improvement for DNN, and the positive direction of $y$-axis is a direction of performance improvement for AlphaSeq. The convergence curve in Fig. \ref{MCCDMA_Sim2} reflects how the two ingredients, ``MCTS-guided game-play'' and ``DNN update'', interplay and mutually improve in the reinforcement learning process of AlphaSeq.

\section{AlphaSeq for Pulse Compression Radar}\label{sec:PCR}
Radar radiates radio pulses for the detection and location of reflecting objects \cite{RadarBook}.
A classical dilemma in radar systems arises from the choice of pulse duration:
given a constant power, longer pulses have higher energy, providing greater detection range;
shorter pulses, on the other hand, have larger bandwidth, yielding higher resolution.
Thus, there is a trade-off between distance and resolution.
Pulse compression radar can enable high-resolution detection over a large distance \cite{RadarBook}, \cite{PulseCodes}, \cite{misMatched3}.
The key is to use modulated pulses (e.g., phase-coded pulse) rather than conventional non-modulated pulses.

\subsection{Pulse Compression Radar and Phase codes}
The transmitter of a binary phased-coded pulse compression radar system transmits a pulse modulated by $N$ rectangular subpulses.
The subpulses are a binary phase code $\bm{s}$ of length $N$.
Each entry of the code is $+1$ or $-1$, corresponding to phase $0$ and $\pi$.
Following the definition in \cite{misMatched3} and \cite{misMatched2}, after subpulse-matched filtering and analog-to-digital conversion, the received sequence $\bm{y}$ is
\begin{eqnarray}\label{PCR_Signal}
\bm{y}=h_0\bm{s}+\sum_{n=1-N,n\neq 0}^{N-1}h_n\bm{J_n s} + \bm{w},
\end{eqnarray}
where
1) $\left\{h_n:n=1\!-\!N,2\!-\!N,...,N\!-\!2,N\!-\!1\right\}$ are coefficients proportional to the radar cross sections of different range bins \cite{misMatched3}. In particular, $h_0$ corresponds to the range bin of interest, and the radar's objective is to estimate $h_0$ given the received sequence $\bm{y}$;
2) $\bm{w}$ is the white Gaussian noise;
3) Matrix $\bm{J_n}$, as given in \eqref{SecIV_matricJ}, is a shift matrix capturing the different propagation time needed for the clutter to return from different range bins \cite{misMatched2}.
\begin{equation}\label{SecIV_matricJ}
\includegraphics[scale=0.35]{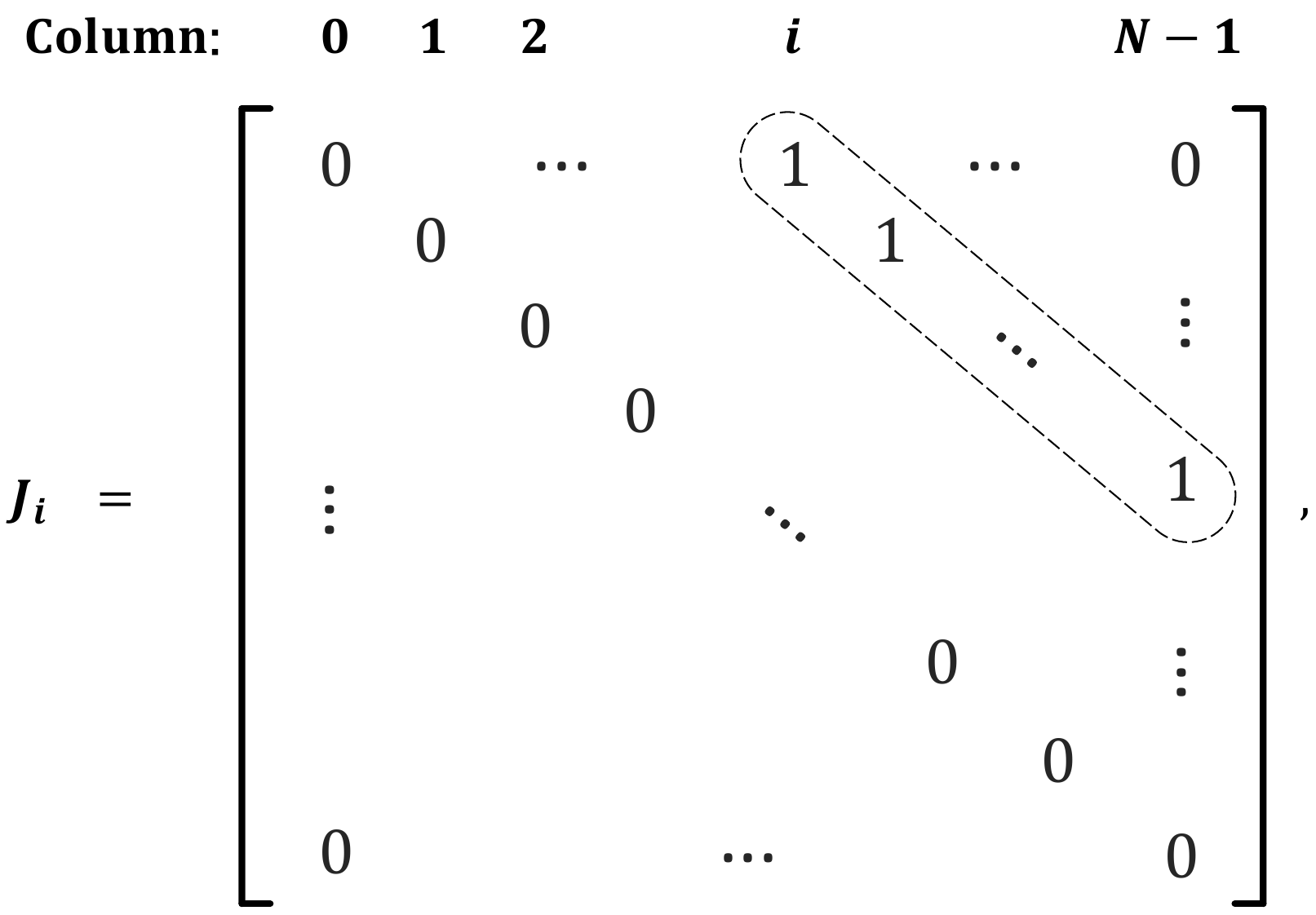}
\end{equation}
where $i=1,2,...,N\!-\!1$ and $\bm{J}_{-i}=\bm{J}^T_i$.
That is, in matrix $\bm{J_n}$, all entries except for that on the $n$-th off-diagonal are $0$.
The effect of matrix $\bm{J_n}$ is to right-shift or left-shift the phase code $\bm{s}$ with zero padding: when $n<0$, $\bm{J_n s}$ is a right-shifted version of $\bm{s}$; when $n>0$, $\bm{J_n s}$ is a left-shifted version of $\bm{s}$.

To estimate the coefficient $h_0$, a widely studied estimator is the matched filtering (MF) estimator:
\begin{eqnarray}
\widehat{h}_0=\frac{\bm{s}^T\bm{y}}{\bm{s}^T\bm{s}}=h_0+\sum^{N-1}_{n=1-N,n\neq 0}h_n\frac{\bm{s}^T\bm{J_n}\bm{s}}{\bm{s}^T\bm{s}}.
\end{eqnarray}
where the AWGN noise is ignored since the received signal is interference-limited (i.e., the interference power dominates over the noise power). Given the fact that we have no information on $\{h_n:n\neq 0\}$, the problem is then to discover a phase code $\bm{s}$ that can maximize the signal-to-interference ratio (SIR) $\gamma_\textup{MF}$ (larger SIR yields better estimation performance):
\begin{eqnarray}\label{MF}
\gamma_\textup{MF} = \frac{(\bm{s}^T\bm{s})^2}{\sum_{n=1-N,n\neq 0}^{N-1}(\bm{s}^T\bm{J_n}\bm{s})^2}.
\end{eqnarray}
In fact, this is the well-known ``merit factor problem'' occurring in various guises in many disciplines \cite{GolayMF}, \cite{HoholdtMF}, \cite{Whatcanbeused}.
In the past few decades, a variety of phase codes have been devised to achieve large SIR (merit factor), e.g., the Rudin-Shapiro sequences (asymptotically, $\gamma_\textup{MF}=3$), m-sequences (asymptotically, $\gamma_\textup{MF}=3$), and Legendre sequences (asymptotically, $\gamma_\textup{MF}=6$) (see the excellent surveys \cite{HoholdtMF}, \cite{Whatcanbeused} and the references therein).
Overall, the merit factor problem remains open. Experiment results show that $\gamma_\textup{MF}$ does not increase as the sequence length $N$ increases. So far, the best-known merit factor of $14.08$ is achieved by the Baker sequence of length $13$.

The motivation of the MF estimator comes from the fact that matched filtering provides the highest signal-to-noise ratio (SNR) in the presence of white Gaussian noise \cite{MF1994}.
However, in the case of Radar, the received signal is interference-limited, hence interference suppression is much more important. This motivates researchers to devise a mismatched filtering (MMF) estimator \cite{misMatched3},\cite{misMatched1}, \cite{misMatched2}.

Instead of using the transmitted phase-code $\bm{s}$, the MMF estimator uses a general real-valued code $\bm{x}$ to correlate the received sequence, giving
\begin{eqnarray}
\widehat{h}_0=\frac{\bm{x}^T\bm{y}}{\bm{x}^T\bm{s}}=h_0+\sum_{n=1-N,n\neq 0}^{N-1}h_n\frac{\bm{x}^T\bm{J_n}\bm{s}}{\bm{x}^T\bm{s}},
\end{eqnarray}
where the real-valued sequence $\bm{x}$ is to be optimized at the receiver.
The problem is then to find a pair of sequences $(\bm{s},\bm{x})$ so that the signal-to-interference ratio (SIR) $\gamma_\textup{MMF}$ in \eqref{MMF} can be maximized.
\begin{eqnarray}\label{MMF}
\gamma_\textup{MMF} =\frac{(\bm{x}^T\bm{s})^2}{\sum^{N-1}_{n=1-N,n\neq 0}(\bm{x}^T\bm{J_n}\bm{s})^2}.
\end{eqnarray}
It had been shown in \cite{misMatched2} that, given a phase code $\bm{s}$, the optimal sequence $\bm{x}$ that maximizes $\gamma_\textup{MMF}$ is $\bm{x}^*=\bm{R}^{-1}\bm{s}$, where matrix $\bm{R}$ is given by
\begin{eqnarray}\label{matrixR}
\bm{R}=\sum_{n=1-N,n\neq 0}^{N-1}\bm{J_n s}\bm{s}^T\bm{J_n}^T.
\end{eqnarray}
Substituting $\bm{x}^*=\bm{R}^{-1}\bm{s}$ in \eqref{MMF} gives
\begin{eqnarray}\label{MMF_Final}
\gamma_\textup{MMF} = \bm{s}^T\bm{R}^{-1}\bm{s}.
\end{eqnarray}
Notice that $\gamma_\textup{MMF}$ only depends on the phase code $\bm{s}$, hence, the objective for the design of the MMF estimator is then to discover a phase-code $\bm{s}$ that can maximize $\gamma_\textup{MMF}$ in \eqref{MMF_Final}.

\vspace{1mm}
\noindent{\it \textbf{Remark:}
The MMF estimator is superior to the MF estimator since $\gamma_\textup{MMF}$ is no less than $\gamma_\textup{MF}$ given the same phase code $\bm{s}$. However, the problem of discovering a phase code $\bm{s}$ that maximizes \eqref{MMF_Final} did not receive much attention from the research community compared with the merit factor problem (i.e., discovering a code $\bm{s}$ that maximizes \eqref{MMF}).
This is perhaps due to the more complex criterion, and the lack of suitable mathematical tools \cite{Whatcanbeused}.
}
In this section, we make use of AlphaSeq to discover phase codes for pulse compression radar with MMF estimator.

\subsection{AlphaSeq for Pulse Compression Radar}
We choose \eqref{MMF_Final} as the metric function of AlphaSeq:
\begin{eqnarray}\label{PCRmetric}
\mathcal{M}(\bm{s})=\bm{s}^T\bm{R}^{-1}\bm{s},
\end{eqnarray}
where matrix $\bm{R}$ is given in \eqref{matrixR}.
For AlphaSeq, the objective is to discover the sequence that can maximize this metric function.
Additional analyses on the structure of this metric function can be found in Appendix \ref{sec:AppC}.

Given a phase code $\bm{s}$ with metric $\mathcal{M}(\bm{s})$, we define a linear reward function as follows:
\begin{equation}\label{PCRreward}
\mathcal{R}(\bm{s})=
\frac{2\mathcal{M}(\bm{s}) - \mathcal{M}_u - \mathcal{M}_l}{\mathcal{M}_u-\mathcal{M}_l},
\end{equation}
where $\mathcal{M}_u$ and $\mathcal{M}_l$ are the upper and lower bounds for the search range of $\mathcal{M}(\bm{s})$.
In general, we could set $\mathcal{M}_u=\max_{\bm{s}}\mathcal{M}(\bm{s})$ and $\mathcal{M}_l=\min_{\bm{s}}\mathcal{M}(\bm{s})$.

\vspace{1mm}
\noindent{\it \textbf{Remark:}
In Appendix C, the value ranges of $\mathcal{M}(\bm{s})$ are derived as $\max_{\bm{s}}\mathcal{M}(\bm{s})=37$ and
$\min_{\bm{s}}\mathcal{M}(\bm{s})=\frac{16}{9N^3}$.
However, we empirically find that, if we directly set the search range $\mathcal{M}_u=37$ and $\mathcal{M}_l=\frac{16}{9N^3}$, AlphaSeq will be trapped in the exploration-dominant phase for a long time.
This is because $\mathcal{M}_u-\mathcal{M}_l$ is too large.
In other words, we are asking AlphaSeq to search over a large solution space for $\bm{s}$ all at once.
We will later introduce a technique dubbed ``segmented induction'' to induce AlphaSeq to zoom in to a good solution.
In essence, segmented induction uses a smaller range of $[\frac{16}{9N^3}, 37]$, but progressively changes $\mathcal{M}_u$ and $\mathcal{M}_l$ as better $\mathcal{M}(\bm{s})$ is obtained (i.e., focus our search within a subspace of $\bm{s}$ each time, but progressively changing the focus of the subspace within which we search).
}

Based on the metric function and reward function defined above, we implemented AlphaSeq and trained DNN to discover a phase code for the MMF estimator. A Legendre sequence \cite{Legendre} is chosen as the benchmark.

\vspace{1mm}
\noindent\textbf{Benchmark:} We choose the Legendre sequence of length $N=59$ as our benchmark:
\begin{equation}\label{Legendre}
\bm{s}_\textup{L}=
\begin{bmatrix}
\begin{smallmatrix}
+1 & +1 & -1 & +1 & +1 & -1 & +1 & -1 & +1 & -1 & -1 & -1 \\[0.2cm]
+1 & -1 & +1 & +1 & -1 & +1 & +1 & +1 & -1 & +1 & -1 & +1 \\[0.2cm]
-1 & -1 & +1 & -1 & -1 & +1 & +1 & +1 & -1 & +1 & +1 & +1 \\[0.2cm]
+1 & -1 & -1 & +1 & +1 & +1 & +1 & +1 & -1 & -1 & -1 & -1 \\[0.2cm]
-1 & +1 & +1 & -1 & -1 & -1 & -1 & +1 & -1 & -1 & -1 &
\end{smallmatrix}
\end{bmatrix}.\nonumber
\end{equation}

For the MMF estimator, this Legendre sequence yields SIR $\gamma_\textup{MMF}\approx 10.98$. For reference, $\bm{s}_\textup{L}$ yields a merit factor of $\gamma_\textup{MF}\approx 6.19$ when the MF estimator is used.

For the corresponding AlphaSeq game, there are $59$ symbols to fill.
The number of all possible sequence-set patterns is $2^{59}$.
The complexity of exhaustive search for the global optimum is $\mathcal{O}(10^{18})$, and it would take more than one million years for our computer to find the optimal solution.
In other words, the optimal solution of $\bm{s}$ when $N=59$ is unavailable.
In this context, the second benchmark we choose is random search.
For random search, we randomly create $59$-symbol sequences and record the maximum SIR obtained given a fixed budget of random trials.

\begin{table}[t]
\centering
\caption{}
\label{Table_paras2}
\includegraphics[scale=0.5]{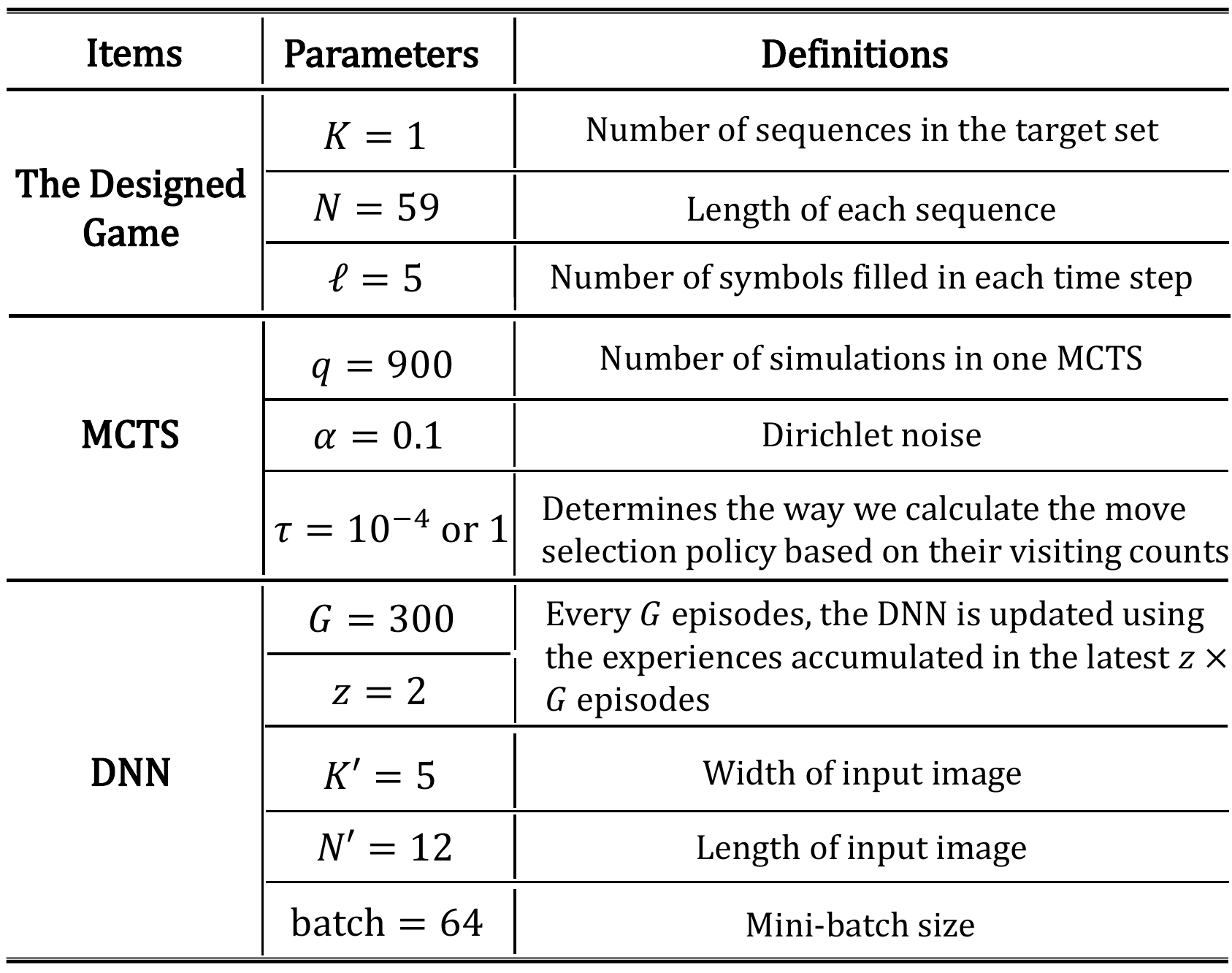}
\end{table}

\vspace{1mm}
\noindent\textbf{Implementation:} In the AlphaSeq implementation, the parameter settings are listed in Table \ref{Table_paras2}.
As seen in the table, we aim to discover one sequence of length $59$ wherein $K=1$ and $N=59$ in the AlphaSeq game.
The number of symbols filled in each time step is set to $\ell=5$, and the ConvNets takes $5\times 12$ images as input.
To feed an intermediate state (i.e., a partially-filled $1\times 59$ pattern) into the ConvNets, we first transform it to a $5\times 12$ image (the missing $1$ symbol will be padded with $0$).
A complete sequence is obtained after $\lceil NK/\ell \rceil=12$ time steps, where $60$ symbols are obtained.
Then, we ignore the last symbol and calculate the metric function and reward function following \eqref{PCRmetric} and \eqref{PCRreward}.
The DNN update cycle $G$ is set to $300$ and $z=2$. That is, every $G=300$ episodes, DNN will be updated using the experiences accumulated in the latest $600$ episodes.

Given the huge solution space, it is challenging for our computer to train AlphaSeq to find the optimal solution.
For one thing, each episode in this problem consumes much more time than the complementary code rediscovery problem in Section \ref{sec:MCCDMA}, because of the larger number of MCTSs run in each episode and the larger number of simulations run in each MCTS.
For another, the large solution space in this problem requires a massive number of exploration-dominant episodes so that AlphaSeq can visit enough number of states to gain familiarity with the whole solution space.
As a result, the exploration phase will last a long time before AlphaSeq enters the exploitation phase.
To tackle the above challenges, we use the follows two techniques to accelerate the training process:

\begin{enumerate}
\item Make more efficient use of experiences. Every $G$ episodes, we trained the DNN using the experiences accumulated in the latest $zG$ episodes ($zG\lceil NK/\ell \rceil$ experiences in total) by stochastic gradient descent. In section \ref{sec:MCCDMA}, the mini-batches were randomly sampled without replacement. That gave us $\lceil zG\lceil NK/\ell \rceil/64\rceil$ mini-batches ($64$ was the mini-batch size). Here, we want to make more efficient use of experiences. To this end, every $G$ episodes, we randomly sample $\lceil zG\lceil NK/\ell \rceil/64\rceil\times 6$ mini-batches with replacement from the latest $zG\lceil NK/\ell \rceil$ experiences to train the ConvNets.
\item Segmented induction. This technique is particularly useful when the upper and lower bounds of the metric function span a large range, or when there is no way to bound the metric function.
    The essence of segmented induction is to segment the large range of the metric function to several small ranges, and define the linear reward in small ranges rather than in a single large range.
    To be more specific, assuming a metric function with values within the range $[0,D]$.
    Then, rather than initializing $\mathcal{M}_l=0$ and $\mathcal{M}_u=D$ in \eqref{PCRreward}, we segment $[0,D]$ to three small overlapping ranges\footnote{a) Non-overlapping intervals are inadvisable. Experimental results show that AlphaSeq cannot learn well when using non-overlapping intervals. b) The small ranges segmented here are for illustration purpose only. In general, we need to design the ranges according to the specifics in different problems.} $[0,D/2]$, $[D/3,2D/3]$, $[D/2,D]$, and define the linear reward in these small ranges:
    in episode $0$, we define the reward function in the first small range and initialize $\mathcal{M}_l=0$ and $\mathcal{M}_u=D/2$.
    With the training of DNN, AlphaSeq is able to discover better and better sequences in the range $[0,D/2]$.
    When AlphaSeq discovers sequences with reward approaching $1$ (i.e., the mean metric function of the found sequences approaches $D/2$), we then redefine the reward with the second range $[D/3,2D/3]$.
    That is, we set $\mathcal{M}_l=D/3$, $\mathcal{M}_u=2D/3$, and let AlphaSeq discovers sequences in the second small range.
    When AlphaSeq is able to discover sequences with reward approaching $1$ again, we redefine the reward in the third small range, and so on and so forth.
    Overall, with a smaller range at a given time, the slope of the reward function in \eqref{PCRreward} increases, allowing AlphaSeq to distinguish the relative quality of different sequences with higher contrast.
\end{enumerate}

\subsection{Performance Evaluation}
For training, we ran AlphaSeq over $1.44\times 10^4$ episodes, generating $1.73\times 10^5$ experiences in total. As in Section \ref{sec:MCCDMA}, to monitor the evolution of AlphaSeq, every $G=300$ episodes when the DNN was updated, we evaluated the searching capability of AlphaSeq by using AlphaSeq (with the updated DNN) to play $50$ noiseless games, and recorded their mean metric $E[\mathcal{M}]$ and maximum metric $\max[\mathcal{M}]$.
Fig. \ref{FigPCR1} presents the $E[\mathcal{M}]$ and $\max[\mathcal{M}]$ versus episodes during the process of reinforcement learning.

\begin{figure}[t]
  \centering
  \includegraphics[width=0.9\columnwidth]{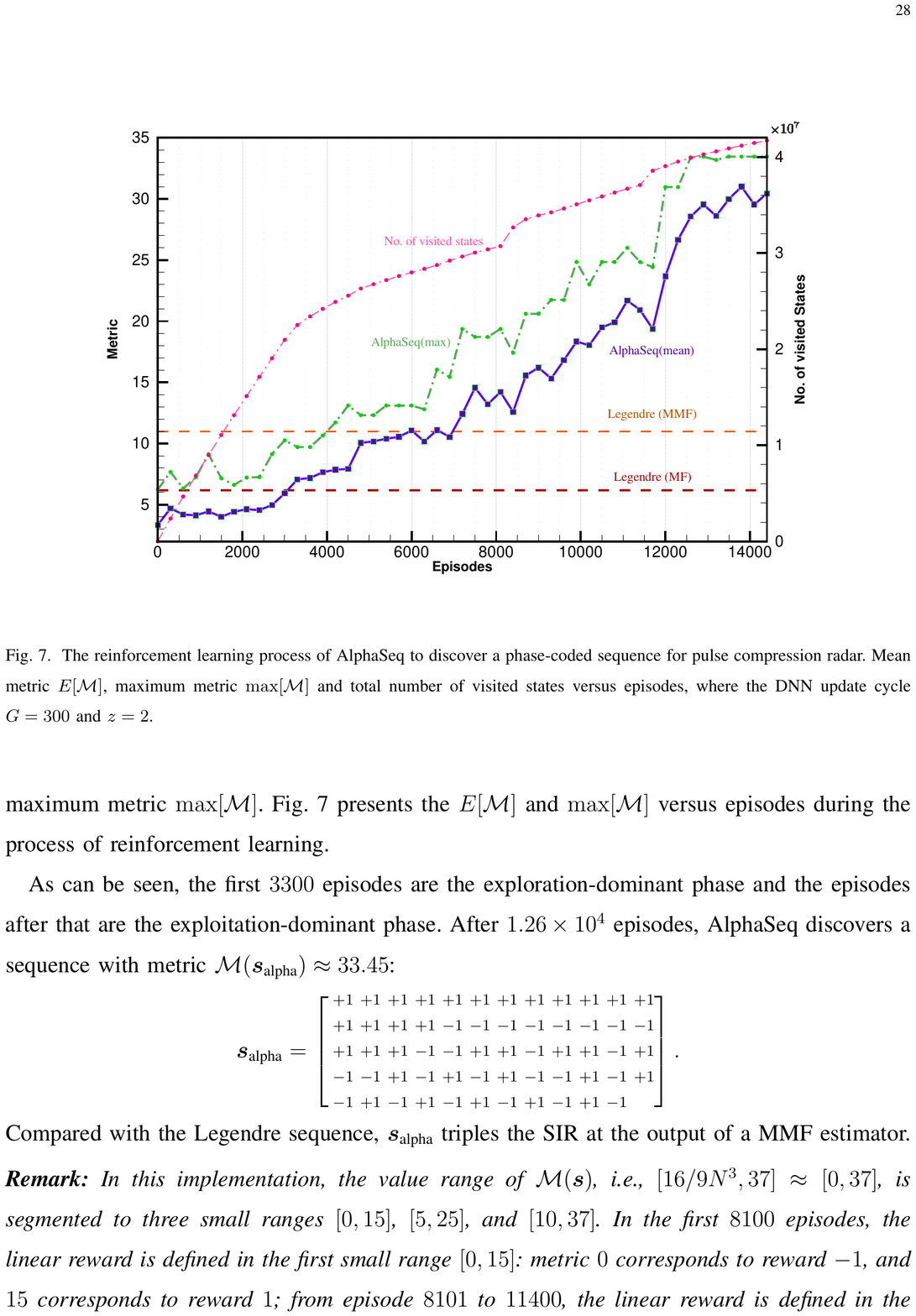}\\
  \caption{The reinforcement learning process of AlphaSeq to discover a phase-coded sequence for pulse compression radar. Mean metric $E[\mathcal{M}]$, maximum metric $\max[\mathcal{M}]$ and total number of visited states versus episodes, where the DNN update cycle $G=300$ and  $z=2$.}
\label{FigPCR1}
\end{figure}

As can be seen, the first $3300$ episodes are the exploration-dominant phase and the episodes after that are the exploitation-dominant phase.
After $1.26\times 10^4$ episodes, AlphaSeq discovers a sequence with metric $\mathcal{M}(\bm{s}_\textup{alpha})\approx 33.45$:
\begin{equation}
\bm{s}_\textup{alpha}=
\begin{bmatrix}
\begin{smallmatrix}
+1 & +1 & +1 & +1 & +1 & +1 & +1 & +1 & +1 & +1 & +1 & +1 \\[0.2cm]
+1 & +1 & +1 & +1 & -1 & -1 & -1 & -1 & -1 & -1 & -1 & -1 \\[0.2cm]
+1 & +1 & +1 & -1 & -1 & +1 & +1 & -1 & +1 & +1 & -1 & +1 \\[0.2cm]
-1 & -1 & +1 & -1 & +1 & -1 & +1 & -1 & -1 & +1 & -1 & +1 \\[0.2cm]
-1 & +1 & -1 & +1 & -1 & +1 & -1 & +1 & -1 & +1 & -1 &
\end{smallmatrix}
\end{bmatrix}.\nonumber
\end{equation}
Compared with the Legendre sequence, $\bm{s}_\textup{alpha}$ triples the SIR at the output of a MMF estimator.

\vspace{1mm}
\noindent{\it \textbf{Remark:}
In this implementation, the value range of $\mathcal{M}(\bm{s})$, i.e., $[16/9N^3,37]\approx [0,37]$, is segmented to three small ranges $[0,15]$, $[5,25]$, and $[10,37]$. In the first $8100$ episodes, the linear reward is defined in the first small range $[0,15]$: metric $0$ corresponds to reward $-1$, and $15$ corresponds to reward $1$; from episode $8101$ to $11400$, the linear reward is defined in the second small range $[5,25]$; after episode $11401$, the linear reward is defined in the last small range $[10,37]$.
}

\begin{figure}[t]
  \centering
  \includegraphics[width=0.7\columnwidth]{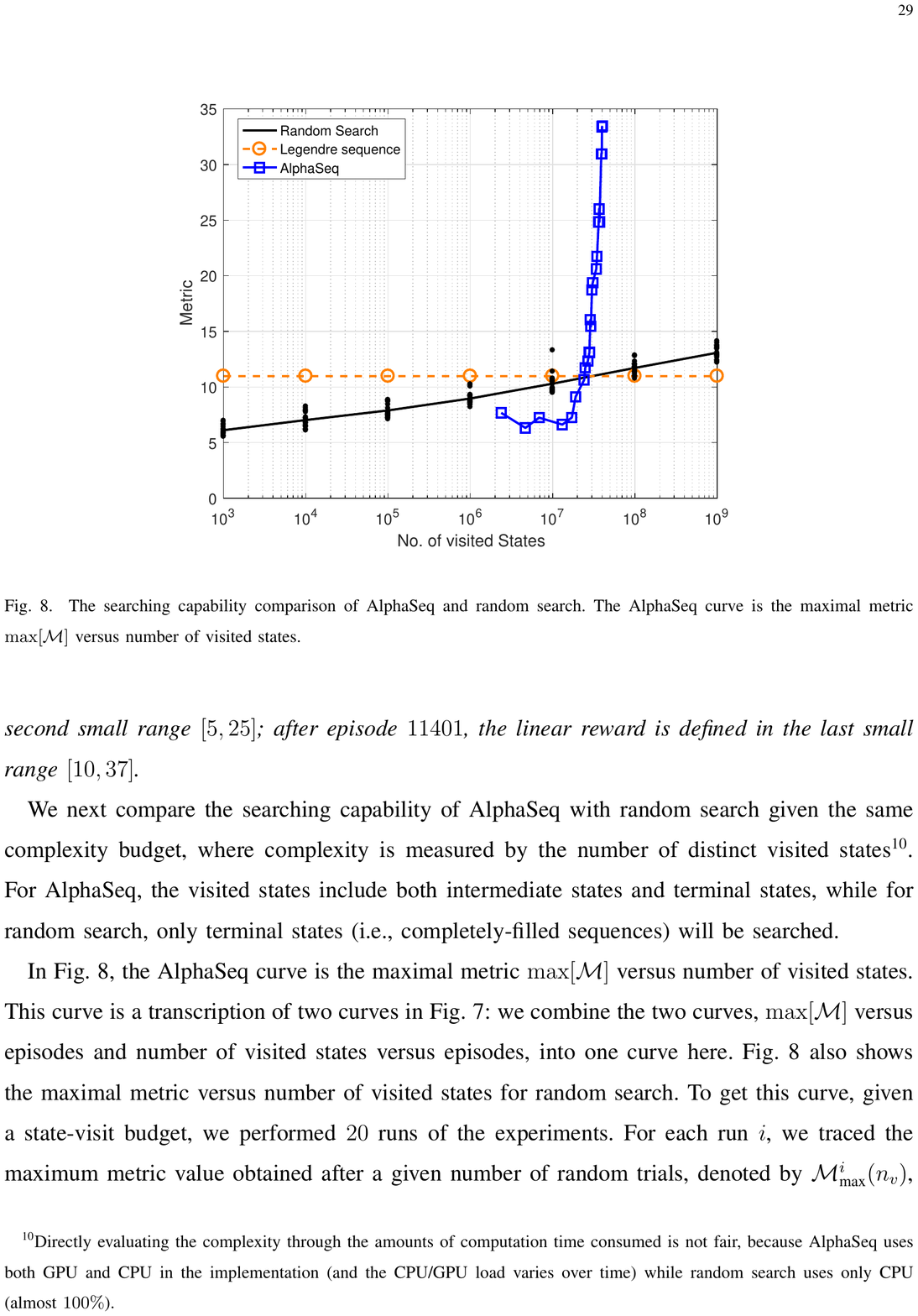}\\
  \caption{The searching capability comparison of AlphaSeq and random search. The AlphaSeq curve is the maximal metric $\max[\mathcal{M}]$ versus number of visited states.}
\label{FigPCR2}
\end{figure}

We next compare the searching capability of AlphaSeq with random search given the same complexity budget, where complexity is measured by the number of distinct visited states\footnote{Directly evaluating the complexity through the amounts of computation time consumed is not fair, because AlphaSeq uses both GPU and CPU in the implementation (and the CPU/GPU load varies over time) while random search uses only CPU (almost $100\%$).}. For AlphaSeq, the visited states include both intermediate states and terminal states, while for random search, only terminal states (i.e., completely-filled sequences) will be searched.

In Fig. \ref{FigPCR2}, the AlphaSeq curve is the maximal metric $\max[\mathcal{M}]$ versus number of visited states. This curve is a transcription of two curves in Fig. \ref{FigPCR1}: we combine the two curves, $\max[\mathcal{M}]$ versus episodes and number of visited states versus episodes, into one curve here. Fig. \ref{FigPCR2} also shows the maximal metric versus number of visited states for random search. To get this curve, given a state-visit budget, we performed $20$ runs of the experiments. For each run $i$, we traced the maximum metric value obtained after a given number of random trials, denoted by $\mathcal{M}^i_\textup{max}(n_v)$, where $n_v$ is the number of trials, which correspond to the number of visited (terminal) states. The black curve in Fig. \ref{FigPCR2} is $\frac{1}{20}\sum_i\mathcal{M}^i_\textup{max}(n_v)$ (i.e., a mean-max curve).

As can be seen from Fig. \ref{FigPCR2}, the largest metric that random search can find is on average a log-linear function of the number visited states. After randomly visiting $10^8$ states, the best sequence random search can find is on average with metric $11.71$. On the other hand, AlphaSeq discovers sequences with $\max[\mathcal{M}]=33.45$ after visiting only $4\times 10^7$ states.

\begin{figure}[t]
  \centering
  \includegraphics[width=0.7\columnwidth]{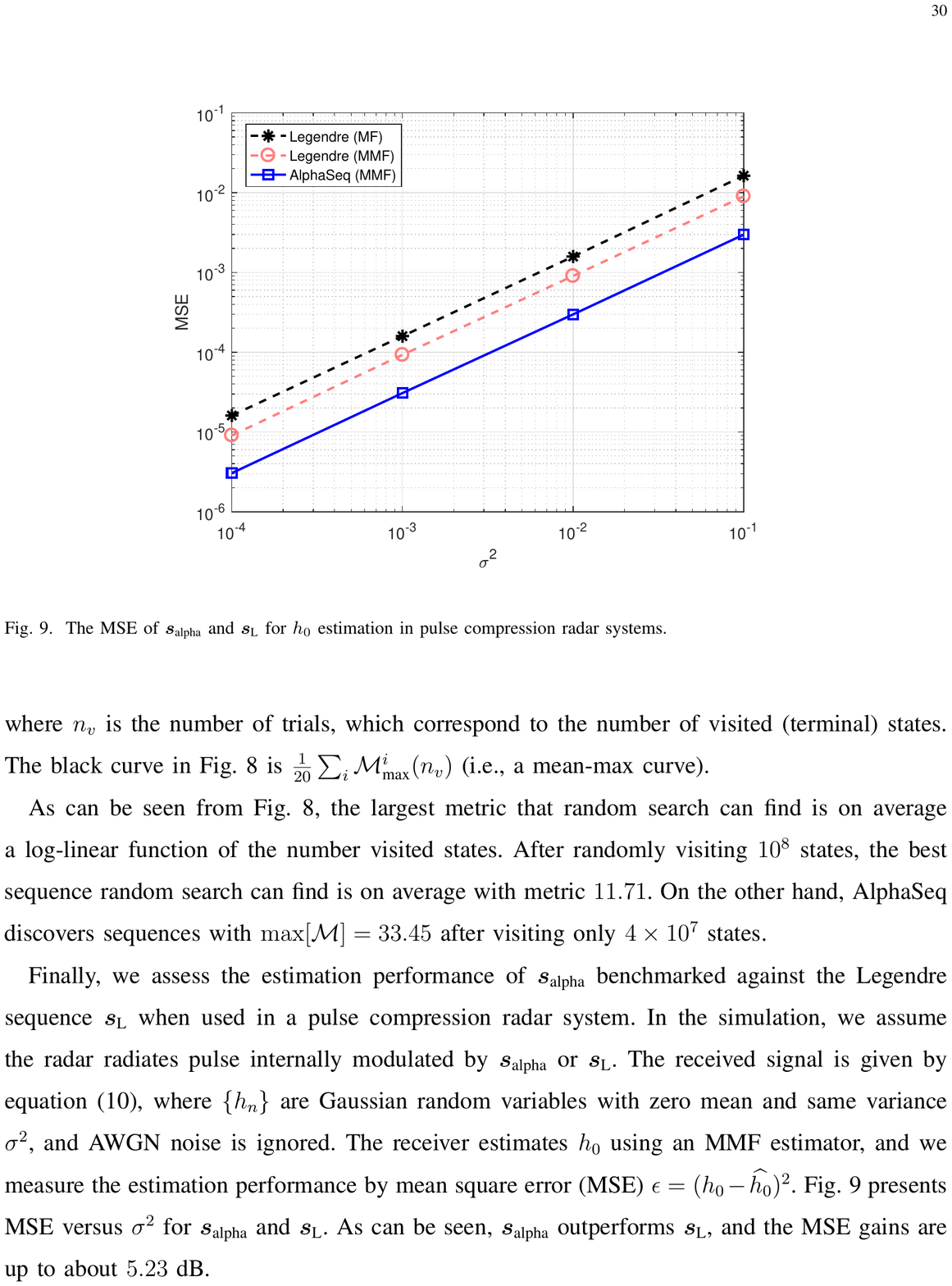}\\
  \caption{ The MSE of $\bm{s}_\textup{alpha}$ and $\bm{s}_\textup{L}$ for $h_0$ estimation in pulse compression radar systems.}
\label{FigPCR3}
\end{figure}

Finally, we assess the estimation performance of $\bm{s}_\textup{alpha}$ benchmarked against the Legendre sequence $\bm{s}_\textup{L}$ when used in a pulse compression radar system.
In the simulation, we assume the radar radiates pulse internally modulated by $\bm{s}_\textup{alpha}$ or $\bm{s}_\textup{L}$.
The received signal is given by equation \eqref{PCR_Signal}, where $\{h_n\}$ are Gaussian random variables with zero mean and same variance $\sigma^2$, and AWGN noise is ignored. The receiver estimates $h_0$ using an MMF estimator, and we measure the estimation performance by mean square error (MSE) $\epsilon=(h_0-\widehat{h_0})^2$.
Fig. \ref{FigPCR3} presents MSE versus $\sigma^2$ for $\bm{s}_\textup{alpha}$ and $\bm{s}_\textup{L}$. As can be seen, $\bm{s}_\textup{alpha}$ outperforms $\bm{s}_\textup{L}$, and the MSE gains are up to about $5.23$ dB.

\section{Conclusion}\label{sec:Conclusion}
This paper has demonstrated the power of deep reinforcement learning (DRL) for sequence discovery.
We believe that sequence discovery by DRL is a good supplement to sequence construction by mathematical tools, especially for problems with complex objectives intractable to mathematical analysis.

Our specific contributions and results are as follows:
\begin{enumerate}
\item We proposed a new DRL-based paradigm, AlphaSeq, to algorithmically discover a set of sequences with desired property. AlphaSeq leverages the DRL framework of AlphaGo to solve a Markov Decision Process (MDP) associated with the sequence discovery problem. The MDP is a symbol-filling game, where a player follows a policy to consecutively fill symbols in the vacant positions of a sequence set. In particular, AlphaSeq treats the intermediate states in the MDP as images, and makes use of deep neural network (DNN) to recognize them.

\item We introduced two new techniques in AlphaSeq to accelerate the training process. The first technique is to allow AlphaSeq to make $\ell$ moves at a time (i.e., filling $\ell$ sequence positions at a time). The choice of $\ell$ is a complexity tradeoff between the MCTS and the DNN. The second technique, dubbed ¡°segmented induction¡±, is to change the reward function progressively to guide AlphaSeq to good sequences in its learning process.

\item We demonstrated the searching capabilities of AlphaSeq in two applications: a) We used AlphaSeq to redicover a set of ideal complementary codes that can zero-force all potentially interferences in multi-carrier CDMA systems. b) We used AlphaSeq to discover new sequences that triple the signal-to-interference ratio -- benchmarked against the well-known Legendre sequence -- of a mismatched filter estimator in pulse compression radar systems. The mean square error (MSE) gains are up to $5.23$ dB for the estimation of radar cross sections.
\end{enumerate}

\appendices
\section{}\label{sec:AppA}
This appendix describes the implementation details of AlphaSeq.
Other than some custom features for our purpose, the general implementation follows AlphaGo Zero \cite{GoZero2017} and AlphaZero \cite{AlphaZero2017}. The source code can be found at GitHub \cite{sourcecode}.

\subsection{MCTS}
MCTS is performed at each intermediate state $s_i$ to determine policy $\bm{\Pi}(s_i)$, and this is achieved by multiple look-ahead simulations along the tree.
In the simulations, more promising vertices are visited frequently, while less promising vertices are visited less frequently.
The problem is how to determine which vertices are more promising and which are less promising in the simulations, i.e., how to evaluate a vertex in MCTS.
In standard MCTS algorithms, this vertex-evaluation is achieved by means of random rollouts.
That is, for a new vertex encountered in each simulation, we run random rollout from this vertex to a leaf vertex such that a reward can be obtained (see \cite{MCTS} for more details). The randomly sampled rewards over all simulations are then used to evaluate a vertex.

In AlphaGo/AlphaSeq, instead of random rollouts, DNN is introduced to evaluate a vertex.
The only two ingredients needed for MCTS are a root vertex $v_0$ and a DNN $\psi_{\theta}$.
First, given the root vertex $v_0$, a search tree can be constructed where each vertex contains $2^\ell$ edges (since there are $2^\ell$ possible moves for each state).
Each edge, denoted by $(v_i, a_j),~i=0,1,2,...,~j=0,1,2, ..., 2^\ell-1$, stores three statistics:
a visit count $N(v_i,a_j)$,
a mean reward $Q(v_i,a_j)$,
and an edge-selection prior probability $P(v_i,a_j)$.
Second, MCTS uses DNN $\psi_{\theta}$ to evaluate each vertex (state).
The input of $\psi_{\theta}$ is $v_i$ and the output is $(\bm{P}, \mathcal{R}')=\psi_{\theta}(v_i)$.
Specifically, each time we feed a vertex $v_i$ into the DNN, it outputs a policy estimation $\bm{P}$ and a reward estimation $\mathcal{R}'$.
Each entry in distribution $\bm{P}$ is exactly the prior probability $P(v_i,a_j)$ for each edge of vertex $v_i$, and $\mathcal{R}'$ will be used for updating the mean reward $Q(v_i,a_j)$, given by \eqref{EquAppA3} later.

MCTS is operated by means of look-ahead simulations.
Specifically, at a root vertex $v_0$, MCTS first initializes a ``visited tree'' (this visited tree is used to record all the vertices visited in the MCTS. It is initialized to have only one root vertex) and runs $q$ simulations on the visited tree.
Each simulation proceeds as follows \cite{GoZero2017}:
\begin{enumerate}
\item \textbf{Select} -{}- all the simulations start from the root vertex $v_0$ and finish when a vertex that has not been seen is encountered for the first time. During a simulation, we always choose the edge that yields a maximum upper confidence bound. Specifically, at each vertex $v_i$, the simulation selects edge $j^*$ to visit, and
\begin{equation}\label{EquAppA1}
j^* \! = \! \arg\max_j \!\left\{ Q(v_i,a_j) \! + \! c_p P(v_i,a_j)\frac{\sqrt{\sum_j\! N(v_i,a_j)}}{1\!+\!N(v_i,a_j)} \right\},\nonumber
\end{equation}
where $c_p$ is a constant controls the tradeoff between exploration and exploitation.
\item \textbf{Expand and Evaluate} -{}- when encountering a previously unseen vertex $v_L$ (for the first simulation, this $v_L$ is in fact $v_0$), the simulation evaluates it using DNN, giving, $(\bm{P}_L, \mathcal{R}'_L)=\psi_{\theta}(v_L)$, where the policy distribution $\bm{P}_L=\{P_L(j):j=0,1,2,...,2^\ell-1\}$. Then, we add this new vertex $v_L$ to the visited tree, and the statistics of $v_L$'s edges are initialized by $N(v_L,a_j)=0$, $Q(v_L,a_j)=0$, and $P(v_L,a_j)=P_L(j)$ for $j=0,1,2,...,2^\ell-1$.
\item \textbf{Backup} -{}-  After adding vertex $v_L$ to the visited tree, the simulation updates all the vertices along the trajectory of encountering $v_L$. Specifically, for each edge $(v_i, a_j)$ on the trajectory (including $v_L$), we update
\begin{eqnarray}\label{EquAppA2}
N(v_i,a_j) = N(v_i,a_j) + 1,
\end{eqnarray}
\begin{eqnarray}\label{EquAppA3}
Q(v_i,a_j) = Q(v_i,a_j) - \frac{Q(v_i,a_j)-\mathcal{R}'_L}{N(v_i,a_j)}.
\end{eqnarray}
\end{enumerate}

After $q$ simulations, MCTS then outputs a move selection probability for root vertex $v_0$ by
\begin{eqnarray}\label{EquAppA4}
\bm{\Pi}(v_0)=\textup{softmax}\left\{ \frac{1}{\tau}\log{N(v_0,a_j)}     \right\}.
\end{eqnarray}
That is, the move selection probability is determined by the visiting counts of the root vertex's edges.
Parameter $\tau$ is a temperature parameter as in AlphaGo Zero \cite{GoZero2017}.
In an episode, we set $\tau=1$ (i.e., the move-selection probability is proportional to the visiting counts of each edge, yielding more exploration) for the first one third time steps and $\tau=10^{-4}$ (deterministically choose the move that has the most visiting counts) for the rest of the time steps.

In the training iteration, when we play games to provide experiences for DNN, Dirichlet noise, i.e., $\textup{Dir}([\alpha_0,\alpha_1,...,\alpha_{2^\ell-1}])$ with positive real parameters $\alpha_0,\alpha_1,...,\alpha_{2^\ell-1}$, is added to the prior probability of root node $v_0$ to guarantee additional exploration.
Thus, these games are called noisy games.
Accordingly, there is noiseless games, in which Dirichlet noise is removed.
Usually, we play noiseless games to evaluate the performance of AlphaSeq with a trained DNN.

\subsection{DNN}
The DNN implemented in AlphaSeq is a deep convolutional network (ConvNets). This ConvNets consists of six convolutional layers together with batch normalization and rectifier nonlinearities, the details of which are shown in Fig. \ref{FigA1}.

\begin{figure}[t]
  \centering
  \includegraphics[width=0.9\columnwidth]{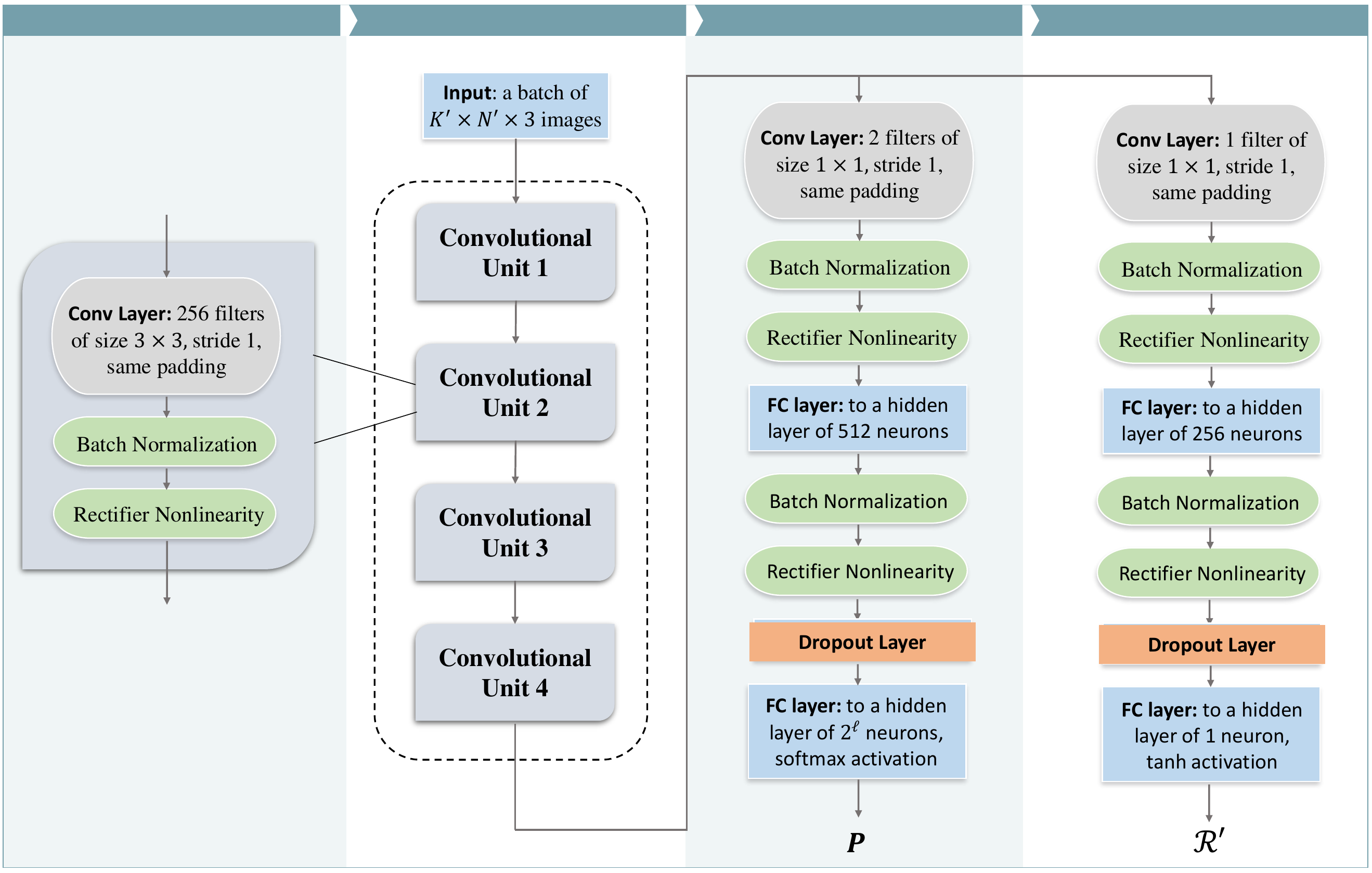}\\
  \caption{The deep ConvNets implemented in AlphaSeq. This ConvNets consists of six convolutional layers together with batch normalization and rectifier nonlinearities.}
\label{FigA1}
\end{figure}

\begin{itemize}
\item \textbf{Input} -{}- The ConvNets takes $K'\times N'\times 3$ image stack as input. For a state $s_i$ (i.e., an $K\times N$ partially-filled sequence-set pattern), we first transform it to a $K'\times N'\times 1$ image (in general we set $K'=\ell$ and $N'=\lceil NK/\ell \rceil$; zero-padding if $K'N'>KN$), and then perform feature extraction to transform it to a $K'\times N'\times 3$ image stack.
\item \textbf{Feature extraction} -{}- Feature extraction is a process to transform a $K'\times N'\times 1$ image to a $K'\times N'\times 3$ image stack comprising $3$ binary feature planes.
The three binary feature planes are constructed as follows.
The first plane, $X_1$, indicates the presentence of `1' in the $K'\times N'\times 1$ image: $X_1(i,j)=1$ if the intersection $(i,j)$ has value `1' in the $K'\times N'\times 1$ image, and $X_1(i,j)=0$ elsewhere.
The second plane, $X_2$, indicates the presentence of `-1' in the $K'\times N'\times 1$ image: $X_2(i,j)=1$ if the intersection $(i,j)$ has value `-1' in the $K'\times N'\times 1$ image, and $X_2(i,j)=0$ elsewhere.
The third plane, $X_3$, indicates the presentence of `0' in the $K'\times N'\times 1$ image: $X_3(i,j)=1$ if the intersection $(i,j)$ has value `0' in the $K'\times N'\times 1$ image, and $X_3(i,j)=0$ elsewhere.
\item \textbf{Output} -{}- For each state $s_i$, DNN will output a policy estimation (i.e., a probability distribution) $\bm{P}(s_i)$ $=$ ($p_0$, $p_1$, ..., $p_{2^\ell-1}$) as the prior probability for the $2^\ell$ edges of $s_i$, and a scalar estimation $\mathcal{R}'\in[-1,1]$ on the expected reward of $s_i$.
\item \textbf{Training} -{}- Every $G$ games, we use the experiences accumulated in the most recent $z\times G$ games
(i.e., $zG\lceil NK/\ell \rceil$ experiences) to update the DNN by stochastic gradient descent. The mini-batch size is set to $64$, and we randomly sample $\lceil zG\lceil NK/\ell \rceil/64\rceil$ mini-batches without replacement from the $zG\lceil NK/\ell \rceil$ experiences to train the ConvNets.
For each mini-batch, the loss function is defined to minimize the summation of mean-squared error and cross-entropy loss \cite{GoZero2017}
\begin{eqnarray}\label{EquAppA5}
\mathcal{L}=(\mathcal{R}-\mathcal{R}')^2 - \bm{\Pi}^T\log{\bm{P}} + c\left \| \theta \right \|^2.
\end{eqnarray}
where the last term is L2 regularization to prevent overfitting. Over the course of training, the learning rate is fixed to $10^{-4}$.
\end{itemize}

\section{}\label{sec:AppB}
\subsection{Supremum of the metric function}
This subsection derives the supremum of $\mathcal{M}(\mathcal{C})$ in \eqref{MCCDMAA_Metric} in Section \ref{sec:MCCDMA}.
Given the definitions of the correlation functions in \eqref{EquIII1}, \eqref{EquIII2}, and \eqref{EquIII3}, we first rewrite \eqref{MCCDMAA_Metric} as follows:
\begin{eqnarray}\label{EquAPPB1}
\mathcal{M}(\mathcal{C})
\!\!\!\!&=&\!\!\!\!
\sum_{j=0}^{J-1}\sum_{v=1}^{N-1}\big|\textup{CAF}_j[v]\big|
+\sum_{j_1=0}^{J-1}\sum_{j_2=j_1+1}^{J-1}\sum_{v=0}^{N-1}\big|\textup{CCF}_{j_1,j_2}[v] \big|
+\sum_{j_1=0}^{J-1}\sum_{j_2=j_1}^{J-1}\sum_{v=1}^{N-1}\big|\textup{FCF}_{j_1,j_2}[v]\big| \nonumber\\
&=&\!\!\!\!
\sum_{j_1=0}^{J-1}\sum_{j_2=j_1}^{J-1}\sum_{v=1}^{N-1}\Big(\big|\textup{CCF}_{j_1,j_2}[v]\big|+\big|\textup{PCF}_{j_1,j_2}[v]\big|\Big)
+\sum_{j_1=0}^{J-1}\sum_{j_2=j_1+1}^{J-1}\big|\textup{CCF}_{j_1,j_2}[0]\big|.
\end{eqnarray}

For the second term in \eqref{EquAPPB1}, we have
\begin{eqnarray}\label{EquAPPB2}
\sum_{j_1=0}^{J-1}\sum_{j_2=j_1+1}^{J-1}\big|\textup{CCF}_{j_1,j_2}[0]\big|\leq\frac{K(K-1)}{2}NM.
\end{eqnarray}

Moreover, the first term in \eqref{EquAPPB1} can be simplified as follows:
\begin{eqnarray}\label{EquAPPB3}
\sum_{j_1=0}^{J-1}\sum_{j_2=j_1}^{J-1}\sum_{v=1}^{N-1}\Big(\big|\textup{CCF}_{j_1,j_2}[v]\big|+\big|\textup{PCF}_{j_1,j_2}[v]\big|\Big)
\!\!\!\!&=&\!\!\!\!
\sum_{j_1=0}^{J-1}\sum_{j_2=j_1}^{J-1}\sum_{v=1}^{N-1}\Big(\big|\alpha[v]+\beta[v]\big|+\big|\alpha[v]-\beta[v]\big| \Big) \nonumber\\
&=&\!\!\!\!
\sum_{j_1=0}^{J-1}\sum_{j_2=j_1}^{J-1}\sum_{v=1}^{N-1}2\max\Big(\big|\alpha[v]\big|,\big|\beta[v]\big| \Big),
\end{eqnarray}
where
\begin{eqnarray}
\alpha[v]=\sum_{m=0}^{M-1}\sum_{n=0}^{N-v-1}c^m_{j_1}[n]c^m_{j_2}[n+v],~~~
\beta[v]=\sum_{m=0}^{M-1}\sum_{n=N-v}^{N-1}c^m_{j_1}[n]c^m_{j_2}[n+v].\nonumber
\end{eqnarray}

Notice that set $\mathcal{C}$ is binary, hence, $\alpha[v]$ and $\beta[v]$ are summations of $(N-v)M$ and $vM$ terms of $1$ or $-1$, respectively. As a result, we have
\begin{eqnarray}\label{EquAPPB4}
\sum_{v=1}^{N-1}2\max\Big(\big|\alpha[v]\big|,\big|\beta[v]\big| \Big)
\!\!\!\!&\leq&\!\!\!\! \sum_{v=1}^{N-1}2\max\Big\{ (N-v)M,vM \Big\} \nonumber\\
&=&\!\!\!\!\left\{
\begin{array}{lcl}
\frac{3N^2-4N+1}{2}M,       &    \!\!\!\!\!\!\!\!   & {\textup{for}~N~\textup{odd},}\\
\frac{3N^2-4N}{2}M,     &     \!\!\!\!\!\!\!\!  & {\textup{for}~N~\textup{even}.}
\end{array} \right.
\end{eqnarray}

Finally, we can bound $\mathcal{M(C)}$ from \eqref{EquAPPB2}, \eqref{EquAPPB3} and \eqref{EquAPPB4} as follows.

When $N$ is odd, it follows directly that
\begin{eqnarray}
\mathcal{M(C)}
\!\leq\!
\frac{K(K\!\!+\!1)}{2}\!\!\times\!\!\frac{3N^2\!\!-\!4N\!\!+\!1}{2}M\!+\!\frac{K(K\!\!-\!\!1)}{2}NM
\!=\!
\frac{(3N^2\!\!+\!\!1)(K^2\!\!+\!\!K)\!-\!2NK(K\!\!+\!3)}{4}M. \nonumber
\end{eqnarray}
When $N$ is even,
\begin{eqnarray}
\mathcal{M(C)}
\!\leq\!
\frac{K(K\!+\!1)}{2}\!\times\!\frac{3N^2\!-\!4N}{2}M\!+\!\frac{K(K\!-\!1)}{2}NM
\!=\!
\frac{3N^2(K^2\!+\!K)\!-\!2NK(K\!+\!3)}{4}M. \nonumber
\end{eqnarray}
These two upper bounds can be achieved by an all-$1$ (or an all-$-1$) sequence set, hence is tight.

In conclusion, we have
\begin{prop}
The supremum of $\mathcal{M(C)}$ is given by
\begin{equation}
\sup\mathcal{M(C)}=\left\{
\begin{array}{lcl}
\frac{(3N^2+1)(K^2+K)-2NK(K+3)}{4}M,       &    \!\!\!\!\!\!\!\!   & {\textup{for}~N~\textup{odd},}\\
\frac{3N^2(K^2+K)-2NK(K+3)}{4}M,     &     \!\!\!\!\!\!\!\!  & {\textup{for}~N~\textup{even}.}
\end{array} \right.\nonumber
\end{equation}
\end{prop}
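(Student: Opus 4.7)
The proof rests on an algebraic reorganization followed by a termwise bound and an explicit witness for tightness. The key observation is already signaled by equation (33) in the excerpt: for each ordered pair $(j_1,j_2)$ with $j_1\le j_2$ and each delay $v\in\{1,\ldots,N-1\}$, define the partial sums
$$\alpha[v]=\sum_{m=0}^{M-1}\sum_{n=0}^{N-v-1}c^m_{j_1}[n]c^m_{j_2}[n+v], \qquad \beta[v]=\sum_{m=0}^{M-1}\sum_{n=N-v}^{N-1}c^m_{j_1}[n]c^m_{j_2}[n+v].$$
Then $\text{CCF}_{j_1,j_2}[v]=\alpha[v]+\beta[v]$ (which specializes to $\text{CAF}_j[v]$ when $j_1=j_2=j$) and $\text{FCF}_{j_1,j_2}[v]=\alpha[v]-\beta[v]$, so the CAF/CCF contribution and the FCF contribution at the same $(j_1,j_2,v)$ combine via the identity $|x+y|+|x-y|=2\max(|x|,|y|)$. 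This is precisely the rewriting already performed in (33), and I would take it as the starting point.

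First I would establish the per-term bound. Since every $c^m_{j}[n]\in\{-1,+1\}$, $\alpha[v]$ is a sum of $(N-v)M$ such terms and $\beta[v]$ is a sum of $vM$ such terms, so $|\alpha[v]|\le(N-v)M$ and $|\beta[v]|\le vM$. Therefore $2\max(|\alpha[v]|,|\beta[v]|)\le 2M\max(N-v,v)$. Summing over $v=1,\ldots,N-1$ reduces to evaluating $S_N:=\sum_{v=1}^{N-1}\max(N-v,v)$, which by splitting at the midpoint gives $S_N=(3N^2-4N+1)/4$ for $N$ odd and $S_N=(3N^2-4N)/4$ for $N$ even; doubling yields exactly the bound (34) claimed in the excerpt. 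The delay-$0$ cross-correlation is bounded trivially by $|\text{CCF}_{j_1,j_2}[0]|\le NM$.

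Next I would assemble the three pieces. There are $K(K+1)/2$ ordered pairs with $j_1\le j_2$ contributing the delay-$v\ge 1$ terms, and $K(K-1)/2$ pairs with $j_1<j_2$ contributing the delay-$0$ term. Plugging in $S_N$ and simplifying gives
$$\mathcal{M}(\mathcal{C})\le \frac{K(K+1)}{2}\cdot 2M\, S_N + \frac{K(K-1)}{2}\,NM,$$
which collapses to the two closed forms stated in the proposition for $N$ odd and $N$ even respectively.

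Finally, to show the bound is attained, I would exhibit the all-ones set $c^m_j[n]\equiv 1$ as witness. For this choice $\alpha[v]=(N-v)M$ and $\beta[v]=vM$ are both nonnegative, so $2\max(|\alpha[v]|,|\beta[v]|)=2M\max(N-v,v)$ holds with equality for every $(j_1,j_2,v)$; likewise $|\text{CCF}_{j_1,j_2}[0]|=NM$. Since all inequalities are simultaneously tight, the sup is achieved and equals the claimed value, so $\sup\mathcal{M}(\mathcal{C})=\max\mathcal{M}(\mathcal{C})$. The only genuine obstacle in the argument is the bookkeeping in evaluating $S_N$ and tracking the parity split; once that elementary sum is computed, the rest is algebraic consolidation and a one-line tightness check.
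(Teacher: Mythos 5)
Your proposal is correct and follows essentially the same route as the paper's own derivation: the same $\alpha[v]$/$\beta[v]$ decomposition with the identity $|x+y|+|x-y|=2\max(|x|,|y|)$, the same termwise bounds $|\alpha[v]|\le (N-v)M$ and $|\beta[v]|\le vM$ with the parity-split evaluation of $\sum_v \max(N-v,v)$, and the same all-ones witness for tightness. Nothing further to add.
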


\subsection{Isomorphic pattern}
When $J=2$, $M=2$, and $N=8$, each complementary code set has $31$ other isomorphic patterns.
For simplicity, let us denote a code set by $([\bm{a_1},\bm{a_2}]^T,[\bm{b_1},\bm{b_2}]^T)^T$, where $\bm{a_1}$ and $\bm{a_2}$ are the two element codes assigned to user A, while $\bm{b_1}$ and $\bm{b_2}$ are the two element codes assigned to user B.
Then, the following code-set patterns are isomorphic.
\begin{enumerate}
\item Row-switching.
\begin{eqnarray}
\left(
\begin{array}{ccl}
\begin{bmatrix}
\begin{smallmatrix}
\bm{a_1}   \\[0.2cm]
\bm{a_2}
\end{smallmatrix}
\end{bmatrix}\\[2mm]
\begin{bmatrix}
\begin{smallmatrix}
\bm{b_1}    \\[0.2cm]
\bm{b_2}
\end{smallmatrix}
\end{bmatrix}
\end{array} \right);~~
\left(
\begin{array}{ccl}
\begin{bmatrix}
\begin{smallmatrix}
\bm{b_1}   \\[0.2cm]
\bm{b_2}
\end{smallmatrix}
\end{bmatrix}\\[2mm]
\begin{bmatrix}
\begin{smallmatrix}
\bm{a_1}    \\[0.2cm]
\bm{a_2}
\end{smallmatrix}
\end{bmatrix}
\end{array} \right);~~
\left(
\begin{array}{ccl}
\begin{bmatrix}
\begin{smallmatrix}
\bm{a_2}   \\[0.2cm]
\bm{a_1}
\end{smallmatrix}
\end{bmatrix}\\[2mm]
\begin{bmatrix}
\begin{smallmatrix}
\bm{b_2}    \\[0.2cm]
\bm{b_1}
\end{smallmatrix}
\end{bmatrix}
\end{array} \right);~~
\left(
\begin{array}{ccl}
\begin{bmatrix}
\begin{smallmatrix}
\bm{b_2}   \\[0.2cm]
\bm{b_1}
\end{smallmatrix}
\end{bmatrix}\\[2mm]
\begin{bmatrix}
\begin{smallmatrix}
\bm{a_2}    \\[0.2cm]
\bm{a_1}
\end{smallmatrix}
\end{bmatrix}
\end{array} \right). \nonumber
\end{eqnarray}
\item Sign-reversing.
\begin{eqnarray}
\left(
\begin{array}{ccl}
\begin{bmatrix}
\begin{smallmatrix}
\bm{a_1}   \\[0.2cm]
\bm{a_2}
\end{smallmatrix}
\end{bmatrix}\\[2mm]
\begin{bmatrix}
\begin{smallmatrix}
\bm{b_1}    \\[0.2cm]
\bm{b_2}
\end{smallmatrix}
\end{bmatrix}
\end{array} \right);~~
\left(
\begin{array}{ccl}
\begin{bmatrix}
\begin{smallmatrix}
-\bm{a_1}   \\[0.2cm]
-\bm{a_2}
\end{smallmatrix}
\end{bmatrix}\\[2mm]
\begin{bmatrix}
\begin{smallmatrix}
\bm{b_1}    \\[0.2cm]
\bm{b_2}
\end{smallmatrix}
\end{bmatrix}
\end{array} \right);~~
\left(
\begin{array}{ccl}
\begin{bmatrix}
\begin{smallmatrix}
\bm{a_1}   \\[0.2cm]
\bm{a_2}
\end{smallmatrix}
\end{bmatrix}\\[2mm]
\begin{bmatrix}
\begin{smallmatrix}
-\bm{b_1}    \\[0.2cm]
-\bm{b_2}
\end{smallmatrix}
\end{bmatrix}
\end{array} \right);~~
\left(
\begin{array}{ccl}
\begin{bmatrix}
\begin{smallmatrix}
-\bm{a_1}   \\[0.2cm]
\bm{a_2}
\end{smallmatrix}
\end{bmatrix}\\[2mm]
\begin{bmatrix}
\begin{smallmatrix}
-\bm{b_1}    \\[0.2cm]
\bm{b_2}
\end{smallmatrix}
\end{bmatrix}
\end{array} \right); \nonumber\\
\left(
\begin{array}{ccl}
\begin{bmatrix}
\begin{smallmatrix}
\bm{a_1}   \\[0.2cm]
-\bm{a_2}
\end{smallmatrix}
\end{bmatrix}\\[2mm]
\begin{bmatrix}
\begin{smallmatrix}
\bm{b_1}    \\[0.2cm]
-\bm{b_2}
\end{smallmatrix}
\end{bmatrix}
\end{array} \right);~~
\left(
\begin{array}{ccl}
\begin{bmatrix}
\begin{smallmatrix}
\bm{a_1}   \\[0.2cm]
-\bm{a_2}
\end{smallmatrix}
\end{bmatrix}\\[2mm]
\begin{bmatrix}
\begin{smallmatrix}
-\bm{b_1}    \\[0.2cm]
\bm{b_2}
\end{smallmatrix}
\end{bmatrix}
\end{array} \right);~~
\left(
\begin{array}{ccl}
\begin{bmatrix}
\begin{smallmatrix}
-\bm{a_1}   \\[0.2cm]
\bm{a_2}
\end{smallmatrix}
\end{bmatrix}\\[2mm]
\begin{bmatrix}
\begin{smallmatrix}
\bm{b_1}    \\[0.2cm]
-\bm{b_2}
\end{smallmatrix}
\end{bmatrix}
\end{array} \right);~~
\left(
\begin{array}{ccl}
\begin{bmatrix}
\begin{smallmatrix}
-\bm{a_1}   \\[0.2cm]
-\bm{a_2}
\end{smallmatrix}
\end{bmatrix}\\[2mm]
\begin{bmatrix}
\begin{smallmatrix}
-\bm{b_1}    \\[0.2cm]
-\bm{b_2}
\end{smallmatrix}
\end{bmatrix}
\end{array} \right).\nonumber
\end{eqnarray}
Notice that $([\bm{a_1},-\bm{a_2}]^T,[-\bm{b_1},\bm{b_2}]^T)^T$ and $([-\bm{a_1},\bm{a_2}]^T,[\bm{b_1},-\bm{b_2}]^T)^T$ are isomorphic to $([\bm{a_1},\bm{a_2}]^T,[\bm{b_1},\bm{b_2}]^T)^T$. This is not straightforward at first sight. However, it can be seen that $([\bm{a_1},\bm{a_2}]^T,[\bm{b_1},\bm{b_2}]^T)^T$ is isomorphic to $([-\bm{a_1},\bm{a_2}]^T,[-\bm{b_1},\bm{b_2}]^T)^T$, while $([-\bm{a_1},\bm{a_2}]^T,[-\bm{b_1},\bm{b_2}]^T)^T$ is isomorphic to $([\bm{a_1},-\bm{a_2}]^T,[-\bm{b_1},\bm{b_2}]^T)^T$ and $([-\bm{a_1},\bm{a_2}]^T,[\bm{b_1},-\bm{b_2}]^T)^T$. Thus, $([\bm{a_1},\bm{a_2}]^T,[\bm{b_1},\bm{b_2}]^T)^T$ is also isomorphic to $([\bm{a_1},-\bm{a_2}]^T,[-\bm{b_1},\bm{b_2}]^T)^T$ and $([-\bm{a_1},\bm{a_2}]^T,[\bm{b_1},-\bm{b_2}]^T)^T$.
\end{enumerate}

Overall, each sequence-set pattern has $4\times8=32$ isomorphic patterns.

\section{}\label{sec:AppC}
This appendix analyzes the structure of the metric function $\mathcal{M}(\bm{s})$ in \eqref{PCRmetric} in Section \ref{sec:PCR}, and derives the upper and lower bounds for $\mathcal{M}(\bm{s})$. These two bounds are first derived mathematically. Then, we conjecture a tight upper bound for $\mathcal{M}(\bm{s})$. This conjectured bound can be achieved by a Barker sequence of length $13$.

\begin{lem}\label{LemmaC1}
Matrix $\bm{R}$ and $\bm{R}^{-1}$ are positive definite matrices.
\end{lem}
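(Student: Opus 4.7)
The plan is to establish positive definiteness of $\bm{R}$ by a direct computation of the quadratic form, and then deduce the same for $\bm{R}^{-1}$ from standard spectral considerations. First, I would observe that each summand in \eqref{matrixR} is a rank-one outer product $(\bm{J}_n\bm{s})(\bm{J}_n\bm{s})^T$, so $\bm{R}$ is symmetric. For any $\bm{v}\in\mathbb{R}^N$, distributing the sum gives
$$\bm{v}^T\bm{R}\bm{v}=\sum_{n=1-N,\,n\neq 0}^{N-1}(\bm{v}^T\bm{J}_n\bm{s})^2 \geq 0,$$
so $\bm{R}$ is at least positive semi-definite.

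The crux is upgrading this to strict positive definiteness, i.e.\ showing that $\bm{v}^T\bm{R}\bm{v}=0$ forces $\bm{v}=\bm{0}$. From the sum-of-squares expression, the vanishing of the quadratic form forces $\bm{v}$ to be orthogonal to every shifted sequence $\bm{J}_n\bm{s}$ with $n\neq 0$. I would then show that these shifted sequences span $\mathbb{R}^N$, from which $\bm{v}=\bm{0}$ follows. Concretely, the positive shifts $\bm{J}_{N-1}\bm{s},\bm{J}_{N-2}\bm{s},\ldots,\bm{J}_1\bm{s}$, viewed as columns, form a triangular system whose diagonal entries are all equal to $s[N-1]$; since $\bm{s}$ is $\pm 1$-valued, $s[N-1]=\pm 1\neq 0$, so this system is non-singular and the $N-1$ positive shifts span the hyperplane $\{u_N=0\}\subset\mathbb{R}^N$. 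A single negative shift such as $\bm{J}_{-1}\bm{s}$ has last coordinate equal to $s[N-2]=\pm 1\neq 0$ and therefore lies outside that hyperplane, completing a basis of $\mathbb{R}^N$.

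Finally, since $\bm{R}$ is symmetric and positive definite, all its eigenvalues are strictly positive, so $\bm{R}^{-1}$ exists and is symmetric with eigenvalues equal to the reciprocals, which are again strictly positive; hence $\bm{R}^{-1}$ is positive definite as well.

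The one non-routine step is the spanning argument, and the essential input there is that every entry of $\bm{s}$ is non-zero (a consequence of the $\pm 1$ constraint). Without the binary assumption the triangular system could be singular and the argument would fail, so this is where the hypothesis on $\bm{s}$ is really used.
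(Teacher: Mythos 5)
Your proof is correct, and it shares the paper's opening move: writing $\bm{v}^T\bm{R}\bm{v}=\sum_{n\neq 0}(\bm{v}^T\bm{J}_n\bm{s})^2\geq 0$ to get positive semi-definiteness, then passing to $\bm{R}^{-1}$ via reciprocal eigenvalues. Where you genuinely diverge is at the crux, the upgrade from semi-definite to definite: the paper does not prove non-singularity of $\bm{R}$ at all, but simply cites an external reference for the fact that $\bm{R}$ is guaranteed to be non-singular, and then concludes that a PSD matrix with no zero eigenvalue is positive definite. You instead supply a self-contained spanning argument — the $N-1$ positive shifts $\bm{J}_{N-1}\bm{s},\ldots,\bm{J}_1\bm{s}$ form a triangular system with diagonal entries $s[N-1]=\pm 1$, hence span the hyperplane of vectors with vanishing last coordinate, and one negative shift escapes that hyperplane — which is sound and correctly isolates where the $\pm 1$ (more precisely, nonzero-entry) hypothesis on $\bm{s}$ enters. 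Your version is more informative and makes the lemma self-contained; it is also consonant with the triangularity reasoning the paper itself uses later in the same appendix when it asserts $|\det(\bm{C}_1)|=1$ for a submatrix built from the same shifted columns. The paper's version buys brevity at the cost of an external dependency for the only non-trivial step.
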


\begin{proof}
By equation \eqref{matrixR} in Section \ref{sec:PCR}, $\bm{R}$ is the summation of $2N-2$ symmetric matrices. Thus, $\bm{R}$ is symmetric, and hence $\bm{R}^{-1}$ is also symmetric ($\bm{R}$ is guaranteed to be non-singular \cite{misMatched2}).

A symmetric matrix $\bm{A}$ is said to be positive definite (semi-definite) if
1) the scalar $\bm{x^T Ax}$ is positive (nonnegative) for any non-zero column vector $\bm{x}$;
or 2) all the eigenvalues of $\bm{A}$ are positive (nonnegative).

Given \eqref{matrixR}, we have
\begin{eqnarray}\label{EquAppC1}
\bm{x^T Rx} =  \sum_{n=1-N,n\neq 0}^{N-1}\bm{x}^T\bm{J}_n \bm{s}\bm{s}^T\bm{J}_n^T \bm{x}\geq 0,
\end{eqnarray}
where the inequality follows since for each integer $n\in[1-N,N-1]$ and $n\neq 0$, $\bm{J}_n \bm{s}\bm{s}^T\bm{J}_n^T$ is positive semi-definite (In fact, for each individual  $n$, $\bm{J}_n \bm{s}\bm{s}^T\bm{J}_n^T$ has $N-1$ zero eigenvalues and one positive eigenvalue being $N-|n|$).

Given \eqref{EquAppC1}, $\bm{R}$ is positive semi-definite.
Moreover, $\bm{R}$ has no zero eigenvalue since $\bm{R}$ is nonsingular.
As a result, all the eigenvalues of $\bm{R}$ are positive, and hence $\bm{R}$ is guaranteed to be positive definite.
Further, all the eigenvalues of $\bm{R}^{-1}$ are also positive, hence $\bm{R}^{-1}$ is also positive definite.
\end{proof}

\begin{lem}[Weyl's inequalities \cite{Tao2001}]\label{LemmaC3}
Given two $N\times N$ Hermitian matrices $\bm{A}$ and $\bm{B}$, and their eigenvalues $\lambda_1\geq\lambda_2\geq...\geq\lambda_N$, $\mu_1\geq\mu_2\geq...\geq\mu_N$, respectively.
For another matrix $\bm{C}=\bm{A}+\bm{B}$, if we denote the eigenvalues of $\bm{C}$ by $v_1\geq v_2\geq...\geq v_N$. Then,
\begin{eqnarray}
\label{EquAppC2}
v_{i+j+1}\!\!\!& \leq &\!\!\! \lambda_{i+1}+\mu_{j+1}, \nonumber\\
\label{EquAppC3}
v_{n-i-j}\!\!\!& \geq &\!\!\! \lambda_{n-i}+\mu_{n-j}. \nonumber
\end{eqnarray}
\end{lem}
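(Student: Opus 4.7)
The plan is to derive Weyl's inequalities from the Courant-Fischer min-max characterization of the eigenvalues of a Hermitian matrix. Recall the two dual forms: for Hermitian $\bm{M}$ of order $N$ with eigenvalues $m_1 \geq \cdots \geq m_N$,
\begin{equation*}
m_k = \min_{\dim W = N-k+1}~\max_{\bm{x}\in W,\,\|\bm{x}\|=1} \bm{x}^*\bm{M}\bm{x} = \max_{\dim W = k}~\min_{\bm{x}\in W,\,\|\bm{x}\|=1} \bm{x}^*\bm{M}\bm{x}.
\end{equation*}
Beyond this, only the additive identity $\bm{x}^*\bm{C}\bm{x} = \bm{x}^*\bm{A}\bm{x} + \bm{x}^*\bm{B}\bm{x}$ and the elementary subspace-dimension inequality $\dim(V\cap W) \geq \dim V + \dim W - N$ are needed.

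For the first inequality $v_{i+j+1} \leq \lambda_{i+1} + \mu_{j+1}$, I would fix orthonormal eigenbases of $\bm{A}$ and $\bm{B}$ and let $V_A$ be the span of the eigenvectors of $\bm{A}$ associated with $\lambda_{i+1},\dots,\lambda_N$ (dimension $N-i$) and $V_B$ the span of the eigenvectors of $\bm{B}$ associated with $\mu_{j+1},\dots,\mu_N$ (dimension $N-j$). Then $\dim(V_A\cap V_B) \geq N-i-j$. On any unit $\bm{x}\in V_A\cap V_B$ one has $\bm{x}^*\bm{A}\bm{x} \leq \lambda_{i+1}$ and $\bm{x}^*\bm{B}\bm{x} \leq \mu_{j+1}$, hence $\bm{x}^*\bm{C}\bm{x} \leq \lambda_{i+1}+\mu_{j+1}$. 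Applying the min-max form to $\bm{C}$ at index $k=i+j+1$, which requires a subspace of dimension $N-k+1 = N-i-j$ (chosen inside $V_A\cap V_B$), yields the bound.

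For the second inequality $v_{N-i-j} \geq \lambda_{N-i} + \mu_{N-j}$, the dual argument uses $U_A$, $U_B$ spanned by the top $N-i$ and $N-j$ eigenvectors of $\bm{A}$ and $\bm{B}$, respectively; again $\dim(U_A\cap U_B) \geq N-i-j$, and any unit $\bm{x}$ in this intersection satisfies $\bm{x}^*\bm{A}\bm{x} \geq \lambda_{N-i}$ and $\bm{x}^*\bm{B}\bm{x} \geq \mu_{N-j}$, so $\bm{x}^*\bm{C}\bm{x} \geq \lambda_{N-i}+\mu_{N-j}$. The max-min form of Courant-Fischer with $k=N-i-j$ then closes the argument. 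Alternatively, this inequality can be deduced from the first one by applying it to $-\bm{A}$ and $-\bm{B}$, whose sorted eigenvalues are the negatives of those of $\bm{A}$ and $\bm{B}$ in reverse order, and relabeling indices.

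The only real obstacle is index bookkeeping: verifying that the dimension $N-i-j$ produced by the intersection inequality matches exactly the dimension $N-k+1$ required by Courant-Fischer for the target eigenvalue index $k$ (namely $k=i+j+1$ in the upper bound and $k=N-i-j$ in the lower bound). This alignment is the combinatorial coincidence that makes the proof one-line once the setup is in place; implicitly it also requires the admissibility conditions $0 \leq i,j$ and $i+j+1 \leq N$ for the indexed eigenvalues to exist. Apart from that, everything reduces to standard linear algebra.
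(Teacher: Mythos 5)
Your proof is correct: the Courant--Fischer min-max argument with the subspace-intersection dimension count is the standard derivation of Weyl's inequalities, and both directions (including the reduction of the second inequality to the first via $-\bm{A}$, $-\bm{B}$) are handled properly, with the right attention to the index bookkeeping and admissibility conditions. Note that the paper itself supplies no proof of this lemma --- it is imported verbatim from the cited reference --- so there is no in-paper argument to compare against; you have also implicitly (and correctly) read the lowercase $n$ in the second inequality as $N$.
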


\begin{lem}[Singular value bound \cite{SingularBoundUpper}]\label{LemmaC4}
Let $\bm{A}$ be an $N\times M$ complex matrix, then its singular values are bounded by
\begin{eqnarray}\label{EquAppC4}
\sigma(\bm{A}) \leq \max_i{\sum_{j}|a_{i,j}|\sum_k{|a_{k,j}|}}.
\end{eqnarray}
where $i=0,1,...,N-1$, $j=0,1,...,M-1$, and $k=0,1,...,N-1$.
\end{lem}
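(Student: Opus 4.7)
The plan is to convert the stated singular-value bound into a bound on the spectral radius of $\bm{A}\bm{A}^*$, and then dominate that spectral radius by an induced matrix norm. I would begin from the SVD-level identity $\sigma_{\max}^2(\bm{A}) = \rho(\bm{A}\bm{A}^*)$, where $\rho(\cdot)$ denotes the spectral radius. Combined with the standard fact that the spectral radius of any square matrix is upper bounded by every induced matrix norm, and in particular by the maximum absolute row sum $\|\cdot\|_\infty$, the whole task reduces to bounding $\|\bm{A}\bm{A}^*\|_\infty$.

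The second step is a routine index manipulation. I would write $(\bm{A}\bm{A}^*)_{i,j} = \sum_k a_{i,k}\bar{a}_{j,k}$, take absolute values via the triangle inequality, and interchange the order of summation to obtain
\begin{eqnarray}
\|\bm{A}\bm{A}^*\|_\infty \;=\; \max_i \sum_j \Bigl|\sum_k a_{i,k}\bar{a}_{j,k}\Bigr| \;\leq\; \max_i \sum_k |a_{i,k}| \sum_j |a_{j,k}|. \nonumber
\end{eqnarray}
Relabelling the inner dummy indices $k\leftrightarrow j$ inside the outer maximum yields precisely $\max_i \sum_j |a_{i,j}| \sum_k |a_{k,j}|$, matching the right-hand side of the claimed bound. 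Chaining the three inequalities $\sigma_{\max}^2(\bm{A}) = \rho(\bm{A}\bm{A}^*) \leq \|\bm{A}\bm{A}^*\|_\infty \leq \max_i \sum_j |a_{i,j}| \sum_k |a_{k,j}|$ closes the argument.

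The main obstacle I anticipate is interpretive rather than technical: as written the lemma places $\sigma(\bm{A})$ (not $\sigma^2(\bm{A})$) against the right-hand side, so I would want to cross-check the convention of the cited source \cite{SingularBoundUpper} to confirm whether the bound is actually on the squared largest singular value, or whether a square root has been absorbed implicitly into the notation. Beyond that, the derivation is pure index chasing together with the two elementary ingredients above -- the SVD characterisation of $\sigma_{\max}$ and the spectral-radius-versus-induced-norm inequality -- and no heavier machinery from matrix analysis should be needed. Note that handling complex entries is automatic because all manipulations only use $|a_{i,k}\bar{a}_{j,k}| = |a_{i,k}|\,|a_{j,k}|$ and the triangle inequality.
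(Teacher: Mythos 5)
The paper does not actually prove this lemma --- it is imported verbatim from \cite{SingularBoundUpper} --- so there is no in-paper argument to compare against; your derivation is the natural one (it is essentially the weighted Schur test: bound $\rho(\bm{A}\bm{A}^*)$ by $\|\bm{A}\bm{A}^*\|_\infty$ and expand), and the index manipulation is carried out correctly. The one substantive point is the $\sigma$ versus $\sigma^2$ discrepancy you flag at the end, and you are right to be suspicious: what your chain of inequalities establishes is $\sigma_{\max}^2(\bm{A}) \leq \max_i \sum_j |a_{i,j}| \sum_k |a_{k,j}|$, and that is in fact the correct form of the result. As literally printed, with $\sigma(\bm{A})$ on the left, the lemma is false in general --- take the $1\times 1$ matrix $\bm{A} = [1/2]$, for which $\sigma = 1/2$ but the right-hand side is $1/4$ --- so no proof of the statement as written can exist, and the missing square is an error in the statement rather than a gap in your argument.

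It is worth noting how this propagates into the paper's Appendix \ref{sec:AppC}. There the lemma is applied to the $N\times(2N-2)$ matrix $\bm{C}$ of shifted codes to conclude $\sigma_{\max}(\bm{C}) \leq \tfrac{3}{4}N^2$ in \eqref{EquAppC11}, which is then squared to bound $\lambda_{\max}(\bm{R})$ and obtain $\mathcal{M}(\bm{s}) \geq 16/(9N^3)$ in \eqref{EquAppC12}. Because the right-hand side $\tfrac{3}{4}N^2$ exceeds $1$, the unsquared inequality does follow from the squared one in this particular application, so the paper's lower bound remains valid; but if one uses the correct form $\sigma_{\max}^2(\bm{C}) \leq \tfrac{3}{4}N^2$ directly, the same argument yields the considerably stronger bound $\mathcal{M}(\bm{s}) \geq 4/(3N)$. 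So your proof not only closes the lemma (in its corrected form) but also shows the paper's downstream constant is loose.
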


\begin{lem}[Singular value bound \cite{SingularBound}]\label{LemmaC5}
Let $\bm{A}$ be an $N\times N$ complex matrix, its singular value is bounded by
\begin{eqnarray}\label{EquAppC5}
\sigma(\bm{A}) \geq \frac{|\textup{det}(\bm{A})|}{2^{N/2-1}\left \| \bm{A} \right \|_\textup{F}}.
\end{eqnarray}
\end{lem}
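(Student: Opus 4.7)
The plan is to recast both sides of the inequality in terms of the singular values of $\bm{A}$ via the standard identities, and then reduce the claim to a scalar inequality that can be attacked with AM--GM and Cauchy--Schwarz type tools. Let $\sigma_1 \geq \sigma_2 \geq \cdots \geq \sigma_N \geq 0$ denote the singular values of $\bm{A}$. Then $|\det(\bm{A})| = \prod_{i=1}^N \sigma_i$ and $\|\bm{A}\|_F^2 = \sum_{i=1}^N \sigma_i^2$. Interpreting $\sigma(\bm{A})$ as the smallest singular value $\sigma_N$ (the inequality is trivial when $\det(\bm{A})=0$, so one may assume $\sigma_N>0$), dividing both sides by $\sigma_N$ reduces the claim to the equivalent statement
\begin{eqnarray}
\prod_{i=1}^{N-1}\sigma_i \leq 2^{N/2-1}\|\bm{A}\|_F. \nonumber
\end{eqnarray}

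My first route would be to apply the arithmetic--geometric mean inequality to $\{\sigma_1^2,\ldots,\sigma_{N-1}^2\}$, bounding $\prod_{i=1}^{N-1}\sigma_i$ by a power of the partial sum $\sum_{i=1}^{N-1}\sigma_i^2 \leq \|\bm{A}\|_F^2$. An alternative is to invoke Hadamard's inequality on the rows of $\bm{A}$, yielding $|\det(\bm{A})| \leq \prod_{i=1}^N \|\bm{A}_{i,\cdot}\|_2$, and then control each row norm by Cauchy--Schwarz against $\|\bm{A}\|_F$. A third option is an inductive deflation: orthogonally reduce $\bm{A}$ one dimension at a time, tracking how $\prod \sigma_i$ and $\|\bm{A}\|_F$ evolve, and letting a Cauchy--Schwarz pairing of consecutive singular values contribute the factor $\sqrt{2}$ per step, accumulating to $2^{N/2-1}$ after the appropriate number of reductions.

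The hard part will be pinning down the precise constant $2^{N/2-1}$. A naive AM--GM estimate yields a constant of order $(N-1)^{-(N-1)/2}$, whose dependence on $N$ differs qualitatively from $2^{N/2-1}$, so AM--GM alone is not sharp enough. The proof must therefore exploit finer structure --- most plausibly a pairwise grouping of singular values combined with a Cauchy--Schwarz step of the form $(\sigma_i\sigma_j)^2 \leq \tfrac{1}{4}(\sigma_i^2+\sigma_j^2)^2$ that introduces one $\sqrt{2}$ per pair, or the inductive deflation sketched above. Since the lemma is cited directly from \cite{SingularBound}, I would consult that reference to extract the exact pairing/induction that produces the stated constant, rather than attempt to rederive it from scratch; the role of this lemma in the paper is only to provide a lower bound used subsequently in the analysis of $\bm{R}^{-1}$, so the particular constant is less important than the overall scaling.
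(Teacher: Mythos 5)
First, there is nothing in the paper to compare your argument against: Lemma \ref{LemmaC5} is imported verbatim from the cited reference and used as a black box, with no proof given. Your opening reduction --- writing $|\det(\bm{A})|=\prod_{i}\sigma_i$ and $\|\bm{A}\|_{\textup{F}}^2=\sum_i\sigma_i^2$, interpreting $\sigma(\bm{A})$ as $\sigma_N$, and dividing through to obtain the equivalent claim $\prod_{i=1}^{N-1}\sigma_i\le 2^{N/2-1}\|\bm{A}\|_{\textup{F}}$ --- is the right first move, and it is precisely this reduction that should have stopped you: the left-hand side is homogeneous of degree $N-1$ under $\bm{A}\mapsto c\bm{A}$ while the right-hand side is homogeneous of degree $1$, so for $N\ge 3$ no constant whatsoever can make the inequality hold for all matrices. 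Concretely, for $\bm{A}=c\bm{I}_3$ the lemma asserts $c\ge c^3/(\sqrt{2}\cdot c\sqrt{3})=c^2/\sqrt{6}$, which fails once $c>\sqrt{6}$. The statement as printed is therefore false in general, and none of your three proposed routes (AM--GM, Hadamard, inductive deflation) can succeed --- not because the constant $2^{N/2-1}$ is delicate, but because the powers of $\|\bm{A}\|_{\textup{F}}$ on the two sides do not match.

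Relatedly, what you dismiss as ``AM--GM is not sharp enough'' is in fact AM--GM producing the \emph{correct} theorem: $\prod_{i=1}^{N-1}\sigma_i\le\big(\|\bm{A}\|_{\textup{F}}^2/(N-1)\big)^{(N-1)/2}$, hence $\sigma_{\min}(\bm{A})\ge|\det(\bm{A})|\big((N-1)/\|\bm{A}\|_{\textup{F}}^2\big)^{(N-1)/2}$, which is scale-invariant and is, up to normalization, the form that the standard smallest-singular-value bounds in the literature actually take; the version displayed in the lemma appears to have collapsed a factor of the type $\|\bm{A}\|_{\textup{F}}^{N-1}$ down to a single $\|\bm{A}\|_{\textup{F}}$. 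Deferring to the cited reference is reasonable for a quoted lemma, but a complete review here should flag that the displayed inequality cannot be literally correct as a statement about arbitrary $N\times N$ complex matrices, and should either restore the correct exponent or verify that the specific matrix $\bm{C_1}$ to which the bound is subsequently applied satisfies whatever normalization the original source assumes.
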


\begin{prop}
A lower bound and an upper bounds of $\mathcal{M}(\bm{s})$ in equation \eqref{PCRmetric} are given by
\begin{eqnarray}
\label{EquAppC6}
\frac{16}{9N^3} \leq \mathcal{M}(\bm{s}) \leq N^3 2^{N-4}. \nonumber
\end{eqnarray}
\end{prop}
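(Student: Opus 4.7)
My plan is to reduce the two-sided bound on $\mathcal{M}(\bm{s}) = \bm{s}^T \bm{R}^{-1} \bm{s}$ to bounds on the extreme eigenvalues of $\bm{R}$. Because every entry of $\bm{s}$ lies in $\{+1,-1\}$, $\bm{s}^T \bm{s} = N$, and because Lemma \ref{LemmaC1} tells us $\bm{R}^{-1}$ is positive definite, the Rayleigh-quotient inequality yields
\begin{equation}
\frac{N}{\lambda_{\max}(\bm{R})} \;\leq\; \mathcal{M}(\bm{s}) \;\leq\; \frac{N}{\lambda_{\min}(\bm{R})}.
\end{equation}
It therefore suffices to upper-bound $\lambda_{\max}(\bm{R})$ and lower-bound $\lambda_{\min}(\bm{R})$. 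Before tackling either one, I would unpack the entries of $\bm{R}$: a direct expansion of \eqref{matrixR} gives $r_{ij} = C(|i-j|) - s_i s_j$, where $C(d) = \sum_{k=0}^{N-1-d} s_k s_{k+d}$ is the aperiodic autocorrelation of $\bm{s}$ at lag $d$. In particular $\bm{R}$ has integer entries, $r_{ii} = N-1$, and $|r_{ij}| \leq N - 1 - |i-j|$ for $i\neq j$.

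For the lower bound $\mathcal{M}(\bm{s}) \geq 16/(9N^3)$, I would apply Lemma \ref{LemmaC4} to $\bm{R}$, noting $\lambda_{\max}(\bm{R}) = \sigma_{\max}(\bm{R})$ since $\bm{R}$ is symmetric positive definite. The entrywise bound above shows that the $i$-th row's absolute sum satisfies
\begin{equation}
\sum_j |r_{ij}| \;\leq\; (N-1) + \sum_{d=1}^{i}(N-1-d) + \sum_{d=1}^{N-1-i}(N-1-d),
\end{equation}
which is maximized near the middle row and is at most $(N-1)(3N-1)/4 \leq 3N^2/4$. By symmetry the same bound controls every column sum, so plugging the product $(3N^2/4)\cdot(3N^2/4)$ into Lemma \ref{LemmaC4} gives $\lambda_{\max}(\bm{R}) \leq 9N^4/16$. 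Substituting this into the Rayleigh inequality above produces exactly $\mathcal{M}(\bm{s}) \geq 16/(9N^3)$.

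For the upper bound $\mathcal{M}(\bm{s}) \leq N^3 \cdot 2^{N-4}$, I would invoke Lemma \ref{LemmaC5}:
\begin{equation}
\lambda_{\min}(\bm{R}) \;=\; \sigma_{\min}(\bm{R}) \;\geq\; \frac{|\det \bm{R}|}{2^{N/2-1}\,\|\bm{R}\|_F}.
\end{equation}
Since $\bm{s}$ is $\pm 1$-valued, $\bm{R}$ is an integer matrix; since it is positive definite, $\det \bm{R}$ is a positive integer, hence $|\det \bm{R}| \geq 1$. A crude entrywise bound $|r_{ij}| \leq N$ gives $\|\bm{R}\|_F \leq N^2$, so $\lambda_{\min}(\bm{R}) \geq 1/(N^2 \cdot 2^{N/2-1})$, and the Rayleigh inequality above yields $\mathcal{M}(\bm{s}) \leq N^3 \cdot 2^{N/2-1}$. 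Since $N/2-1 \leq N-4$ whenever $N \geq 6$, this is at most $N^3 \cdot 2^{N-4}$; the handful of small-$N$ cases can be verified by direct computation.

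The main obstacle I foresee is the careful derivation of the identity $r_{ij} = C(|i-j|) - s_i s_j$ from the shift structure of $\bm{J}_n$: the zero-padded left and right shifts produce two truncated correlation sums whose union is the full aperiodic correlation with a single term at $k = i$ missing, and this bookkeeping must be done carefully because the sharp constant $16/9$ in the lower bound depends on the precise $(N-1-d)$ entry estimate that follows from it. Once that identity is in hand, the row-sum computation is routine, and the upper bound via Lemma \ref{LemmaC5} is just a matter of picking an entrywise estimate loose enough to be clean yet tight enough to be absorbed into the factor $2^{N-4}$.
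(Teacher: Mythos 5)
Your proof is correct and shares the paper's skeleton -- the Rayleigh-quotient reduction $N/\lambda_{\max}(\bm{R}) \leq \mathcal{M}(\bm{s}) \leq N/\lambda_{\min}(\bm{R})$ via Lemma \ref{LemmaC1} -- but both halves are then executed differently. For the lower bound, the paper never touches the entries of $\bm{R}$: it factorizes $\bm{R}=\bm{C}\bm{C}^T$ with $\bm{C}$ the $N\times(2N-2)$ matrix of shifted codes $\bm{J}_n\bm{s}$, applies Lemma \ref{LemmaC4} to $\bm{C}$ to get $\sigma_{\max}(\bm{C})\leq \tfrac{3}{4}N^2$, and squares. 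You instead derive the explicit entry formula $r_{ij}=C(|i-j|)-s_is_j$ (which is correct, and your row-sum computation $(N-1)(3N-1)/4\leq 3N^2/4$ checks out) and apply Lemma \ref{LemmaC4} directly to $\bm{R}$; both routes land on $\lambda_{\max}(\bm{R})\leq 9N^4/16$ and hence the same constant $16/9$. The more substantial divergence is in the upper bound: the paper partitions $\bm{C}=[\bm{C_1}~\bm{C_2}]$ so that $\bm{C_1}$ is triangular with $|\det\bm{C_1}|=1$, invokes Weyl's inequality (Lemma \ref{LemmaC3}) to discard $\bm{C_2}\bm{C_2}^T$, and only then applies Lemma \ref{LemmaC5} to $\bm{C_1}$. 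Your observation that $\bm{R}$ is an integer positive-definite matrix, so $\det\bm{R}$ is a positive integer and $|\det\bm{R}|\geq 1$, lets you apply Lemma \ref{LemmaC5} to $\bm{R}$ itself with $\|\bm{R}\|_F\leq N^2$; this is shorter, dispenses with Weyl's inequality and the delicate choice of $\bm{C_1}$ altogether, and actually yields the stronger bound $N^3 2^{N/2-1}$, which dominates the stated $N^3 2^{N-4}$ once $N\geq 6$ (the small cases you defer to direct checking are indeed finite and routine). The only external input you rely on that the paper also relies on is the nonsingularity of $\bm{R}$, which both arguments inherit from Lemma \ref{LemmaC1}, so there is no hidden gap.
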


\begin{proof}
Let us start from
\begin{eqnarray}\label{EquAppC8}
\lambda_\textup{min}(\bm{R}^{-1})\bm{s}^T\bm{s} \leq \bm{s}^T\bm{R}^{-1}\bm{s} \leq \lambda_\textup{max}(\bm{R}^{-1}) \bm{s}^T\bm{s},
\end{eqnarray}
where $\lambda_\textup{min}(\bm{R}^{-1})$ and $\lambda_\textup{max}(\bm{R}^{-1})$ are the minimum and maximum eigenvalues of $\bm{R}^{-1}$, and $\bm{s}^T\bm{s}=N$.
Moreover, given the fact that $\bm{R}$ and $\bm{R}^{-1}$ are positive definite (Lemma \ref{LemmaC1}), we have $\lambda_\textup{min}(\bm{R}^{-1})=1/\lambda_\textup{max}(\bm{R})$
and $\lambda_\textup{max}(\bm{R}^{-1})=1/\lambda_\textup{min}(\bm{R})$. Eq. \eqref{EquAppC8} can be rewritten as
\begin{eqnarray}\label{EquAppC9}
N/\lambda_\textup{max}(\bm{R}) \leq \bm{s}^T\bm{R}^{-1}\bm{s} \leq N/\lambda_\textup{min}(\bm{R}).
\end{eqnarray}
The problem is then to bound the maximum and minimum eigenvalues of matrix $\bm{R}$.

First, let us focus on the upper bound of $\lambda_\textup{max}(\bm{R})$.
Following the definition of matrix $\bm{R}$ in \eqref{matrixR}, we can factorize $\bm{R}$ as $\bm{R}=\bm{C}\bm{C}^T$, where $\bm{C}$ is given by
\begin{eqnarray}\label{EquAppC10}
\bm{C}
\!\!\!&=&\!\!\!
\big[\bm{J}_{N-1} \bm{s}~~\bm{J}_{N-2} \bm{s}~~\cdots~~\bm{J}_{2} \bm{s}~~\bm{J}_{1} \bm{s}~~\bm{J}_{-1} \bm{s}~~\bm{J}_{-2} \bm{s}~~\cdots
~~\bm{J}_{2-N} \bm{s}~~\bm{J}_{1-N} \bm{s}\big],
\end{eqnarray}
That is, the columns of $\bm{C}$ are constructed by $\bm{J}_{n} \bm{s}$ for each integer $n\in[1-N,N-1]$ and $n\neq 0$.

In particular, we can bound the maximum singular value of $\bm{C}$ by (Lemma \ref{LemmaC4})
\begin{eqnarray}\label{EquAppC11}
\sigma_\textup{max}(\bm{C}) \leq \frac{3}{4}N^2.
\end{eqnarray}

Given \eqref{EquAppC9} and \eqref{EquAppC11}, it follows directly that
\begin{eqnarray}\label{EquAppC12}
\mathcal{M}(\bm{s}) \!=\! \bm{s}^T\bm{R}^{-1}\bm{s} \!\geq\! N/\lambda_\textup{max}(\bm{R})\!=\! N/\sigma^2_\textup{max}(\bm{C})\geq \frac{16}{9N^3}.
\end{eqnarray}
This gives us a lower bound for $\mathcal{M}(\bm{s})$.

Next, we bound the minimum eigenvalues of $\bm{R}$.

The matrix $\bm{C}$ in \eqref{EquAppC10} can further be partitioned as $\bm{C}=\left[\bm{C_1}~\bm{C_2}\right]$,
where $\bm{C_1}$ is composed of $N$ columns of $\bm{C}$ and $\bm{C_2}$ contains the rest of the columns.
In this partition, we require that the $N$ columns in $\bm{C_1}$ are mutually independent. This requirement can be easily met, for example,
we could choose $\bm{C_1}$ to be
\begin{eqnarray}\label{EquAppC13}
\bm{C_1}\!\!\!&=&\!\!\!
\big[\bm{J}_{N-1} \bm{s}~~\bm{J}_{N-2} \bm{s}~~\cdots~~\bm{J}_{N-i} \bm{s}~~\bm{J}_{-i} \bm{s}~~\bm{J}_{-i-1} \bm{s} ~~\cdots
~~\bm{J}_{2-N} \bm{s}~~\bm{J}_{1-N} \bm{s}\big],
\end{eqnarray}
where $i$ can be $1,2,...,N-1$.

Given $\bm{C}=\left[\bm{C_1}~\bm{C_2}\right]$, we have $\bm{R}=\bm{C}\bm{C}^T=\bm{C_1}\bm{C_1}^T+\bm{C_2}\bm{C_2}^T$.

Lemma \ref{LemmaC3} implies the following fact:
\begin{eqnarray}\label{EquAppC14}
\lambda_\textup{min}(\bm{R}) \!\geq\! \lambda_\textup{min}(\bm{C_1}\bm{C_1}^T)\!+\!\lambda_\textup{min}(\bm{C_2}\bm{C_2}^T) \!=\! \lambda_\textup{min}(\bm{C_1}\bm{C_1}^T),
\end{eqnarray}
where the equality follows because $\bm{C_2}\bm{C_2}^T$ is singular.

Then, the problem is to bound the minimum eigenvalue of $\bm{C_1}\bm{C_1}^T$, or equivalently, the minimum singular value of $\bm{C_1}$.

By Lemma \ref{LemmaC5}, the minimum singular value of $\bm{C_1}$ can be bounded by
\begin{eqnarray}\label{EquAppC15}
\sigma_\textup{min}(\bm{C_1})\geq \frac{|\textup{det}(\bm{C}_1)|}{2^{N/2-1}\left \| \bm{C}_1 \right \|_\textup{F}}.\nonumber
\end{eqnarray}

The construction in \eqref{EquAppC13} suggests that the eigenvalues of $\bm{C_1}$ can only be $1$ or $-1$. In other words, $|\textup{det}(\bm{C_1})|=1$ and
\begin{eqnarray}\label{EquAppC16}
\sigma_\textup{min}(\bm{C_1})\geq \frac{|\textup{det}(\bm{C}_1)|}{2^{N/2-1}\left \| \bm{C}_1 \right \|_\textup{F}}=\frac{2^{2-N/2}}{N}.
\end{eqnarray}

Substituting \eqref{EquAppC16} into \eqref{EquAppC14}, yielding
\begin{eqnarray}\label{EquAppC17}
\lambda_\textup{min}(\bm{R})\geq\lambda_\textup{min}(\bm{C_1}\bm{C_1}^T)=\sigma^2_\textup{min}(\bm{C_1})\geq\frac{2^{4-N}}{N^2}.\nonumber
\end{eqnarray}

By \eqref{EquAppC9}, it follows directly that
\begin{eqnarray}\label{EquAppC18}
\mathcal{M}(\bm{s}) = \bm{s}^T\bm{R}^{-1}\bm{s} \leq N^3 2^{N-4}.
\end{eqnarray}
\end{proof}
To verify the tightness of the derived bounds, we then exhaustive search to find $\inf \mathcal{M}(\bm{s})$ and $\sup\mathcal{M}(\bm{s})$ when $N\leq 30$.
Simulation results indicate that the derived lower bound is close to $\inf \mathcal{M}(\bm{s})$, while the derived upper bound is far from tight. From the simulation, it seems that $\sup\mathcal{M}(\bm{s})$ is bounded by a constant value, rather than increasing as $N$ increases.
In the following, we conjecture a tight upper bound for $\mathcal{M}(\bm{s})$.

\begin{con}
For any phase code $\bm{s}$, a tight upper bound for $\mathcal{M}(\bm{s})$ is given by $\mathcal{M}(\bm{s})\leq 37$. The equality is achieved by a Barker sequence of length $13$, that is,
\begin{eqnarray}
\bm{s}_\textup{B}=[1,1,1,1,1,-1,-1,1,1,-1,1,-1,1]^T.
\end{eqnarray}

\end{con}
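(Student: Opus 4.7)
The plan is to combine exhaustive verification for small $N$ with a spectral analysis for the asymptotic regime, acknowledging that a complete proof is likely out of reach without progress on neighboring open problems. First, I would recast the metric in Fourier form. Since $\bm{x}^T \bm{J}_n \bm{s}$ is the aperiodic cross-correlation of $\bm{x}$ and $\bm{s}$ at lag $n$, Parseval's identity applied to the generating polynomials $X(\theta) = \sum_i x_i e^{-j \theta i}$ and $S(\theta) = \sum_i s_i e^{-j\theta i}$ gives
\begin{equation}
\sum_{n=-(N-1)}^{N-1}(\bm{x}^T \bm{J}_n \bm{s})^2 \;=\; \frac{1}{2\pi}\int_{0}^{2\pi}|X(\theta)|^2 |S(\theta)|^2\, d\theta.
\end{equation}
Combined with the variational expression in \eqref{MMF}, this yields
\begin{equation}
\mathcal{M}(\bm{s})\;=\; \Bigl(\min_{\bm{x}:\,\bm{x}^T\bm{s}=1}\;\frac{1}{2\pi}\!\!\int_{0}^{2\pi} |X(\theta)|^2|S(\theta)|^2\, d\theta \;-\; 1\Bigr)^{\!-1}.
\end{equation}
The conjecture is therefore equivalent to showing that the inner minimum is at least $38/37$ for every binary $\bm{s}$ of every length, with equality attained by the length-$13$ Barker sequence $\bm{s}_\textup{B}$.

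Second, I would verify the bound computationally for small $N$ by exhaustive enumeration over binary $\bm{s}$ modulo the reversal, complementation and sign-reversal symmetries. The paper already reports such a sweep for $N \leq 30$; extending it a few more lengths and cataloging the near-extremal sequences both reinforces the numerical evidence and identifies candidate ``hard'' sequences whose structure one can inspect analytically. This anchors the base case and makes precise the gap that remains at large $N$.

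Third, for the asymptotic regime I would pursue an $N$-independent spectral bound via two complementary handles. Handle (i): choose an explicit trial $\bm{x}$ (for example a rescaled $\bm{s}$, or a truncated Wiener-like $\bm{x} \propto \bm{s}/|S|^2$) to upper-bound $\mathcal{M}(\bm{s})$ directly by the spectral integral, invoking known control on $\frac{1}{2\pi}\int_0^{2\pi} |S(\theta)|^{-2}\, d\theta$ for binary sequences. Handle (ii): exploit the fact that for binary $\bm{s}$ the off-zero aperiodic autocorrelations $r_s[n]$ are integer-valued with $|r_s[n]|\leq N-|n|$, and combine this with Lindsey-type merit-factor lower bounds on $\sum_{n\neq 0} r_s[n]^2$ to force $\lambda_{\min}(\bm{R})$ to grow with $N$ quickly enough that $\mathcal{M}(\bm{s}) = \bm{s}^T \bm{R}^{-1}\bm{s}$ actually decreases with $N$ beyond some threshold. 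Either handle would reduce the remaining task to a finite check already covered in the second step.

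The hard part will be the asymptotic step. The maximum of $\mathcal{M}(\bm{s})$ is achieved precisely at the longest known Barker sequence, and a proof that no larger value occurs for any $N$ appears tightly linked to the Barker conjecture itself, which remains open. Consequently I expect a full unconditional proof to be elusive, and my plan would realistically culminate in a \emph{conditional} theorem: the bound $\mathcal{M}(\bm{s}) \leq 37$ holds for all $N$ under a suitable quantitative form of the Barker conjecture, with tightness certified by direct evaluation on $\bm{s}_\textup{B}$. Removing this conditional hypothesis would likely require a sharper structural identity for $\bm{R}^{-1}$ that directly exploits the $\pm 1$ constraint on $\bm{s}$, rather than the generic singular-value estimates used to derive the (much looser) upper bound $N^3 2^{N-4}$ earlier in the appendix.
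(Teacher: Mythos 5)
You should first note that the paper does not prove this statement at all: it is explicitly labeled a conjecture, and the paper's entire support for it consists of (i) an exhaustive search over binary $\bm{s}$ for $N\leq 30$ showing that $\sup\mathcal{M}(\bm{s})$ appears to be bounded by a constant rather than growing with $N$, and (ii) the heuristic observation that the length-$13$ Barker sequence has $|\bm{s}_\textup{B}^T\bm{J}_n\bm{s}_\textup{B}|=1$ for all $n\neq 0$, which motivates it as the extremal candidate. Against that baseline, your Parseval reformulation is correct and is genuinely more substantive than anything in the paper: since $\bm{x}^T\bm{J}_n\bm{s}$ is the aperiodic cross-correlation at lag $n$, the identity $\sum_{n}(\bm{x}^T\bm{J}_n\bm{s})^2=\frac{1}{2\pi}\int_0^{2\pi}|X(\theta)|^2|S(\theta)|^2\,d\theta$ holds (the sum including $n=0$), and subtracting $(\bm{x}^T\bm{s})^2=1$ under the normalization $\bm{x}^T\bm{s}=1$ does yield the stated variational characterization, so the conjecture is indeed equivalent to the inner minimum being at least $38/37$ for all binary $\bm{s}$. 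Your concluding honesty — that the result would at best be conditional — accurately reflects why the authors left this as a conjecture.

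There is, however, a concrete directional error in your handle (ii). Lindsey/merit-factor-type lower bounds on $\sum_{n\neq 0}r_s[n]^2=\bm{s}^T\bm{R}\bm{s}$ control the quadratic form of $\bm{R}$ \emph{along $\bm{s}$}, hence they bound $\lambda_{\max}(\bm{R})$ from below; they say nothing about $\lambda_{\min}(\bm{R})$. Indeed, by Cauchy--Schwarz, $\bm{s}^T\bm{R}^{-1}\bm{s}\geq(\bm{s}^T\bm{s})^2/(\bm{s}^T\bm{R}\bm{s})=\gamma_\textup{MF}$, so large autocorrelation energy naturally produces \emph{lower} bounds on $\mathcal{M}(\bm{s})$, not the upper bound you need. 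To force $\mathcal{M}(\bm{s})=\bm{s}^T\bm{R}^{-1}\bm{s}$ small you must control the component of $\bm{s}$ lying in the small-eigenvalue subspace of $\bm{R}$, which in your Fourier picture corresponds to near-vanishing of $|S(\theta)|^2$ on the unit circle — and binary polynomials can have roots on or arbitrarily near the unit circle, so no uniform lower bound $\lambda_{\min}(\bm{R})\gtrsim N$ is available. Handle (i) and the finite enumeration are sound as far as they go (and the only part of the statement that is actually verifiable, namely $\mathcal{M}(\bm{s}_\textup{B})=37$ for the length-$13$ Barker sequence, is a direct computation), but as written your plan does not close the asymptotic step, and neither does the paper.
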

This conjectured bound is motivated from the study of the merit factor problem in the recent decades \cite{HoholdtMF}.
Until now, Baker sequence of length $13$ is the best known sequence that yields the maximal merit factor \cite{Whatcanbeused}.
For Barker sequence, a favorable property is that $|\bm{s}_\textup{B}^T\bm{J_0}\bm{s}_\textup{B}|=N$, while $|\bm{s}_\textup{B}^T\bm{J_n}\bm{s}_\textup{B}|=1$ for $n\neq 0$.
This is why we conjecture that a Barker sequence\footnote{Another famous conjecture is that there is no Barker sequence of length $N>13$. Although not being proven, it had been shown that there is no Barker sequence of length $13<N<10^{22}$ \cite{Whatcanbeused}.} can yield the maximal $\mathcal{M}(\bm{s})$.

\section{}\label{sec:AppD}
Given our computation resources (a CPU: Intel Core i7-6700), and a GPU: NVIDIA GeForce GTX 1080 Ti), this appendix studies the best found sequence versus time consumption in the learning process of AlphaSeq.
We let AlphaSeq discover a sequence of length $N=28$ for pulse compression radar (the length $N=28$ is chosen so that the global optimal is accessible by exhaustive search. We could then measure how long it takes AlphaSeq to reach the global optimal). As shown in Section \ref{sec:PCR}, the objective is to discover a sequence $\bm{s}$ that can maximize the metric $\mathcal{M}(\bm{s})=\bm{s}^T\bm{R}^{-1}\bm{s}$ (i.e., equation \eqref{PCRmetric}).

\noindent\textbf{Benchmark:} when $N=28$, we first exhaustively discovered the optimal sequence. After $14.38$ hours of searching, the global optimal sequence was found to be
\begin{equation}
\bm{s}_\textup{opt}=
\begin{bmatrix}
-1 & +1 & -1 & +1 & -1 & +1 & -1  \\[0.002mm]
+1 & -1 & -1 & +1 & -1 & -1 & +1  \\[0.002mm]
+1 & +1 & +1 & -1 & -1 & -1 & -1  \\[0.002mm]
-1 & -1 & -1 & -1 & -1 & -1 & -1
\end{bmatrix}.\nonumber
\end{equation}
It yields optimal metric $\mathcal{M}^*(\bm{s}_\textup{opt})\approx 30.02$.

\noindent\textbf{AlphaSeq:} The parameter settings for AlphaSeq are listed in Table \ref{TableR1}.
\begin{table}[t]
\setlength{\abovecaptionskip}{3pt}
\centering
\caption{}
\label{TableR1}
\includegraphics[scale=1]{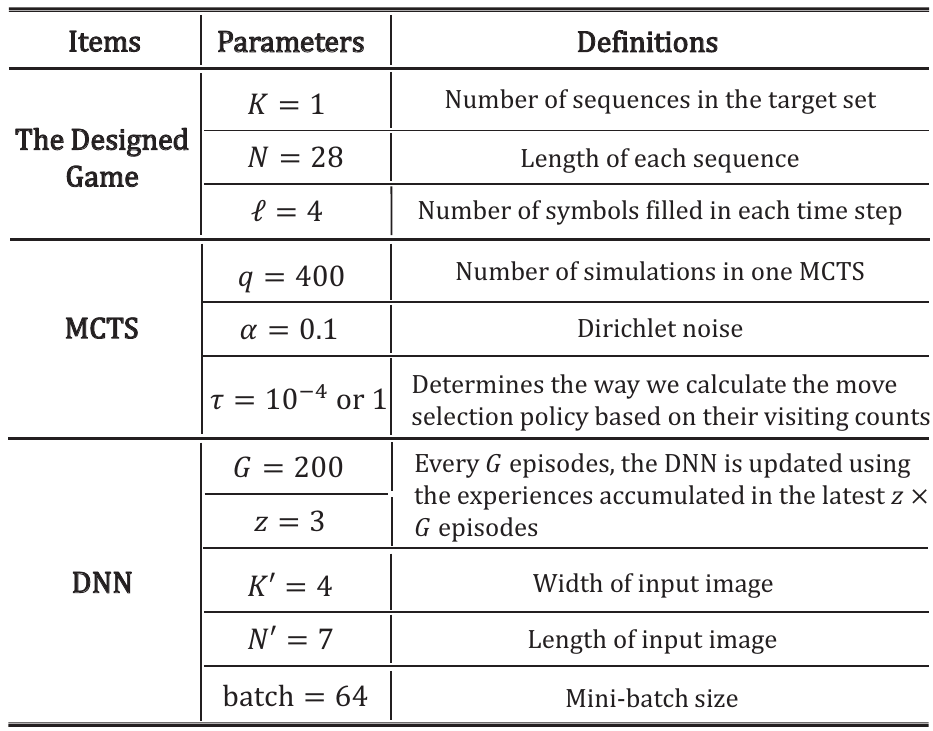}
\end{table}
We aimed to discover one sequence of length $28$ wherein $K=1$ and $N=28$ in the AlphaSeq game. The number of symbols filled in each time step was set to $\ell=4$, and the DNN taked $4\times 7$ images as input. A complete sequence can be obtained after $7$ time steps ($28$ symbols). Then, we calculated the metric function and reward function following equations (18) and (19) in our paper. The search range of $\mathcal{M}(\bm{s})$ was set to $\mathcal{M}_l=0$ and $\mathcal{M}_u=37$.

Every $G = 200$ episodes, we updated the DNN using the experiences accumulated in the latest $600$ episodes. Specifically, the number of experiences is $600\times\lceil NK/\ell \rceil$, and we randomly sampled $\lceil 600\times\lceil NK/\ell \rceil/64\rceil\times 6$ mini-batches ($64$ is the mini-batch size) with replacement to train the DNN by stochastic gradient descent.

\noindent\textbf{Performances:} Over the course of training, we ran AlphaSeq over $5000$ episodes, generating $3.5\times 10^4$ experiences in total. Every $G=200$ episodes when the DNN is updated, we recorded the time consumed by then, and evaluated the searching capability of AlphaSeq by using it to play $50$ noiseless games. The mean metric $E[\mathcal{M}]$ and maximal metric $\max[\mathcal{M}]$ of these $50$ noiseless games were also recorded.
Fig. \ref{FigR1} presented the $E[\mathcal{M}]$ and $\max[\mathcal{M}]$ versus time consumption over the reinforcement learning (RL) process.
\begin{figure}[t]
  \centering
  \includegraphics[width=0.9\columnwidth]{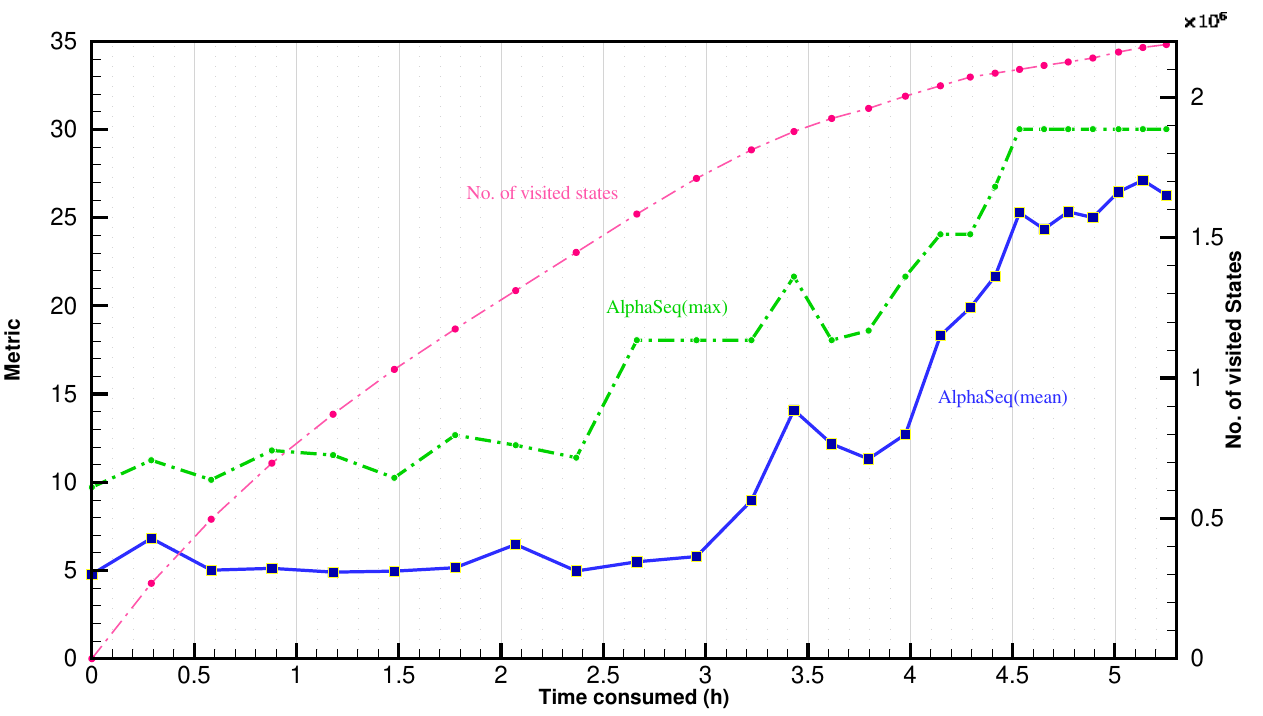}\\
  \caption{The RL process of AlphaSeq to discover a sequence for pulse compression radar. Mean metric $E[\mathcal{M}]$, maximum metric $\max[\mathcal{M}]$ and total number of visited states versus time consumption.}
\label{FigR1}
\end{figure}

For each curve in Fig. \ref{FigR1}, the spacing between two adjacent points is $200$ episodes. The first $3.2$ hours (i.e., from $0$ to $2200$ episodes) are exploration-dominant phase and the rest are the exploitation-dominant phase. As can be seen, in the exploration-dominant phase, every $200$ episodes takes around the same amount of time; while in the exploitation-dominant phase, the time consumed by every $200$ episodes decreases as time goes on. This is because the most time-consuming part in AlphaSeq is MCTS. Throughout the exploration-dominant phase, MCTS actively explores new states in its choice of actions, hence the time consumed by each $200$ episodes are almost the same. On the other hand, in the exploitation-dominant phase, the policy of DNN converges, and AlphaSeq tends to focus more on exploitation. Thus, the number of states visited in every $200$ episodes decreases as time goes on, and the time consumption also decreases.

After about $4.5$ hours, AlphaSeq discovered the optimal sequence with $\mathcal{M}(\bm{s}_\textup{Alpha})\approx 30.02$.
\begin{equation}
\bm{s}_\textup{Alpha}=
\begin{bmatrix}
-1 & +1 & -1 & +1 & -1 & +1 & -1  \\[0.002mm]
+1 & -1 & -1 & +1 & -1 & -1 & +1  \\[0.002mm]
+1 & +1 & +1 & -1 & -1 & -1 & -1  \\[0.002mm]
-1 & -1 & -1 & -1 & -1 & -1 & -1
\end{bmatrix}.\nonumber
\end{equation}

\section{}\label{sec:AppE}
In this paper, we formulated the sequence discover problem as a Markov decision process (MDP), and put forth a deep reinforcement learning (DRL)-based paradigm, AlphaSeq, to solve the MDP.

Note that a major difference between the sequence discovery problem and a typical RL problem is that the dynamics of the environment are fully known to the agent in the sequence discovery problem. For example, when taking action $a_t$ in state $s_i$, where the environment will be steered to is known to the agent in sequence discovery. This is different from typical RL problems like Shannon's mouse~\cite{shannon}, or playing Atari games~\cite{Atari2015}, where the agent has no prior knowledge of the environment.

Since the dynamics of the environment are known to the agent, it is in fact possible for the agent to perform look-ahead simulations when in a state, and use the information collected in the simulations to improve its real action in this state.

Of all the state-of-the-art DRL methods, the reason we follow the DRL framework of AlphaGo is ¡°Monte-Carlo Tree Search (MCTS)¡±. Instructed by current DNN, the MCTS in AlphaGo expands the search tree and finds the most promising action from its cognition of subsequent states. This is an advantage that no other DRL algorithms have.

As requested by the reviewer, we develop another DRL approach, named DQLSeq, to solve the MDP associated with the sequence discovery problem. As the name suggests, DQLSeq follows another state-of-the-art DRL algorithm, deep Q learning (DQL).

\subsection{The Framework of DQLSeq}
\begin{figure}[t]
	\centering
	\includegraphics[width=0.6\columnwidth]{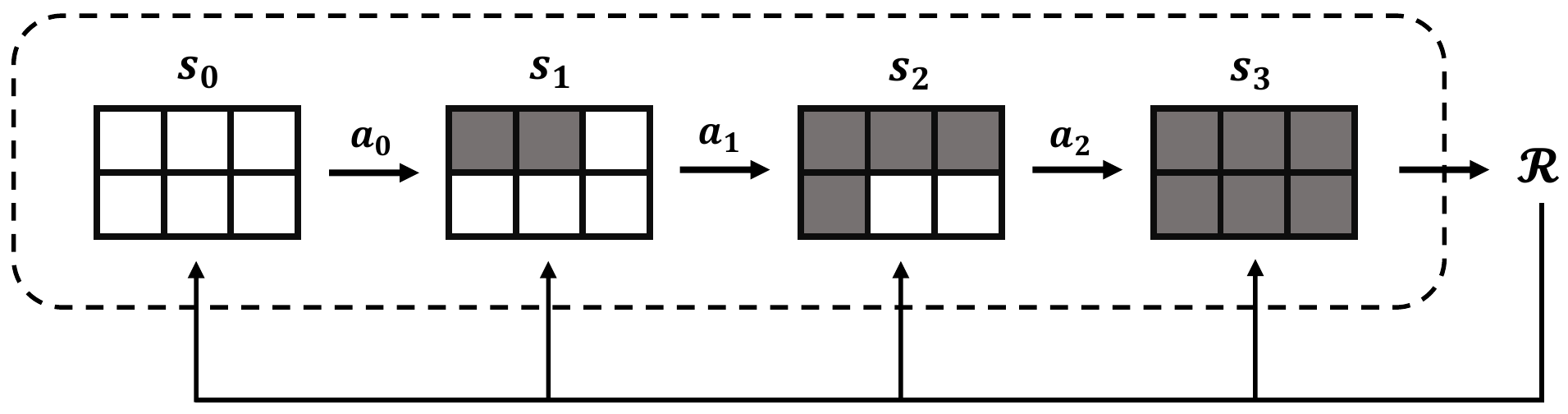}
	\caption{An episode of game, where $K=2$, $N=3$, and $\ell=2$. The $NK$ positions are represented by the colored squares: grey means that the positions are filled while white means that the positions are vacant.}
	\label{fig:1}
\end{figure}

As a refresher, we treat sequence-set discovery as a symbol filling game. One play of the game is one episode, and each episode contains a series of time steps. As illustrated in Fig.~\ref{fig:1}, the agent starts from an all-zero state in each episode, and takes one action per time step: $\ell$ symbols in the sequence set are assigned with the value of $1$ or $-1$, replacing the original $0$ value. An episode ends at a terminal state after $\left \lceil  NK/\ell\right \rceil$ time steps, whereupon a complete set $\mathcal{C}$ is obtained. We then measure the goodness of $\mathcal{C}$ by a reward $\mathcal{R}$ for this episode to the agent. The agent's objective is to learn a policy that makes sequential decisions to maximize the reward, as more and more games are played.

A key component in DQL is the deep Q network (DQN). The DQN is parameterized by $\lambda$, and outputs the Q-value estimations $Q(s,a;\lambda)$ for all state-action pair $(s,a)$ when a state $s$ is provided as its input, i.e., $Q(s,a;\lambda)= \Psi(s;\lambda)$.

In a state $s_i$, the action to be executed is determined by the epsilon-greedy algorithm~\cite{RLbook}. The detailed DQLSeq is explained in Algorithm~\ref{algo:1}.

{\centering
\begin{minipage}{15cm}
\begin{algorithm}[H]
\SetAlCapFnt{\small}
\SetAlCapNameFnt{\small}
\small
\setstretch{0.9}
    \underline{\textbf{Initialization:}}\\
    \hspace{0.2in} Initialize a FIFO of size $M$ for experience replay. \\
    \hspace{0.2in} Initialize a deep Q network $Q(s, a; \lambda)=\Psi(s;\lambda)$ randomly with parameters $\lambda$.\\
    \hspace{0.2in} Set mini-batch size to $K$, evaluation cycle to $B$, episode index $d = 1$. \\
    \While{$1$}{
        \underline{\textbf{Experience collection:}}\\
        time step $i=0$ \\
        \While{$i<\lceil NK/\ell\rceil$}{
        Feed state $s_i$ into the DQN for the Q-value estimations $Q(s_i, a; \lambda)$, $\forall~a$ in state $s_i$. \\
        Select action $a_i$ by the epsilon-greedy algorithm
        \begin{equation}
        a_i=\left\{
                \begin{aligned}
                \arg\max_a Q(s_i, a; \lambda) && \textup{w.p.}~ 1-\varepsilon \\
                \textup{random action} &&  \textup{w.p.}~\varepsilon
                \end{aligned}
                \right.
        \end{equation}
        Execute $a_i$, and observe new environment state $s_{i+1}$. \\
        $i=i+1$
        }
        $d=d+1$. \\
        Compute reward $\mathcal{R}$ for the terminal state of this episode. \\
        Store experiences $\{s_i, a_i, \mathcal{R}\}$ for each intermediate state of this episode into FIFO. \\
        \underline{\textbf{DQN training:}}\\
        \If{ $d > M/2$}{
            Randomly sample a mini-batch of $K$ experiences from the FIFO. \\
            For thie mini-batch $\{s_k,a_k, \mathcal{R}_k\}$, $k\in\{i_1,i_2,\cdots, i_K\}$, update parameters $\lambda$ in the direction that minimizes loss
            \begin{equation}
                \mathcal{L}=\frac{1}{K} \sum_{k\in\{i_1,\dots, i_K\}} \left[\mathcal{R}_k-Q(s_k, a_k; \lambda)\right]^2.
            \end{equation}
        }
        \underline{\textbf{Performance Monitoring:}}\\
        \If{$\text{mod}(d, B) == 0$}{
                Temporarily set $\varepsilon=0$ and run one episode of game. \\
                Compute and record the performance of the discovered sequence to evaluate the current DQN.
            }
    }
    \caption{DQLSeq.}
    \label{algo:1}
\end{algorithm}
\end{minipage}
\par
}

\subsection{Performance}
To compare the performance of AlphaSeq and DQLSeq, we repeat the experiment in Section IV using DQLSeq. That is, we will make use of DQLSeq to discover phase-coded sequences for pulse compression radar systems, and compare the performance of the discovered sequence with that discovered by AlphaSeq.

\begin{figure}[t]
	\centering
	\includegraphics[width=0.7\columnwidth]{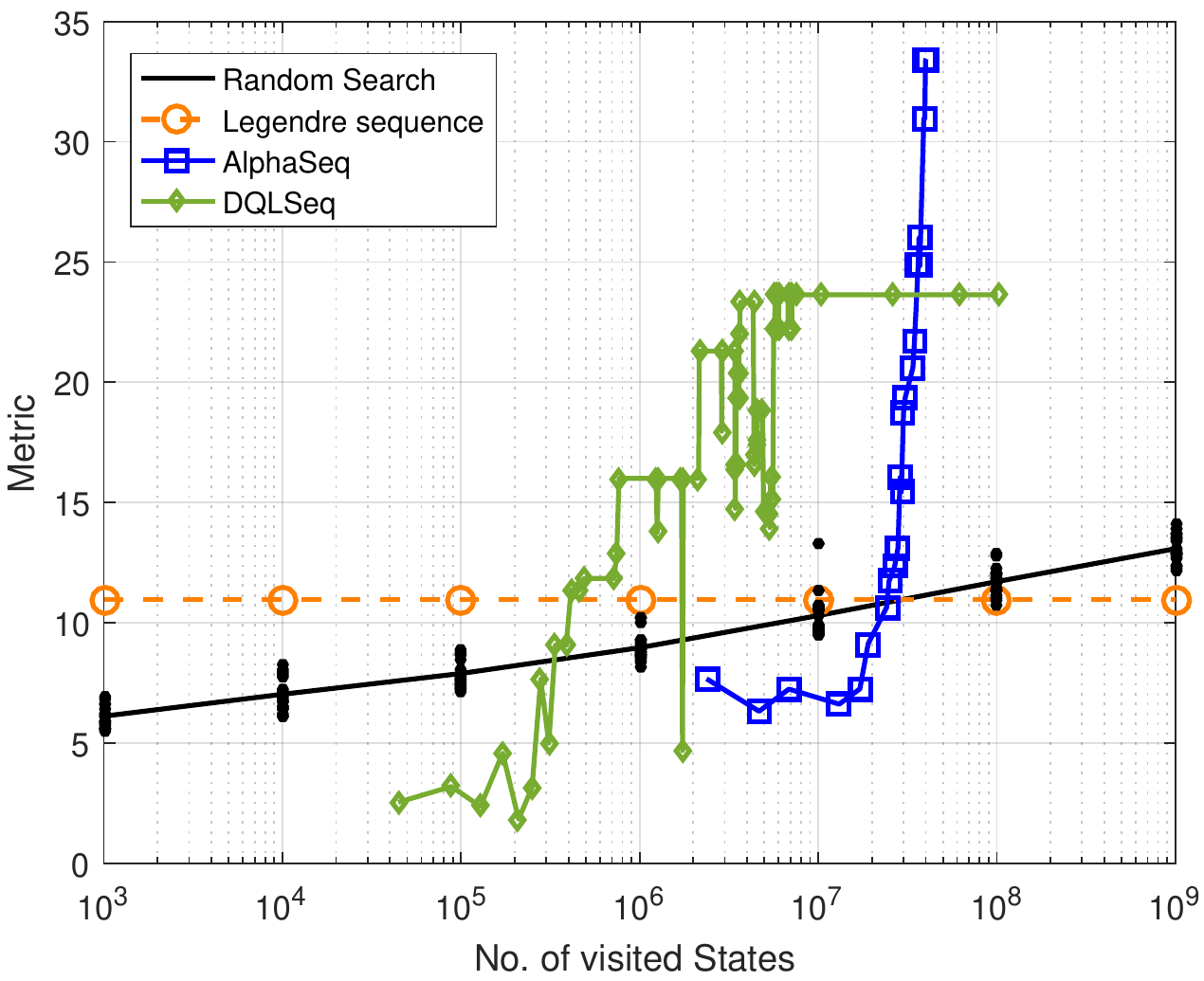}
	\caption{The searching capability comparison of AlphaSeq and DQLSeq.}
	\label{fig:2}
\end{figure}

In the implementation, every $B$ episodes, we will make use of DQLSeq to discover one sequence, and compute the metric $\mathcal{M}(\bm{s}_\textup{DQL})$ by Eq. (18) for the discovered sequence $\bm{s}_\textup{DQL}$. This metric value will be recorded along with the number of distinct visited states by then. We specify that the visited states include both intermediate states and terminal states, as AlphaSeq in Fig. 8.

Fig.~\ref{fig:2} presents the metric of the discovered sequence versus the number of distinct visited states curve for DQLSeq in the learning process. The other curves for random search, Legendre sequence, and AlphaSeq are copied from Fig. 8.

As can be seen, DQLSeq converges faster than AlphaSeq, but is prone to getting stuck in local optimum. The best sequence found by DQLSeq is given by
\begin{equation}
\bm{s}_\textup{DQL}=
\begin{bmatrix}
\begin{smallmatrix}
+1 & -1 & +1 & +1 & -1 & +1 & -1 & +1 & +1 & -1 & +1 & -1 \\[0.1cm]
+1 & +1 & -1 & +1 & -1 & +1 & +1 & -1 & +1 & +1 & +1 & -1 \\[0.1cm]
-1 & -1 & +1 & +1 & -1 & +1 & -1 & -1 & -1 & +1 & -1 & +1 \\[0.1cm]
+1 & +1 & +1 & +1 & -1 & -1 & -1 & -1 & -1 & -1 & -1 & -1 \\[0.1cm]
-1 & -1 & -1 & -1 & -1 & -1 & -1 & -1 & -1 & -1 & -1 &
\end{smallmatrix}
\end{bmatrix},\nonumber
\end{equation}
the metric of which is $\mathcal{M}(\bm{s}_\textup{DQL})=23.64$. On the other hand, the metric of the best sequence found by AlphaSeq is $\mathcal{M}(\bm{s}_\textup{DQL})=33.45$.

\bibliographystyle{IEEEtran}
\bibliography{References}

\end{document}